\documentclass[11pt]{article}

\usepackage[left=1in,right=1in,top=1in,bottom=1in]{geometry}
\usepackage{setspace} 
\linespread{1.1}
\usepackage{mathpazo}
\usepackage[pagebackref=true]{hyperref}
\usepackage{url}

\usepackage{booktabs}       
\usepackage{amsfonts}       
\usepackage[nice]{nicefrac} 
\usepackage{microtype}      

\usepackage{textcomp}
\usepackage{xcolor}


\usepackage{amsmath,amsthm,amstext,amsfonts,amssymb,mathrsfs}
\usepackage{graphicx,caption,subcaption,color,algorithm,algorithmic} 

\usepackage{wrapfig,dsfont,booktabs}

\usepackage[capitalize]{cleveref}
\usepackage{marvosym}
\usepackage{changepage}
\usepackage{parskip}

\newtheorem{lemma}{Lemma}

\newtheorem{assumption}{Assumption}

\newtheorem{proposition}{Proposition}
\newtheorem{theorem}{Theorem}
\newtheorem{definition}{Definition}
\newtheorem{corollary}{Corollary}

\newcommand{\dd}{\mathrm{d}\,}
\newcommand{\E}{\mathbb{E}}

\newcommand{\R}{\mathbb{R}}
\newcommand{\cS}{\mathcal{S}} 
\newcommand{\cA}{\mathcal{A}}
\newcommand{\ScA}{\cS\times\cA}
\newcommand{\ScS}{\cS\times\cS}
\newcommand{\cP}{\mathcal{P}}
\newcommand{\cPhat}{\widehat{\mathcal{P}}}

\newcommand{\cV}{\mathcal{V}}

\newcommand{\cD}{\mathcal{D}}
\newcommand{\cO}{\mathcal{O}}
\newcommand{\cX}{\mathcal{X}}
\newcommand{\cM}{\mathcal{M}}

\newcommand{\cE}{\mathcal{E}}

\newcommand{\expect}[1]{\mathbb{E}\left[ #1 \right]}
\newcommand{\expnew}[1]{\mathrm{exp}\left( #1 \right)}

\newcommand{\cOtilde}{\widetilde{\cO}}
\newcommand{\cMhat}{\widehat{\cM}}
\newcommand{\cardS}{\ensuremath{|\cS|}}
\newcommand{\cardA}{\ensuremath{|\cA|}}
\newcommand{\prob}[1]{\mathbb{P}\left( #1 \right)}
\newcommand{\mean}[1]{\overline{#1}}
\newcommand{\abs}[1]{\left\lvert #1 \right\rvert}

\newcommand{\nn}{\nonumber}

\newcommand{\hatpi}{\widehat{\pi}}
\newcommand{\pihat}{\widehat{\pi}}

\newcommand{\nuhat}{\widehat{\nu}}
\newcommand{\Phat}{\widehat{P}}
\newcommand{\Ptilde}{\widetilde{P}}

\newcommand{\Vhat}{\widehat{V}}

\newcommand{\tv}{\mathrm{tv}}

\newcommand{\kl}{\mathrm{kl}}
\newcommand{\chisq}{\mathrm{c}}
\newcommand{\wass}{\mathrm{w}}

\newcommand{\dtv}{D_{\mathrm{TV}}}

\newcommand{\dkl}{D_{\mathrm{KL}}}
\newcommand{\dwass}{D_{\mathrm{w}}}

\newcommand{\dchi}{D_{\mathrm{c}}}
\newcommand{\ipmV}{\mathrm{d}_{\cV}}

\DeclareMathOperator*{\argmax}{arg\,max}

\newcommand{\norm}[1]{\left\| #1 \right\|}%
\newcommand{\trace}{\mathrm{Tr}}
\newcommand{\rank}{\mathrm{rank}}
\newcommand{\drectsuffcov}{C^\dagger_{\mathrm{sc}}}
\newcommand{\jinsuffcov}{C_{\mathrm{sc}}}
\newcommand{\relativecondition}{C_{\pi^*,\phi}}
\newcommand{\drectrelativecondition}{C^{\dagger}_{\pi^*,\phi}}

\allowdisplaybreaks
\usepackage[round]{natbib}


\newcommand{\indic}{\mathds{1}}
\DeclareMathOperator*{\Binomial}{Binomial}

\title{\singlespace{ {\Large Bridging Distributionally Robust Learning and Offline RL: \\An Approach to Mitigate Distribution Shift and Partial Data Coverage}}}

\author{Kishan Panaganti$^{1}$\thanks{ This work was done when the corresponding author, Kishan, was a PhD candidate at Texas A\&M University. } ,\, Zaiyan Xu$^{2}$,\, Dileep Kalathil$^{2}$,\, Mohammad Ghavamzadeh$^{3}$ \\
  $^{1}$\,California Institute of Technology, $^{2}$\,Texas A\&M University,  $^{3}$\,Amazon \\Emails:  \texttt{kpb@caltech.edu,\{zxu43,\,dileep.kalathil\}@tamu.edu}, \texttt{ghavamza@amazon.com}
}

\begin{document}

\maketitle

\begin{abstract}
The goal of an offline reinforcement learning (RL)  algorithm is to learn optimal  polices   using  historical (offline) data, without   access to the environment for online exploration. One of the main challenges in offline RL is the distribution shift which refers to  the difference between the state-action visitation distribution of the data generating policy and the learning policy.  Many recent works have used the idea of pessimism for developing offline  RL algorithms and characterizing  their sample complexity  under a relatively weak assumption of single policy concentrability. Different from the offline RL literature, the area of distributionally robust learning (DRL) offers a principled framework that uses a minimax formulation to tackle model mismatch between training and testing environments. In this work, we aim to bridge these two areas by showing that the DRL approach can be used to tackle the distributional shift problem in offline RL. In particular, we propose two offline RL algorithms using the DRL framework,  for the tabular and linear function approximation settings, and characterize their sample complexity under the single policy concentrability assumption. We also demonstrate the superior performance our proposed algorithm through simulation experiments. 
\end{abstract}

\section{Introduction}
\label{sec:introduction-mpqi}

The  goal of an offline RL algorithm is to learn an approximately optimal  policy using minimal amount of  offline data collected according to a behavior policy \citep{lange2012batch, levine2020offline}.  The lack of online  exploration makes the offline RL problem particularly challenging due to \emph{distribution shift} and \emph{partial data coverage}.  Distribution shift refers to the difference between the state-action visitation distribution of the behavior policy and that of the learned policy.  Partial data coverage refers to the fact that the data generated according to the behavior policy may only contain samples from parts of the state-action spaces. While these two issues are not the same, in effect, they both cause the problem of out-of-distribution (OOD) data \citep{yang2021generalized, robey2020model}, i.e., distributions of training and testing data being different.

In the past few years, many works have developed deep offline RL algorithms mitigating distribution shift and partial data coverage, but have been mainly focused on the algorithmic and empirical aspects \citep{fujimoto2019off, kumar2019stabilizing, kumar2020conservative,  fujimoto2021minimalist, kostrikov2021offline}. Most of the early theoretical works on offline RL however analyzed the performance of their algorithms by making the strong assumption of \emph{uniformly bounded concentrability} which requires that the ratio of the state-action occupancy distribution induced by \emph{any} policy and the data generating distribution being bounded uniformly over all states and actions \citep{munos2007performance, antos2008learning, munos08a, farahmand2010error, chen2019information, liao2022batch}. The more recent theoretical results have used the principle of pessimism or conservatism \citep{yu2020mopo, buckman2021the, jin2021pessimism} and addressed some of the issues in offline RL, including replacing uniform concentrability with the more relaxed \emph{single policy concentrability} assumption \citep{uehara2021pessimistic, rashidinejad2022bridging, li2022settling}.  

\subsection{Motivation: Why Distributionally Robust Learning for Offline RL?}

Classical supervised learning is based on empirical risk minimization (ERM), which assumes that the train and test data are drawn from the same distribution \citep{shalev2014understanding}. However, this assumption is hardly satisfied in many real-world applications \citep{quinonero2022dataset}, and the performance of supervised learning algorithms degrade significantly in the out-of-distribution setting \citep{taori2020measuring, koh2021wilds}. A large body of work has been recently developed that uses the distributionally robust learning (DRL) framework to address the issue of distribution shift in various settings \citep{duchi2018learning, kuhn2019wasserstein, chen2020distributionally}. The DRL framework considers an uncertainty  set of data distributions around a nominal distribution (typically the training data distribution), and solves a minimax optimization problem to find a function that minimizes the expected loss, where the expectation is taken w.r.t.~the distribution in the uncertainty set that maximizes the loss. 
DRL is a principled framework that provides generalization guarantees, accommodates ways of constructing domain specific uncertainty sets (e.g.,~using $f$-divergence and Wasserstein distance), and offers practical and scalable algorithms \citep{chen2020distributionally, levy2020large, esfahani2015data}.

The issue of out-of-distribution data arises in real-world RL applications because of the mismatch between the train and test environments (MDP models). This issue is also known as simulation-to-reality (sim-to-real) gap \citep{tobin2017domain}. RL algorithms are typically trained using a simulator (online RL) or a pre-collected offline dataset (offline RL). However,  modeling errors and changes in the real-world system parameters are inevitable in RL applications, and standard  RL policies can fail dramatically even when they face a mild mismatch between the train and test environments \citep{tobin2017domain,peng2018sim}. Many works have used the heuristic  of domain randomization \citep{weng2019DR} to make the learned RL policy robust against sim-to-real gap. More recently, a number of works have proposed to use the DRL framework in RL, building on the formalism of robust Markov decision processes (RMDPs) \citep{iyengar2005robust,nilim2005robust} and adapting ideas from the supervised learning counterpart. Here are instances of value-based \citep{tamar2014scaling,roy2017reinforcement,panaganti2020robust,panaganti-rfqi,panaganti22a,xu-panaganti-2023samplecomplexity, wang2021online,ma2022distributionally} and policy-based 
\citep{wang2022policy,kumar2023policy,li2022first,wang2022convergence,grand2021scalable} distributionally robust RL (DRRL) algorithms with provable performance guarantees. However, these  works do not consider the offline RL setting in which the out-of-distribution issues are due to the distribution shift and partial data coverage.

Offline RL closely resembles supervised learning because its goal is to learn a policy from an offline dataset, as opposed to the conventional RL goal of learning through online exploration. As a result, it faces similar out-of-distribution issues as in supervised learning. As mentioned above, DRL has shown to be an attractive framework to address the out-of-distribution issues arising in supervised learning problems, offering practical algorithms with provable performance guarantees. These observations motivate us to ask the following questions:
\begin{quote} 
    \textit{Can we address the distributional shift issues in offline RL using distributionally robust learning as a principled approach? What kind of theoretical performance guarantees can we provide and under what kind of assumptions?} 
\end{quote}
 In this work, we answer these questions affirmatively. In particular, we propose  offline RL algorithms using the framework of DRL for the tabular and linear MDP settings, and characterize their sample complexity. Moreover, we show that our  approach enables the relaxation of the strong assumption of  uniform concentrability to  single policy concentrability.

\begin{table*}[t]
\small \begin{adjustwidth}{-1em}{}
\begin{center}
    \begin{tabular}{cccc}
\hline
Algorithm                 & Algorithm-type          & Data coverage assumption & Suboptimality                                           \\ \hline
Lower bound &&& \\ \citep[Th.7]{rashidinejad2022bridging}                & -                       & single-policy    & $\cOtilde \left(\sqrt{\frac{\cardS (C_{\pi^*}-1)}{(1-\gamma)^3 N}}\right)$    \\ \hline
\citep[Th.6]{rashidinejad2022bridging}           & reward pessimism                 & single-policy & $\cOtilde \left(\sqrt{\frac{\cardS C_{\pi^*}}{(1-\gamma)^5 N}}\right)$   
\\ \hline
\citep[Th.1]{li2022settling}        & reward   pessimism                     & single-policy,  clipped    & $\cOtilde \left(\sqrt{\frac{\cardS C_{\pi^*},\text{clip}}{(1-\gamma)^3 N}}\right)$  
\\ \hline 
\citep[Cor.1]{uehara2021pessimistic}           & oracle model   pessimism         & single-policy  & $\cOtilde \left(\sqrt{\frac{\cardS^2 \cardA C_{\pi^*}}{(1-\gamma)^4 N}}\right)$
\\ \hline
DRQI   (this work, Th.1)            &  distributionally robust                      & single-policy   & $\cOtilde \left(\sqrt{\frac{\cardS^2 C_{\pi^*}}{(1-\gamma)^4 N}}\right)$
\\ \hline
\end{tabular}
\end{center}
\caption{Comparison of the offline RL algorithms in the tabular setting. The data coverage assumption is based on the single-policy concentrability $C_{\pi^*} = \max_{s,a} ({ d^{\pi^*}(s,a)}/{\mu(s,a)})$ and its clipped version $C_{\pi^*,\text{clip}} = \max_{s,a}( {\min\{ d^{\pi^*}(s,a),1/\cardS \}}/{\mu(s,a)})$, where $d^{\pi^*}$ is the discounted occupancy measure of the optimal policy $\pi^{*}$ and $\mu$ is the state-action visitation distribution of the  data generating policy. The suboptimality column is the statistical bounds for the offline RL objective (\cref{eq:offline-rl-objective}), where $\cardS$ and $\cardA$ are the number of states and actions, $\gamma$ is the discount factor, and $N$ is the size of the offline data.} 
\label{tbl:tabular-offline-rl-results-compare}
\end{adjustwidth}
\end{table*}

\subsection{Comparisons and Contributions}

We outline our contributions and compare our theoretical results with several recent works that, similar to us, only use the single concentrability assumption. 

\citet{uehara2021pessimistic} propose a pessimistic model-based offline RL algorithm, which we refer to as \textit{oracle model pessimism} in \cref{tbl:tabular-offline-rl-results-compare} and \cref{tbl:linear-mdp-offline-rl-results-compare}. While their proposed algorithm is similar to the max-min formulation of DRL, they do not offer a computationally tractable implementation for it. It is known in the RMDP literature \citep{iyengar2005robust,nilim2005robust,wiesemann2013robust} that solving the max-min objective (\cref{eq:robust-value-function}) can be NP-hard without additional structural assumptions, such as \textit{rectangularity}. \citet{rashidinejad2022bridging} propose a lower confidence bound  algorithm based on the idea of pessimism in the face of uncertainty. The algorithm subtracts a pessimistic term from the reward estimate, and hence we call it \textit{reward pessimism} in \cref{tbl:tabular-offline-rl-results-compare}. They also provide a lower-bound on the sample complexity of offline RL algorithms. \citet{li2022settling} also propose a reward pessimism-based offline RL algorithm. They use a more sophisticated analysis and obtain a sample complexity guarantee that matches the lower-bound. They are also able to use an improved clipped concentrability coefficient which is less than the single policy concentrability used in other works. We note that  \cite{rashidinejad2022bridging} and \cite{li2022settling} only study the tabular setting. In the linear function approximation setting, the state-of-the-art algorithms are based on \textit{reward pessimism} and their sample complexity guarantees depend on the linear feature dimension, as opposed to state and action space dimensions in the tabular setting \citep{jin2021pessimism,yin2022near,xiong2022nearly}.

\begin{table*}[t]
\small \begin{adjustwidth}{-2em}{}
\begin{center}
\begin{tabular}{cccc}
\hline
Algorithm                 & Algorithm-type          & Data coverage assumption & Suboptimality                                           \\ \hline
\cite[Cor.4.5]{jin2021pessimism}        & reward pessimism                        & w.h.p $\Lambda_N \geq I/N + \jinsuffcov \cdot \Sigma_{d^{\pi^*}}$     & $\frac{d\sqrt{\rank(\Sigma_{ d^{\pi^*}})} }{\sqrt{\jinsuffcov(1-\gamma)^4 N}}$  
\\ \hline
\cite[Th.6]{uehara2021pessimistic}           & oracle model   pessimism            &  $\relativecondition<\infty$    & $\sqrt{\frac{ \rank(\Lambda)^2 d \relativecondition}{(1-\gamma)^4 N}}$
\\ \hline
  LM-DRQI  (this work, Th.2)      & distributionally robust                          & $\forall i\in[d]$ w.h.p $\Lambda_N \geq I/N + \drectsuffcov d \cdot \Sigma^{i}_{d^{\pi^*}}$   & $\frac{\sqrt{\rank(\Sigma_{ d^{\pi^*}})d} }{\sqrt{\drectsuffcov(1-\gamma)^4 N}}$ 
\\ \hline
\end{tabular}
\end{center}
\caption{ Comparison of the offline RL algorithms in the linear MDP setting. Here, $\Sigma_{ d^{\pi^*}}=\E_{s,a\sim d^{\pi^*}}[\phi(s,a)\phi(s,a)^\top]$,  $\Lambda =  \E_{s,a\sim\mu}[\phi(s,a)\phi(s,a)^\top]$, $\Lambda_N$ is an estimate of $\Lambda$ (see \cref{eq:lambda-N}),  $\relativecondition = \max_{x\in\R^d} { (x^\top \Sigma_{ d^{\pi^*}} x)}/{(x^\top \Lambda x)}$, 
$\Sigma^{i}_{ d^{\pi^*}}=\E_{s,a\sim d^{\pi^*}}[(\phi_i(s,a)\indic_i)(\phi_i(s,a)\indic_i)^\top]$, $\indic_i$ is the unit vector in $i$th dimension,  $\phi(s,a)\in\R^d$ is $d$-dimensional feature vector, and   $\jinsuffcov$ and $\drectsuffcov$ are the sufficient coverage constants satisfying corresponding random but high probability events. 
}
\label{tbl:linear-mdp-offline-rl-results-compare}
\end{adjustwidth}
\end{table*}

\textbf{Our Contributions:} $(i)$ We propose a novel offline RL algorithm using the DRL framework, called Distributionally Robust Q-Iteration (DRQI), for the tabular setting. We show that our  approach is able to relax the strong assumption of uniform concentrability to a weaker single policy concentrability assumption. We also provide detailed analysis and sample complexity results for DRQI with four commonly used uncertainty sets in DRL: total variation, Wasserstein, Kullback-Leibler, and chi-square uncertainty sets. The comparison with the relevant works is given in  \cref{tbl:tabular-offline-rl-results-compare}.  \\ 
$(ii)$ We extend our distributionally robust approach to offline RL to the linear MDP setting, propose the Linear MDP DRQI (LM-DRQI) algorithm. We   characterize its sample complexity using only the \emph{sufficient coverage} assumption \citep{jin2021pessimism} which only requires that the  trajectory induced by the optimal policy $\pi^{*}$ is  covered by the offline data  sufficiently well. In particular, we do not require the uniform concentrability assumption. The comparison with the  relevant works is given in \cref{tbl:linear-mdp-offline-rl-results-compare}. \\
$(iii)$ We demonstrate the performance of DRQI algorithm through simulation experiments. In the partial data coverage setting, DRQI algorithm performs better than the standard dynamic programming approach, and performs  at par with the state-of-the-art reward pessimism based offline RL algorithms. In the full coverage setting,  DRQI algorithm outperforms the reward pessimism based offline RL algorithms. \\
$(iv)$ We believe that establishing a connection between the DRL and offline RL literature is also a contribution of this work. It provides the opportunity for bringing the machinery from DRL to solve the offline RL problem. In particular, we expect that the offline RL problems with large state and action spaces could greatly benefit from this.

We note that our sample complexity result  is $\cO(\sqrt{\cardS/(1-\gamma)})$ away from the state-of-the-art lower-bound (and the matching upper-bound) in the tabular setting (c.f. \cref{tbl:tabular-offline-rl-results-compare}). 
We, however, believe that our result can be improved using the more sophisticated  variance-based  concentration arguments as used in \cite{li2022settling}. This analysis is more challenging for the distributional robust setting and we defer that to future work.
In the linear MDP setting, our result is comparable to \citet{jin2021pessimism} as long as  $\jinsuffcov \leq d \drectsuffcov$. 
However, for a certain class of linear MDPs \citet{jin2021pessimism}'s data coverage assumption implies ours (c.f.\cref{lem:suff-cov-equivalence-class}) and hence $\jinsuffcov=\drectsuffcov$, our result improves over \citet{jin2021pessimism} by $\sqrt{d}$.
Our result is not directly comparable with that of \citet{uehara2021pessimistic}. We also want to emphasize \citet{uehara2021pessimistic} does not provide a tractable implementation. 
However, from the linear MDP problem setup, our LM-DRQI algorithm can use the least squares regression prescription from \citet{ma2022distributionally} for implementation.

\textbf{Comparison with \citet{wang2023distributionally}:} In the final stages of working on this manuscript we came across the work by \citet{wang2023distributionally}, who propose a similar offline RL algorithm as ours (\cref{alg:MPQI-Algorithm}). \cite{wang2023distributionally} only consider  the tabular setting, whereas we provide offline RL algorithms for both the tabular and linear MDP settings.  \cite{wang2023distributionally} consider a total variation uncertainty set whereas we consider four  commonly used uncertainty sets in DRL. In terms of the sample complexity guarantees, they provide a $\cOtilde (\sqrt{({\cardS C_{\pi^*}^- })/({(1-\gamma)^4 N})})$ bound. However, we want to point out that there is a technical error in their application of Hoeffding's inequality to $L^1$-norm \citep[Eq.(10)]{wang2023distributionally}. To emphasize, Hoeffding's inequality (\cref{thm:hoeffding}) gives a concentration result for \textit{single-valued random variables}, hence we incur  an additional $\cardS$ factor in the concentration of total variation distance (equivalently for $L^1$-norm) between two \textit{random vectors}. This observation matches the tightness of concentration of empirical distributions under total variation distance \citep[Theorem 1]{canonne2020short}. This technical error makes their bound appear $\sqrt{\cardS}$ better than it should be. If this error is fixed, then their sample complexity results will match ours. \citet{wang2023distributionally} also derive an improved bound using the Bernstein-based analysis techniques \citep{li2022settling}. Although this bound is optimal, it is only when the sample size $N$ exceed $\cOtilde( 1/((1-\gamma) \mu_{\min}^2))$, where $\mu$ is the data generating distribution and $\mu_{\min}$ is its minimal positive value. Hence they  get quadratic dependence on  $\cardS$ and $\cardA$ for sample complexity, but also note $C_{\pi^*,\text{clip}} \leq \cardA$, when $\mu$  is a  uniform  distribution.
Nonetheless, we want to emphasize that the analysis in \citet{wang2023distributionally} are {sophisticated and insightful}. We believe both works  make interesting contributions to offline RL literature.

\section{Preliminaries}
\label{sec:formulation-mpqi}

\textbf{Notations:} For a set $\cX$, we denote its cardinality as $|\cX|$. The set of probability distributions over $\cX$  is denoted  as $\Delta(\cX)$.
For any vector $x$ and positive semidefinite matrix $A$, $\|x\|_{A} = \sqrt{x^{\top} A x}$. Let $\trace(\cdot)$ denote the trace operator. Denote $\indic_i\in\{0,1\}^{d\times 1}$ as a zero-vector with value $1$ at index $i$.
We use $f\leq\cO(g)$ to denote $f\leq c\cdot g$ for some universal constants less than $100$, and likewise use $f\leq\cOtilde(g)$ to absorb all the universal constants less than $100$ and the polylog terms depending on $d,N$ and $1/(1-\gamma)$.

\textbf{Markov Decision Process (MDP):} An MDP is a tuple $(\cS,\cA,r,P^o,\gamma,d_0)$, where $\cS$ is the  state space, $\cA$ the action space, $r:\ScA\to[0,1]$ is the  reward function, $P^o: \ScA \to \Delta(\cS)$ is the probability transition function (model), $\gamma$ is the discount factor, and $d_0$ is the initial state distribution. A stationary (stochastic) policy $\pi: \cS \to \Delta(\cA)$ specifies a distribution over actions for each state. Each policy $\pi\in\Pi$ induces a discounted occupancy distribution over state-action pairs, denoted as $d^\pi: \cS \times \cA \to [0,1]$, where $d^\pi(s,a) = (1-\gamma)\sum_{t=0}^\infty \gamma^t P_t(s_t = s, a_t = a; \pi)$, and $P_t(s_t = s, a_t = a; \pi)$ denotes the visitation probability of state-action pair $(s,a)$ at time step $t$, starting at $s_0 \sim d_0(\cdot)$ and following $\pi$ on the model $P^o$. For simplicity, we denote $P_t(s_t = s, a_t = a; \pi)$ by $d^\pi_t(s,a)$. The value of a policy $\pi$ at state $s \in \cS$ is $V^\pi_{P^o}(s) = \E_{\pi,P^o} [\sum_{t=0}^\infty \gamma^t r(s_t,a_t) \; | \; s_0 = s]$, where $a_t \sim \pi(\cdot |s_t)$ and $s_{t+1} \sim P^o_{ s_t,a_t}$. Similarly, we define the  $Q$-value of a policy as  $Q^\pi_{P^o}(s,a) = \E_{\pi,P^o} \left[\sum_{t=0}^\infty \gamma^t r_t \; | \; s_0 = s , a_0 = a \right].$ We sometimes denote $d^\pi$ as $d^\pi_{P^o}$ making its dependence on the model $P^o$ clearer.

\paragraph{Offline RL:} In offline RL, we only have access to a pre-collected offline dataset consisting of $N$ samples: $\cD = \{(s_i,a_i,r_i,s_i')\}_{i=1}^N$, where $r_i = r(s_i,a_i)$ and $s_i' \sim P^o_{ s_i,a_i}$. We assume that $(s_i,a_i)$ pairs are generated i.i.d.~by following a data generating (behavior) distribution $\mu \in \Delta(\cS \times \cA)$.  The {goal of offline RL} is to learn a \textit{good} policy $\hatpi$ close to an optimal policy $\pi^*$ of MDP $M^o$ based on the offline data $\cD$. More formally, for a prescribed accuracy level $\epsilon$, we seek to find an $\epsilon$-optimal policy $\hatpi$ satisfying
\begin{align} \label{eq:offline-rl-objective}
    \E_{s_0\sim d_0} [V^{\pi^*}(s_0) - \E_\cD[{V}^{\hatpi}(s_0)]] \leq \epsilon,
\end{align}
with high probability using an offline dataset $\cD$  containing as few samples as possible.

Analysis of offline RL algorithms   crucially depends on the \emph{data coverage} assumption,  which is  quantified using the \emph{concentrability
coefficient}. For a given policy $\pi$, the concentrability coefficient $C_{\pi}$ is defined as 
   $ C_{\pi} = \max_{(s,a) \in \cS \times \cA} {d^{\pi}(s,a)}/{\,\mu(s,a)}. $
Most of the past theoretical works on offline RL  use the strong assumption of bounded \emph{uniform concentrability} \citep{munos08a}, defined as  $C_{u} = \sup_{\pi} C_{\pi}$.
\citet{munos08a} propose fitted Q-iteration algorithm and give offline RL guarantees under uniform concentrability.
Recently, some  works have proposed offline RL algorithms using the idea of pessimism and showed that the   {uniform concentrability}  can be relaxed to a \emph{single concentrability} assumption, i.e.,  $C_{\pi^{*}}$ is bounded \citep{uehara2021pessimistic, rashidinejad2022bridging, li2022settling}. We also make the same single concentrability assumption in this work.

\textbf{Robust Markov Decision Process (RMDP):}  The RMDP formulation considers a set of models called uncertainty set,  denoted as $\cP$.  We assume that  $\cP$  satisfies the standard \textit{$(s,a)$-rectangularity condition} \citep{iyengar2005robust}. An RMDP can be specified as  $(\cS, \cA, r, \cP, \gamma, d_0)$ in which
\begin{align}
    \label{eq:uncertainty-set-1}
    \mathcal{P} &= \otimes_{(s,a) \in \ScA }\, \mathcal{P}_{s,a},  \\
    \label{eq:uncertainty-set-2}
    \mathcal{P}_{s,a} &=  \{ P_{s,a} \in \Delta(\cS)~:~ D(P_{s,a}, P^o_{s,a}) \leq \rho_{s,a}  \},
\end{align}
where $D(\cdot, \cdot)$ is a distance metric between two probability distributions and $\rho_{s,a} > 0$ is the radius of the uncertainty set. In other words, $\mathcal{P}$ is the set of all models around $P^{o}$ within a particular distance.

The \textit{robust value function} $V^{\pi}_{\mathcal{P}}$ corresponding to a policy $\pi$ and the \textit{optimal robust value function} $V^{*}_{\mathcal{P}}$ are defined as \citep{iyengar2005robust,nilim2005robust}
\begin{align}
\label{eq:robust-value-function}
V^{\pi}_{\mathcal{P}} = \inf_{P \in \mathcal{P}} ~V^{\pi}_{P},\qquad V^{*}_{\mathcal{P}} = \sup_{\pi} \inf_{P \in \mathcal{P}} ~V^{\pi}_{P} . 
\end{align} 
An \textit{optimal robust policy} $\pi^{*}_{\mathcal{P}}$ is such that the robust value function corresponding to it matches the optimal robust value function, i.e., $V^{\pi^{*}_{\mathcal{P}}} = V^{*}_{\mathcal{P}} $. It is known that there exists a stationary and deterministic optimal policy \citep{iyengar2005robust} for the RMDP. The \textit{robust Bellman operator} is defined  as \citep{iyengar2005robust}
\begin{align}
\label{eq:robust-bellman-primal}
    (T Q)(s, a) = r(s, a) + \gamma \inf_{P_{s,a} \in \cP_{s,a}} \!\!\!\E_{s' \sim P_{s,a}} [ \max_{b} Q(s', b)]. 
\end{align}
It is known that $T$ is a contraction mapping in the infinity norm and hence it has a unique fixed point $Q^{*}_{\mathcal{P}}$ with $V^{*}_{\mathcal{P}}(s) = \max_{a} Q^{*}_{\mathcal{P}}(s,a)$ and $\pi^{*}_{\mathcal{P}}(s) = \argmax_{a} Q^{*}_{\mathcal{P}}(s,a)$ \citep{iyengar2005robust}. The robust Q-Iteration   can now be defined using the robust Bellman operator as  $Q_{k+1} = T Q_{k}$. Since $T$ is a contraction, it follows that $Q_{k} \rightarrow Q^{*}_{\mathcal{P}}$. So, robust Q-Iteration  can be used to compute (solving the planning problem) $Q^{*}_{\mathcal{P}}$ and $\pi^{*}_{\mathcal{P}}$ in the tabular setting with a known uncertainty set $\cP$.

\section{Distributionally Robust Q-Iteration (DRQI) Algorithm}
\label{sec:mpqi}

In this section, we  propose our DRQI algorithm to solve the offline RL problem in the tabular setting and provide its theoretical guarantees.

Let $N(s,a) = \sum^{N}_{i=1} \indic\{(s_{i}, a_{i}) = (s,a)\}$ and  $N(s,a,s') = \sum^{N}_{i=1} \indic\{(s_{i}, a_{i}, s'_{i}) = (s,a, s')\}$ . We  then construct an empirical estimate of $P^{o}$ as 
\begin{equation*}
    \Phat^{o}_{s,a}(s') = \frac{N(s,a,s')\indic\{N(s,a)\geq1\}}{N(s,a)}+ \frac{\indic\{N(s,a)=0\}}{|\cS|}.
\end{equation*}
We also consider the add-$L$  estimate \citep{bhattacharyya2021near,arora2023near} of $P^{o}$ given by
\begin{align*}
    \Ptilde^{o}_{s,a}(s') = \frac{N(s,a,s')+L}{N(s,a)+L\cardS},
\end{align*}
where the value of $L$ is defined later. Following the uncertainty set definition (c.f. \cref{eq:uncertainty-set-1}-\cref{eq:uncertainty-set-2}), we  construct the empirical uncertainty set $\cPhat$ around $\Phat^{o}$ or $\Ptilde^{o}$ as, $\cPhat = \bigotimes_{s,a} \cPhat_{s,a}$, where 
\begin{align}
\label{eq:uncertainty-set}
\cPhat_{s,a} = \{ P\in \Delta(\cS) : D(P,\Phat^o_{s,a} \text{ or } \Ptilde^o_{s,a}) \leq \rho_{s,a} \}.
\end{align}
Similarly (c.f. \cref{eq:robust-bellman-primal}), we can define the empirical robust Bellman operator  as 
\begin{align}
     (\widehat{T} Q)(s, a) = r(s, a) + \gamma \inf_{P_{s,a} \in \cPhat_{s,a}} \!\!\!\E_{s' \sim P_{s,a}} [ \max_{b} Q(s', b)]. \label{eq:robust-bellman-eq}
\end{align}
Note that for $\rho_{s,a} = 0$,  $\widehat{T}$ is the same as the standard (non-robust) empirical Bellman operator. Thus, the empirical Q-value iteration $Q_{k+1} = \widehat{T} Q_{k}$ will give an approximately optimal Q-value function under the standard generative model assumption where there are $N(s,a)=N$ next-state samples from each $(s,a)$ pairs \citep{haskell2016empirical, kalathil2021empirical}. However, since  the data is generated according to a behavior policy in the offline RL,, the generative model assumption is not valid here.  On the other hand, for a fixed $\rho_{s,a} > 0,$ the update $Q_{k+1} = \widehat{T} Q_{k}$ is exactly equal to empirical robust Q-iteration, and it will converge to an approximately optimal robust Q-function corresponding to the RMDP uncertainty set specified by the $\rho_{s,a}$ values \citep{panaganti22a,xu-panaganti-2023samplecomplexity, shi2022distributionally}

The key insight of our algorithm is to use the update $Q_{k+1} = \widehat{T} Q_{k}$ as a DRL style approximate Q-iteration. To see this, recall the standard  DRL problem \citep{duchi2018learning, chen2020distributionally}:
 $   \max_{\theta} \; \min_{q \in \mathcal{Q}} \; \mathbb{E}_{x \sim q}[f(x;\theta)],
$
where $f$ is a function to be maximized w.r.t.~a parameter $\theta$ and $\mathcal{Q}$ is an uncertainty set for the probability distribution. The nomenclature `distributionally robust' is due to the term $\min_{q \in \mathcal{Q}}$ in the objective. Now, in our case,  the minimization over the uncertainty set $\widehat{\cP}$ in the definition of $ \widehat{T}$, i.e.,~$\inf_{P_{s,a} \in \widehat{\cP}_{s,a}}$, also represents this distributionally robust objective.  Observing that the degree of the robustness  depends on the radius of the uncertainty set $\rho_{s,a}$, we propose to control this robustness by choosing an appropriate value for $\rho_{s,a}$ depending on the offline data $\mathcal{D}$. In particular, we will choose $\rho_{s,a} = \min \left(c_{1}, c_{2}/\sqrt{N(s,a)} \right)$, where $c_{1}$ and $c_{2}$ are problem-dependent constants to be specified later.

Throughout our analysis, we assume that the reward function is known to the algorithm, in order to focus on the key DRL idea due to the term ~$\inf_{P_{s,a} \in \widehat{\cP}_{s,a}}$. This relaxation is made without loss of generality since we can model similar uncertainty sets $\cP$ or $\cPhat$ for the reward distributions \citep{si2020distributionally,zhou2021finite}.

\begin{algorithm}[t]
	\caption{Distributionally Robust Q-Iteration (DRQI) Algorithm}	
	\label{alg:MPQI-Algorithm}
	\begin{algorithmic}[1]
		\STATE \textbf{Input:} Offline data $\cD =(s_i,a_i,r_{i}, s'_{i})_{i=1}^N$, Confidence level $\delta\in(0,1)$ 
		\STATE \textbf{Initialize:} $Q_{0}\equiv 0$
		\FOR {$k=0,\cdots,K-1$ } 
        \STATE  Compute  $Q_{k+1} = \widehat{T} Q_{k}$ from \cref{eq:robust-bellman-eq}
		\ENDFOR
		
		\STATE \textbf{Output:} $\pi_{K}  = \argmax_a Q_{K}(s,a)$
	\end{algorithmic}
\end{algorithm}

In this work, we consider four uncertainty sets corresponding to four different distance metrics $D(\cdot, \cdot)$. We also fix a confidence level $\delta\in(0,1)$ in the following. \\
\textbf{1. Total variation (TV) uncertainty set ($\cPhat^{\tv} $)}: We define $\cPhat^{\tv}  = \otimes \cPhat^{\tv}_{s,a}$, where $\cPhat^{\tv}_{s,a}$ is as in \eqref{eq:uncertainty-set} with the empirical estimator $\Phat^o_{s,a}$, the total variation distance $\dtv(P,\Phat^o_{s,a}) = (1/2) \|P - \Phat^{o}_{s,a} \|_{1}$, and radius
\begin{equation} 
\label{eq:tv-diameter}
    \rho_{s,a} = 1 \wedge \sqrt{\frac{\max\{\cardS,2\log(2|\cS||\cA|/\delta)\}}{N(s,a)}} \indic\{N(s,a)\geq 1\} .
\end{equation}
\textbf{2. Wasserstein uncertainty set ($\cPhat^{\wass}$)}: We define $\cPhat^{\wass}  = \otimes \cPhat^{\wass}_{s,a}$, where $\cPhat^{\wass}_{s,a}$ is as in \eqref{eq:uncertainty-set} with the empirical estimator $\Phat^o_{s,a}$, and with the Wasserstein distance 
$\dwass(P,\Phat^o_{s,a}) = \inf_{\nu\in \mathrm{m}(P,\Phat^o_{s,a})} \int \ell(x,y) \dd\nu(dx,dy)$, where the integration is over ${(x,y)\in\ScS}$, $\mathrm{m}(P,\Phat^o_{s,a})$ denotes all probability measures on $\ScS$ with marginals $P$ and $\Phat^o_{s,a}$, and $\ell(\cdot, \cdot)$ is the discrete metric, $\ell(s, s')=\indic\{s\neq s'\}$, and radius
\begin{align}
    \rho_{s,a} = 1 \wedge \sqrt{\frac{C|\cS|\log(|\cS||\cA|/\delta)}{N(s,a)}}\indic\{N(s,a)\geq 1)\},
\end{align}
where $C$ is a problem independent constant.\\
\textbf{3. Kullback-Leibler (KL) uncertainty set ($\cPhat^{\kl}$)}: We define $\cPhat^{\kl}  = \otimes \cPhat^{\kl}_{s,a}$, where $\cPhat^{\kl}_{s,a}$ is as in \eqref{eq:uncertainty-set} with the add-$L(=1)$ estimator $\Ptilde^o_{s,a}$, and with the KL distance
$\dkl(P,\Ptilde^o_{s,a})  = \sum_{s'} P(s') \log ({P(s')}/{\Ptilde^o_{s,a}(s')})$, and radius\begin{align}
\rho_{s,a}=\log(\cardS) \wedge \frac{C\cardS \log(\cardS^2\cardA/\delta)\log(N)}{N(s,a)}\indic\{N(s,a)\geq 1\}, 
\end{align}
where $C$ is a problem independent constant. \\
\textbf{4. Chi-square uncertainty set ($\cPhat^{\chisq}$)}: We define $\cPhat^{\chisq}  = \otimes \cPhat^{\chisq}_{s,a}$, where  $\cPhat^{\chisq}_{s,a}$ is  as in \eqref{eq:uncertainty-set} with the add-$L(=\log(1/\delta))$ estimator $\Ptilde^o_{s,a}$, and with the chi-square distance $\dchi(P,\Ptilde^o_{s,a}) =  \sum_{s'} {(P(s') - \Ptilde^o_{s,a}(s'))^2}/{\Ptilde^o_{s,a}(s')}$, and radius
\begin{align}
    \rho_{s,a}=(\cardS+1) \wedge \frac{C\cardS \log(\cardS^2\cardA/\delta)}{N(s,a)} \indic\{N(s,a)\geq 1\},
\end{align}
where $C$ is a problem independent constant.

We would like to emphasize that prior work on distributionally robust MDP/RL have shown that the empirical robust Bellman operator (\cref{eq:robust-bellman-eq}) can be evaluated in a computational tractable way for all the above four uncertainty sets \citep{iyengar2005robust, panaganti22a,ho2022robust,xu-panaganti-2023samplecomplexity,kumar2022efficient}. 
In view of these computational tractable methods, we only present
our DRQI algorithm using Q-iteration with the empirical robust Bellman operator (\cref{eq:robust-bellman-eq}), and is summarized in \cref{alg:MPQI-Algorithm}.
We now present the sample complexity of DRQI with TV uncertainty set, and a proof sketch. We obtain sample complexities of same order for all other uncertainty sets. We defer the corresponding  theorem statements and proofs to \cref{appen:drqi-proofs}. 
\begin{theorem}
    Let $\pi_K$ be the DRQI policy  after $K$ iterations under the TV uncertainty set $\cPhat^\tv$. If the total number of samples $N\geq N_\tv$, where
    \begin{align*}
    N_\tv = \cO\bigg( \frac{C_{\pi^*}\max\{\cardS^2,2\log(2\cardS^2\cardA/\delta)\}}{\epsilon^2(1-\gamma)^4} \bigg),
    \end{align*}
    then $\E_{s_0\sim d_0} [V^{\pi^*}(s_0) - \E_\cD [{V}^{\pi_K}(s_0)]] \leq \epsilon$ with probability at least $1-\delta$ and a sufficiently large $K$.
\end{theorem}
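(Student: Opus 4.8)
The plan is to treat the empirical robust value function as a \emph{pessimistic} estimate of the true value and run the standard three-term suboptimality decomposition of pessimism-based offline RL, but with the robustness radius $\rho_{s,a}$ playing the role of the pessimism penalty. First I would record that $\widehat{T}$ in \cref{eq:robust-bellman-eq} is a $\gamma$-contraction in $\|\cdot\|_\infty$ (as noted below \cref{eq:robust-bellman-primal}), so $Q_K\to\Qhat^{*}$, the fixed point of $\widehat{T}$, with $\|Q_K-\Qhat^{*}\|_\infty\le\gamma^K\Vmax$; for $K$ large this iteration error is negligible, and the greedy policy $\pi_K=\argmax_a Q_K(s,a)$ is an (approximately) optimal robust policy for $\cPhat^\tv$, i.e.\ $V^{\pi_K}_{\cPhat^\tv}\approx V^{*}_{\cPhat^\tv}$. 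Writing $V^\pi$ for the true value (under $P^o$) and $V^\pi_{\cPhat^\tv}$ for the empirical robust value, I would decompose
\begin{align*}
V^{\pi^*}-V^{\pi_K} = \underbrace{\big(V^{\pi^*}-V^{\pi^*}_{\cPhat^\tv}\big)}_{\mathrm{(I)}} + \underbrace{\big(V^{\pi^*}_{\cPhat^\tv}-V^{\pi_K}_{\cPhat^\tv}\big)}_{\mathrm{(II)}} + \underbrace{\big(V^{\pi_K}_{\cPhat^\tv}-V^{\pi_K}\big)}_{\mathrm{(III)}}.
\end{align*}
Term (II) is $\le 0$ up to the $\gamma^K$ iteration error since $V^{\pi^*}_{\cPhat^\tv}\le V^{*}_{\cPhat^\tv}=V^{\pi_K}_{\cPhat^\tv}$, and term (III) is $\le 0$ by pessimism, leaving (I) as the quantity to bound.

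The pessimism in (II)/(III) and the penalty bound for (I) both hinge on the \textbf{good event} $\{P^o\in\cPhat^\tv\}$, i.e.\ $\dtv(P^o_{s,a},\Phat^o_{s,a})\le\rho_{s,a}$ for every visited $(s,a)$. Establishing this is, I expect, the crux of the proof: because $\Phat^o_{s,a}$ is an empirical distribution over $\cS$, controlling $\|\Phat^o_{s,a}-P^o_{s,a}\|_1$ is a \emph{vector} concentration statement whose scale is $\sqrt{\cardS/N(s,a)}$, not the $\sqrt{1/N(s,a)}$ one would get from a scalar Hoeffding bound. This is precisely where the $\cardS$ inside $\rho_{s,a}$ in \cref{eq:tv-diameter} comes from, it is order-tight \citep[Theorem 1]{canonne2020short}, and it is the point at which our bound (correctly) differs from \citet{wang2023distributionally}. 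Concretely I would invoke the $L^1$ deviation bound for empirical multinomials together with a union bound over the $\cardS\cardA$ pairs (contributing the $\log(\cardS^2\cardA/\delta)$ term), and a multiplicative Chernoff bound giving $N(s,a)\gtrsim N\mu(s,a)$ for adequately covered pairs, so that on an event of probability $\ge 1-\delta$ we have both $P^o\in\cPhat^\tv$ and $\rho_{s,a}\lesssim\sqrt{\cardS/(N\mu(s,a))}$.

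To bound (I) I would unroll the robust Bellman recursion for $g(s):=V^{\pi^*}(s)-V^{\pi^*}_{\cPhat^\tv}(s)$, which satisfies $g(s)=\gamma\,\E_{s'\sim P^o_{s,\pi^*(s)}}[g(s')]+\mathrm{pen}(s,\pi^*(s))$ with per-step penalty $\mathrm{pen}(s,a)=\gamma\big(\E_{P^o_{s,a}}[V^{\pi^*}_{\cPhat^\tv}]-\inf_{P\in\cPhat^\tv_{s,a}}\E_{P}[V^{\pi^*}_{\cPhat^\tv}]\big)$; since $P^o\in\cPhat^\tv$ this penalty is nonnegative and, by the Lipschitz/dual bound for TV balls, $\mathrm{pen}(s,a)\lesssim\rho_{s,a}\Vmax$. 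Unrolling turns (I) into the occupancy-weighted sum $\frac{1}{1-\gamma}\sum_{s,a}d^{\pi^*}(s,a)\,\mathrm{pen}(s,a)\lesssim\frac{1}{(1-\gamma)^2}\sqrt{\frac{\cardS}{N}}\sum_{s,a}\frac{d^{\pi^*}(s,a)}{\sqrt{\mu(s,a)}}$. The final trick is to exploit that an optimal policy can be taken deterministic, so $d^{\pi^*}$ is supported on at most $\cardS$ pairs; Cauchy--Schwarz then gives $\sum_{s,a}\frac{d^{\pi^*}(s,a)}{\sqrt{\mu(s,a)}}\le\sqrt{\sum_{s,a}\frac{d^{\pi^*}(s,a)}{\mu(s,a)}}\le\sqrt{\cardS\,C_{\pi^*}}$, using $d^{\pi^*}/\mu\le C_{\pi^*}$ on a support of size $\le\cardS$. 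Combining, $\mathrm{(I)}\lesssim\frac{\cardS\sqrt{C_{\pi^*}}}{(1-\gamma)^2\sqrt N}$; setting this $\le\epsilon$ and solving for $N$ yields the stated $N_\tv$, with the $\max\{\cardS^2,\log(\cdots)\}$ reflecting the $\max$ inside $\rho_{s,a}$.

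Besides the TV concentration factor, the remaining obstacle I would flag is the treatment of poorly covered pairs: when $N(s,a)=0$ the radius is set to $0$ and $\cPhat^\tv_{s,a}$ collapses to the uniform point mass, so $P^o\in\cPhat^\tv$ can fail there and both the $\le 0$ pessimism claims and the penalty bound break. I would handle these pairs separately, using $d^{\pi^*}(s,a)\le C_{\pi^*}\mu(s,a)$ to argue that their total $d^{\pi^*}$-mass, and hence their $\Vmax$-scaled contribution, is lower order once $N\ge N_\tv$, so they do not affect the leading-order bound. Finally I would add back the $\gamma^K$ iteration error from (II), absorbed by taking $K$ sufficiently large, and collect the failure probabilities of the concentration events into the overall $\delta$.
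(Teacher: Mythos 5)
Your proposal follows essentially the same route as the paper's proof: establish the good event $P^o\in\cPhat^\tv$ via the $L^1$/TV concentration bound (\cref{lem:tv-concen}) with a union bound over $\cardS\cardA$ pairs, use pessimism to kill the terms involving $V^{\pi_K}$, unroll the remaining gap along $d^{\pi^*}$ with a per-step penalty of order $\rho_{s,a}/(1-\gamma)$, and finish with the change of measure $d^{\pi^*}/\mu\le C_{\pi^*}$ plus Cauchy--Schwarz over the $\le\cardS$-sized support of the deterministic $\pi^*$. Your pivot on $V^{\pi^*}_{\cPhat}$ rather than the paper's $V^{\pihat^*}_{\cPhat}$ is an equivalent (if anything cleaner) choice, and your substitute of a multiplicative Chernoff bound for the paper's binomial inverse-moment lemma (\cref{lem:bound-on-binomial-inverse-moments}) is a standard alternative that yields the same rate.

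One point needs repair. You read the radius formula as giving $\rho_{s,a}=0$ when $N(s,a)=0$, so that $\cPhat^\tv_{s,a}$ collapses to the uniform point mass and $P^o\notin\cPhat^\tv$ there; you then propose to absorb these pairs by bounding their $d^{\pi^*}$-mass via $d^{\pi^*}(s,a)\le C_{\pi^*}\mu(s,a)$. That fix works for the penalty term (I), whose unrolling is weighted by $d^{\pi^*}$, but it does \emph{not} restore the claim $\mathrm{(III)}=V^{\pi_K}_{\cPhat^\tv}-V^{\pi_K}_{P^o}\le 0$: if $P^o\notin\cPhat^\tv$, the error in (III) is governed by how often $\pi_K$ (not $\pi^*$) visits the unsampled pairs, and $\pi_K$'s occupancy there is not controlled by $C_{\pi^*}$ --- indeed $\pi_K$ may be drawn precisely toward poorly covered regions. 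The resolution used in the paper (see \cref{prop:tv-high-prob-event-model-based}) is the opposite convention: at unsampled pairs the uncertainty set is taken to be all of $\Delta(\cS)$, so $P^o\in\cPhat^\tv$ holds there almost surely, pessimism never fails, and the only price is a maximal per-step penalty at those pairs --- which appears only in the $d^{\pi^*}$-weighted term (I) and is then absorbed exactly by the lower-order argument you describe (or, in the paper, by the $N(s,a)\vee 1$ inverse-moment bound). With that convention adopted, your argument goes through and matches the paper's.
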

\begin{proof}[Proof Sketch]
Denoting $\cPhat^\tv$ simply as $\cPhat$, we first  write $V^{\pi^*}_{P^o}(s_0) - {V}^{\pi_K}_{P^o}(s_0) =  (  V^{\pi^*}_{P^o}(s_0) - V^{\pi_K}_{\cPhat}(s_0)) +  (  V^{\pi_K}_{\cPhat}(s_0) - {V}^{\pi_K}_{P^o}(s_0))$, where $V^{\pi_K}_{\cPhat} = \inf_{P \in \cPhat} ~V_{ P}^{\pi_{K}}$ is the robust value of policy $\pi_{K}$ corresponding to the uncertainty set $\cPhat$. In  \cref{prop:tv-high-prob-event-model-based} we show that, with the  $\rho_{s,a}$ as specified above, $P^o\in\cPhat^\tv$ with probability at least $1-\delta$. So, by definition of the robust value function, the second term $(  V^{\pi_K}_{\cPhat}(s_0) - {V}^{\pi_K}_{P^o}(s_0))$ is negative and we only need to bound the first term.   

To bound the first term, we decompose it as $(  V^{\pi^*}_{P^o}(s_0) - V^{\pi_K}_{\cPhat}(s_0)) = (V^{\pi^*}_{P^o}(s_0) - V^{\pihat^*}_{\cPhat}(s_0) )  +  ( V^{\pihat^*}_{\cPhat}(s_0) - V^{\pi_K}_{\cPhat}(s_0) )$, where  $\pihat^* = \argmax_{\pi}V^{\pi}_{\cPhat} $ is the optimal robust policy w.r.t. $\cPhat$. Then, due to the contraction propetry of the robust Bellman operator, $( V^{\pihat^*}_{\cPhat}(s_0) - V^{\pi_K}_{\cPhat}(s_0) )$ will converge to zero exponentially in $K$. 

Bounding $(V^{\pi^*}_{P^o}(s_0) - V^{\pihat^*}_{\cPhat}(s_0) )$ is more technical. The key idea is to first note that  $\dtv(P_{s,\pi^{*}(s)},P^o_{s,\pi^{*}(s)}) \leq 2 \rho_{s,a}$ for any $P \in \cPhat$, by \cref{prop:tv-high-prob-event-model-based}  and definition of $\cPhat$. Now, unrolling along the trajectory generated by $\pi^{*}$ on $P^{o}$ and using the  form of $\rho_{s,a}$, we can  get an upper bound in terms of $\E_{s\sim d^{\pi^*}} [1/\sqrt{N(s,\pi^*(s))}]$. We will then express $N(s,\pi^*(s))$ in terms of $N \mu(s,\pi^*(s))$ using \cref{lem:bound-on-binomial-inverse-moments},  and then use a change of measure argument to get the final bound in terms of single concentrability coefficient $C_{\pi^*}$.
\end{proof}

\section{Linear-MDP Distributionally Robust Q-Iteration (LM-DRQI) Algorithm}
\label{sec:lin-mpqi}

In this section, we  propose our LM-DRQI algorithm to solve offline RL problem in the linear MDP setting with large state space and finite actions, and  provide its sample complexity guarantees.

We now define the linear architecture called \emph{linear MDP} used in RL literature \citep{jin2020provably,jin2021pessimism,yin2022near} for handling large state space setting.
\begin{definition}[Linear MDP \citep{jin2020provably}]
\label{defn:linear-MDP}
We say an MDP $M = (\cS, \cA, r, P, \gamma)$ is a {linear MDP} with a known feature map   $\phi : \ScA \rightarrow \R^{d}$, if there exists $d$ unknown (signed) measures $\nu = (\nu_{1}(\cdot), \ldots, \nu_{d}(\cdot))$ over $\cS$ and an unknown vector $\theta \in \R^{d},$ such that for any $(s, a) \in \ScA,$ we have
\begin{align}
\label{eq:linear-MDP}
P_{s,a} = \langle \phi(s, a), \nu(\cdot)  \rangle,~~ r(s, a) = \langle \phi(s, a), \theta  \rangle.
\end{align}
\end{definition}
Similar to the tabular setting, here also we assume that the reward function (equivalently  $\theta$) is known, in order to focus on the key aspect of DRL formulation. We make the following assumptions. 
\begin{assumption}
\label{as:linear-MDP}
Let $M = (\cS, \cA, r, P^o, \gamma)$ be a {linear MDP} with a known feature map $\phi$ and unknown measure $\nu^o$. We assume that   $\phi_i(s, a) \geq 0$ for all $(s, a) \in \ScA$ and $i\in[d]$. We also assume that  $\Lambda =  \E_{s,a\sim\mu} [\phi(s,a)\phi(s,a)^\top] $ and $ \Sigma^{(i,j)}_{d^{\pi^*}}=\E_{s,a \sim d^{\pi^*}} [(\phi_i(s,a)\indic_i)(\phi_j(s,a)\indic_j)^\top]$ for all $i,j\in[d]$ are positive semi-definite matrices. 
\end{assumption}

We use the $d$-rectangularity uncertainty set construction which exploits the linear structure \citep{ma2022distributionally}. Instead of focusing on  the set of all models around $P^{o}$, we  consider only the set of linear models around $P^{o}$. This is  achieved indirectly by considering an uncertainty set around $\nu^{o}$ using  the integral probability metric (IPM) \citep{muller1997integral} and translating that to an uncertainty set around $P^{o}$ through the known feature vector $\phi$. More precisely, the $d$-rectangularity uncertainty set $\mathcal{P}$ is defined as 
\begin{align}
    &\cP =  \{ P: P_{s,a}(s') = \sum_{i\in[d]} \phi_i(s,a) \nu_i(s'), \nu_{i} \in \mathcal{M}_{i}, \forall i \in [d]  \},  \nn \\
    \label{eq:d-rectangularity-defn-2}
    &\mathcal{M}_{i} = \{ \nu_{i} :  D_{\mathrm{IPM}}(\nu_{i}, \nu^{o}_{i})  \leq \rho_{i} \},~\text{where},
\end{align}
$D_{\mathrm{IPM}}(p,q)=\sup_{V\in\cV}|\int_{s}(p(s)-q(s))V(s)\dd s|$, and 
$\cV=\{V(\cdot)=\max_{a} \phi^\top(\cdot,a) w :w\in\R^{d}, \|w\|_2\leq 1/(1-\gamma)\}$.

It is straight forward to show that the optimal robust value function is  linear w.r.t. $\phi$ under the  $d$-rectangularity uncertainty set. Moreover, we can also show that the robust Bellman operator (\cref{eq:robust-bellman-primal} can be written as 
\begin{align} 
\label{eq:linear-mdp-robust-bellman-eq}
  TQ(s,a)  = r(s,a) + \gamma \sum_{i\in[d]} \phi_i(s,a)  \min_{\nu_i \in \mathcal{M}_{i} }\E_{s'\sim \nu_i}(\max_{b} Q(s', b)).
\end{align}

We can get an empirical estimate $ \Phat^o$ of $P^{o}$ with ridge linear regression using the offline data \cite[Section 8.3]{agarwal2019reinforcement} as 
\begin{align}
    \Phat^o_{s,a}(s') &= \phi(s,a)^\top  \nuhat^{o}(s'), \text{ where} \\
     \nuhat^{o}(s') &= \frac{1}{N} \sum_{i=1}^N \Lambda_N^{-1} \phi(s_i,a_i) \indic\{s'=s_i'\}, \\
     \label{eq:lambda-N}
    \Lambda_N &= \frac{\lambda}{N} I +  \frac{1}{N} \sum_{i=1}^N\phi(s_i,a_i)\phi(s_i,a_i)^\top,
\end{align} 
and $\lambda$ is a constant.  We construct an estimate $\cMhat_i$ of $\mathcal{M}_{i}$ by replacing unknown $\nu^{o}_{i}$ with its estimate $\nuhat^{o}_{i}$. Similarly, we construct the empirical uncertainty set $\cPhat$ by replacing $\mathcal{M}_{i}$ by $\cMhat_i$. We fix the radius $\rho_{i}$ as 
\begin{align}
    \rho_i = \frac{c_1 \log(Nd/((1-\gamma)\delta)) }{1-\gamma} \sqrt{\frac{d}{N}}   \sqrt{\Lambda_N^{-1}(i,i)}.
\end{align}

We can now define the empirical robust Bellman operator $\widehat{T}$ exactly as in \cref{eq:linear-mdp-robust-bellman-eq}, but by replacing $\mathcal{M}_{i}$ by its estimate $\cMhat_i$. Our LM-DRQI algorithm then follows the same procedure as our DRQI algorithm using this $\widehat{T}$. We omit rewriting the algorithm procedure due to page limitation. 

\begin{figure*}[t]
    \centering 
    \begin{minipage}{.45\textwidth}
		\centering
		\includegraphics[width=\linewidth]{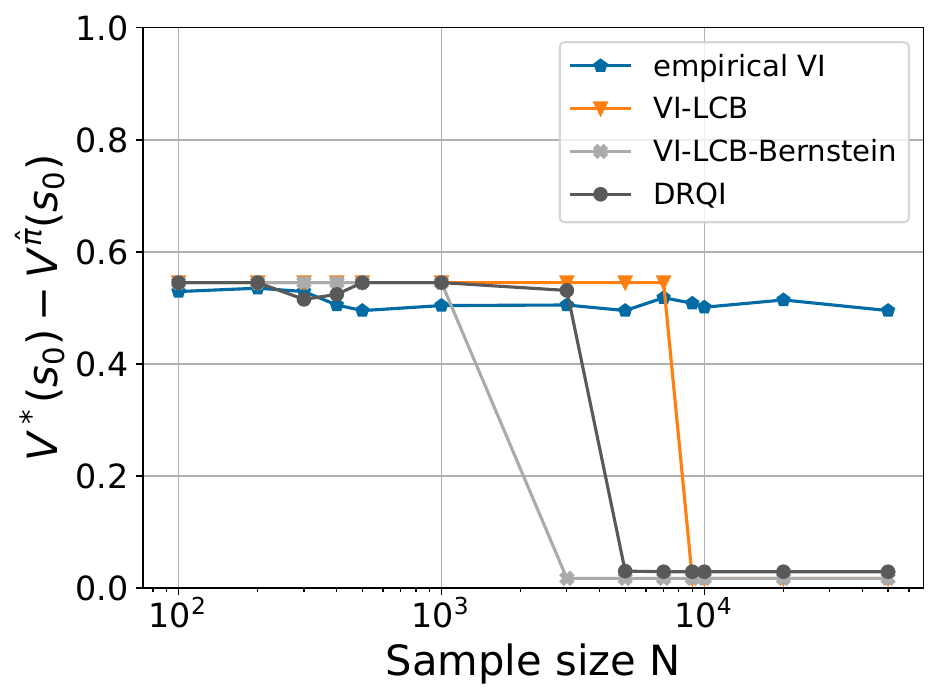}
		\caption{Convergence of DRQI algorithm under \textit{partial coverage} in \texttt{FrozenLake-v1}.}
		\label{fig:fh_partial}
	\end{minipage}\hspace{2em}
     \begin{minipage}{.45\textwidth}
        \centering
        \includegraphics[width=\linewidth]{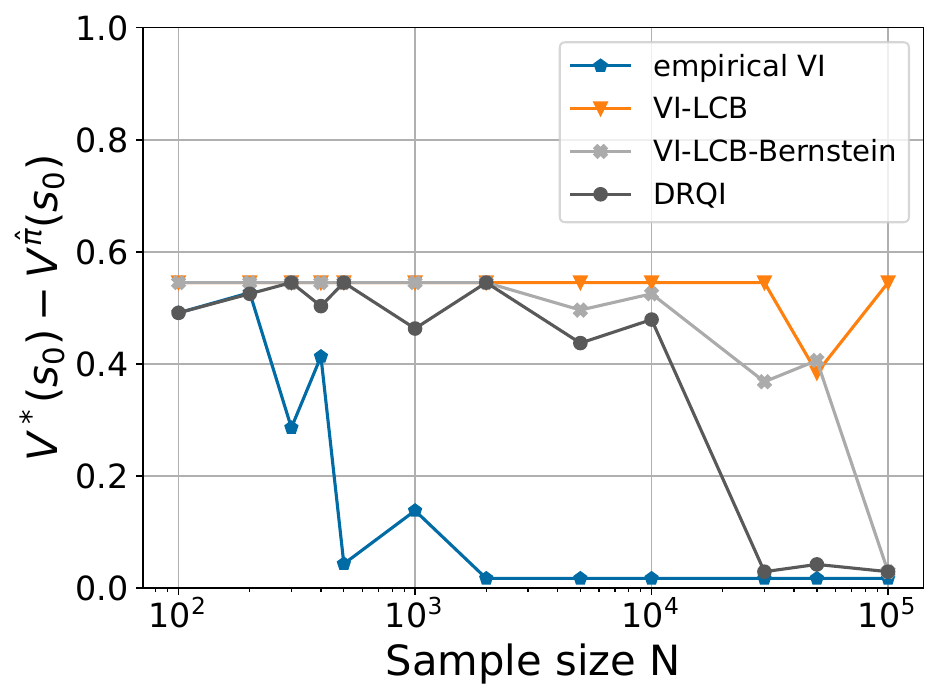}
        \caption{Convergence of DRQI algorithm under \textit{full coverage} in \texttt{FrozenLake-v1}.}
        \label{fig:fh_full}
    \end{minipage}
\end{figure*}

We make the following assumption that specifies coverage requirements to provide offline RL guarantees.
\begin{assumption}[Sufficient coverage assumption] \label{assum:linear-MDP-sufficient-coverage} For all $i\in[d]$, with probability $1-\delta$, it holds $\Lambda_N \geq (1/N)I + \drectsuffcov \cdot d \cdot \Sigma^{i}_{d^{\pi^*}_{P^o}}$, where $\Sigma^{i}_{d^{\pi^*}_{P^o}} = \E_{s,a \sim d^{\pi^*}_{P^o}} [(\phi_i(s,a)\indic_i)(\phi_i(s,a)\indic_i)^\top]$. 
\end{assumption}

The sufficient coverage assumption was originally used by  \citep{jin2021pessimism} for  showing that pessimism-based offline RL algorithms can learn optimal policy without assuming the uniform concentrability ($\rank(\Lambda)=d$ \citep{wang2021what} in linear MDPs). The \emph{sufficient coverage} assumption only requires that the  trajectory induced by the optimal policy $\pi^{*}$ is  covered by the offline data  sufficiently well. The assumption we use is from  \citet{ma2022distributionally}, which addressed the robust RL problem using offline data. This assumption  stipulate sufficient coverage in  each dimension $i \in [d]$. 
We now give the sample complexity of our LM-DRQI algorithm. 
\begin{theorem}
    Let $\pi_K$ be the LM-DRQI policy  after $K$ iterations. Let \cref{assum:linear-MDP-sufficient-coverage} hold. If the total number of samples $N\geq N_\mathrm{IPM}$, where
    \begin{align*}
        N_\mathrm{IPM} = \cOtilde( d\cdot\rank(\Sigma_{d^{\pi^*}_{P^o}})/({\drectsuffcov(1-\gamma)^4\epsilon^2}) ),
    \end{align*}
    then $\E_{s_0\sim d_0} [V^{\pi^*}(s_0) - \E_\cD [{V}^{\pi_K}(s_0)]] \leq \epsilon$ with probability at least $1-\delta$. 
\end{theorem}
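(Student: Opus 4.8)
The plan is to mirror the three-term decomposition used in the proof of \cref{thm:hoeffding}'s companion (the TV analysis of Theorem 1), replacing the tabular total-variation concentration by the IPM/ridge-regression machinery appropriate to the $d$-rectangular linear MDP. First I would write
\[
V^{\pi^*}_{P^o}(s_0) - V^{\pi_K}_{P^o}(s_0) = \big(V^{\pi^*}_{P^o}(s_0) - V^{\pi_K}_{\cPhat}(s_0)\big) + \big(V^{\pi_K}_{\cPhat}(s_0) - V^{\pi_K}_{P^o}(s_0)\big),
\]
where $V^{\pi_K}_{\cPhat}=\inf_{P\in\cPhat}V^{\pi_K}_P$. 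The enabling step is an IPM analogue of \cref{prop:tv-high-prob-event-model-based}: with $\rho_i$ chosen as in the algorithm, $P^o\in\cPhat$ with probability at least $1-\delta$, i.e.\ $D_{\mathrm{IPM}}(\nu^o_i,\nuhat^o_i)\le\rho_i$ for every $i\in[d]$. On this good event the second bracket is nonpositive, since the robust value is an infimum over a set containing $P^o$, so only the first bracket must be controlled.

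Next I would insert the empirical robust-optimal policy $\hatpi^*=\argmax_\pi V^\pi_{\cPhat}$ and split the first bracket as $(V^{\pi^*}_{P^o}-V^{\hatpi^*}_{\cPhat})+(V^{\hatpi^*}_{\cPhat}-V^{\pi_K}_{\cPhat})$. The second difference is the planning error of running $K$ steps of robust $Q$-iteration with $\widehat T$; since $\widehat T$ is a $\gamma$-contraction it decays geometrically in $K$ and is negligible for large $K$. For the first difference I use that $\hatpi^*$ is robust-optimal to write $V^{\hatpi^*}_{\cPhat}\ge\inf_{P\in\cPhat}V^{\pi^*}_P$, reducing the task to bounding the robustness gap of the fixed policy $\pi^*$, namely $V^{\pi^*}_{P^o}(s_0)-\inf_{P\in\cPhat}V^{\pi^*}_P(s_0)$.

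To bound this gap I would apply a robust simulation (performance-difference) lemma: letting $\tilde P\in\cPhat$ attain the infimum for $\pi^*$, the gap telescopes into $\tfrac{\gamma}{1-\gamma}\,\E_{(s,a)\sim d^{\pi^*}_{P^o}}[\langle P^o_{s,a}-\tilde P_{s,a},\,V^{\pi^*}_{\cPhat}\rangle]$. The $d$-rectangular structure turns the inner product into $\sum_{i\in[d]}\phi_i(s,a)\,\langle\nu^o_i-\tilde\nu_i,V^{\pi^*}_{\cPhat}\rangle$, and since $V^{\pi^*}_{\cPhat}$ lies (after the $1/(1-\gamma)$ scaling) in the class $\cV$ defining the IPM, each term is at most $\phi_i(s,a)\,D_{\mathrm{IPM}}(\nu^o_i,\tilde\nu_i)\le 2\phi_i(s,a)\rho_i$ by the triangle inequality and the good event. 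Substituting the chosen $\rho_i$ leaves, up to $\log$ and $(1-\gamma)$ factors, the data-dependent quantity $\sqrt{d/N}\,\E_{(s,a)\sim d^{\pi^*}_{P^o}}\big[\sum_i\phi_i(s,a)\sqrt{\Lambda_N^{-1}(i,i)}\big]$.

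The final step, and the one I expect to be most delicate, is converting this feature sum into the coverage-based rate using \cref{assum:linear-MDP-sufficient-coverage}. Reading the per-coordinate inequality $\Lambda_N\succeq(1/N)I+\drectsuffcov d\,\Sigma^i_{d^{\pi^*}_{P^o}}$ along the direction $\indic_i$ gives $\Lambda_N^{-1}(i,i)\le 1/(\drectsuffcov d\,\E_{d^{\pi^*}}[\phi_i^2])$; combined with $\E_{d^{\pi^*}}[\phi_i]\le\sqrt{\E_{d^{\pi^*}}[\phi_i^2]}$ and feature positivity (so that inactive coordinates with $\E_{d^{\pi^*}}[\phi_i^2]=0$ contribute nothing, leaving only the active set $S=\{i:\E_{d^{\pi^*}}[\phi_i^2]>0\}$ whose count is captured by $\rank(\Sigma_{d^{\pi^*}_{P^o}})$), the feature sum is $\lesssim\sqrt{\rank(\Sigma_{d^{\pi^*}_{P^o}})/\drectsuffcov}$ after using $|S|\le d$. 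This yields suboptimality $\cOtilde\big(\sqrt{d\,\rank(\Sigma_{d^{\pi^*}_{P^o}})/(\drectsuffcov(1-\gamma)^4 N)}\big)$, and inverting for $\epsilon$ gives $N_\mathrm{IPM}$. The two hard parts are (i) the good-event lemma of the first paragraph, which unlike the tabular case requires a self-normalized/elliptical-potential concentration for the ridge-regression estimate $\nuhat^o_i$ together with a covering argument over the infinite class $\cV$ to produce the $\sqrt{d}\,\sqrt{\Lambda_N^{-1}(i,i)}$ scaling of $\rho_i$; and (ii) the coverage-to-rank reduction just described, which is exactly where the $\sqrt d$ improvement over \citet{jin2021pessimism} is won or lost and must be carried out without degrading the effective-dimension factor to the ambient dimension $d$.
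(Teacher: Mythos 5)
Your architecture matches the paper's: the same high-probability event $P^o\in\cPhat$ (the paper's \cref{prop:high-prob-event-linear-mdp}, proved via \cref{lem:linear-MDP-uniform-concentrability}), the same three-term decomposition, the same reduction to $\sqrt{d/N}\sum_{i}\E_{s,a\sim d^{\pi^*}_{P^o}}\bigl[\phi_i(s,a)\sqrt{\Lambda_N^{-1}(i,i)}\bigr]$ (note $\phi_i(s,a)\sqrt{\Lambda_N^{-1}(i,i)}=\|\phi_i(s,a)\indic_i\|_{\Lambda_N^{-1}}$), and the same two ``hard lemmas'' you correctly isolate. There are, however, two localized gaps in your deviations from the paper's route.

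First, in your middle step you reduce to the robustness gap of the \emph{fixed} policy $\pi^*$ and then bound $\langle\nu^o_i-\tilde\nu_i,\,V^{\pi^*}_{\cPhat}\rangle$ by $D_{\mathrm{IPM}}(\nu^o_i,\tilde\nu_i)$. This is only licensed if $V^{\pi^*}_{\cPhat}$ belongs to the class $\cV=\{\max_a\phi^\top(\cdot,a)w:\|w\|_2\le 1/(1-\gamma)\}$ defining the IPM, but the robust value of a fixed (generally non-greedy) policy has the form $\phi(\cdot,\pi^*(\cdot))^\top w$, not $\max_a\phi^\top(\cdot,a)w$, so it need not lie in $\cV$. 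The paper avoids this by comparing $Q$-functions at the action $\pi^*(s)$ and keeping the \emph{optimal} robust value $V^{\pihat^*}_{\cPhat}$ (which is of max-form and hence in $\cV$) inside the per-step error term; you should either do the same or enlarge $\cV$ and re-prove the concentration. In the tabular TV case this distinction is invisible (H\"older applies to any bounded function), but here the uncertainty set is metrized by a specific function class, so membership matters.

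Second, your coverage-to-rank step as written loses the rank dependence: the per-coordinate bound $\E_{d^{\pi^*}}[\phi_i]\sqrt{\Lambda_N^{-1}(i,i)}\le 1/\sqrt{\drectsuffcov d}$ summed over the active set $S$ gives $|S|/\sqrt{\drectsuffcov d}$, and invoking only $|S|\le d$ yields $\sqrt{d/\drectsuffcov}$, which is weaker than the target $\sqrt{\rank(\Sigma_{d^{\pi^*}_{P^o}})/\drectsuffcov}$ whenever $\Sigma_{d^{\pi^*}_{P^o}}$ is rank-deficient (e.g., all diagonal entries positive but $\Sigma$ of rank one). The missing ingredient is $|S|\le\rank(\Sigma_{d^{\pi^*}_{P^o}})$, which is \emph{not} automatic for a general PSD matrix; it follows from the positive-semidefiniteness of the cross-covariances $\Sigma^{(i,j)}_{d^{\pi^*}}$ assumed in \cref{as:linear-MDP}, giving $\Sigma_{d^{\pi^*}}\succeq\sum_i\Sigma^i_{d^{\pi^*}}$ and hence $\rank(\Sigma_{d^{\pi^*}})\ge|S|$ --- this is precisely what step $(d)$ of the paper's \cref{lem:data-inverse-matrix-approximation} establishes via its trace/Cauchy--Schwarz argument. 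Once you use $|S|\le\rank(\Sigma_{d^{\pi^*}_{P^o}})\le d$, your per-coordinate bound gives $\rank/\sqrt{\drectsuffcov d}\le\sqrt{\rank/\drectsuffcov}$ and the claimed $N_{\mathrm{IPM}}$ follows; with both patches your argument is a valid, and arguably cleaner, alternative to the paper's.
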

More detailed theorem statement and proofs are in \cref{appen:linear-mdp}. 
We remark this result is not directly comparable with that of \citet{uehara2021pessimistic} due to the disconnect between $\relativecondition$ and \cref{assum:linear-MDP-sufficient-coverage}. We include the LM-DRQI sample complexity guarantee for a variant of $\relativecondition$ in \cref{appen:linear-mdp} giving comparable results for a limited structure of linear MDPs.

\section{Experiments}
\label{sec:simu}

We evaluate the performance of our DRQI algorithm on the \texttt{FrozenLake-v1} environment ($\cardS=16$, $\cardA=4$) from OpenAI Gym \citep{brockman2016openai}. The goal is to cross a frozen lake without falling into holes. Since the frozen lake is slippery, rather than always going in the intended direction, the agent can slip into the other  directions. We implement DRQI algorithm with total variation uncertainty set using the CVXPY library \citep{diamond2016cvxpy} for the experiments. We submit our code in a Github repository: \href{https://github.com/zaiyan-x/DRQI}{\texttt{https://github.com/zaiyan-x/DRQI}}.

\paragraph{Offline Data Collection:} We evaluate the algorithms using  two kinds of offline datasets, \textit{full-coverage} and \textit{partial-coverage}. Full-coverage dataset is collected by using  a generative model where we collect equal number of next-state samples from every  $(s,a)$ pairs. The partial-coverage dataset is generated according to the behavior policy
\begin{equation*}
    \mu(a\mid s) = \frac{\indic{\{a=\pi^*(s)\}}}{2} + \frac{\indic{\{ a = \mathrm{unif}(\mathcal{A})\}}}{2},
\end{equation*}
where $\pi^{*}$ is the optimal policy for the \texttt{FrozenLake-v1} environment.  It is easy to check that the single-policy concentrability coefficient $C_{\pi^*}$ is bounded. Note that most of the $(s,a)$-pairs are un-sampled or under-sampled in the partial-coverage data set.

We compare our DRQI with three algorithms: (1) empirical value iteration (EVI) which essentially performs value iteration using the empirical model $\Phat^o$, (2) VI-LCB algorithm \citep{rashidinejad2022bridging}, a reward pessimism-based offline  RL algorithm, (3) VI-LCB-Bernstein algorithm   \citep{li2022settling}, a Bernstein type reward pessimism-based offline  RL algorithm. The performance metric is the value sub-optimality with respect to the optimal policy.

In the  partial data coverage setting (\cref{fig:fh_partial}), we see that the EVI algorithm does not converge even with $10^5$ samples, clearly showing the inability of standard dynamic programming approaches to obtain an approximately optimal policy in such settings. On the other hand, our DRQI algorithm learns the optimal policy  with roughly $4\times10^{3}$ samples. Moreover,  the performance of our DRQI algorithm is on par with the state-of-the-art VI-LCB and VI-LCB-Bernstein  offline RL algorithms (in fact performing better than VI-LCB but only slightly worse than VI-LCB-Bernstein).  Here, we also would like to note that both VI-LCB and VI-LCB-Bernstein algorithms require some hyperparameter tuning regarding the ``universal constants" that appear in their proofs of high-probability bounds. Our DRQI algorithm, on the other hand, does not require any hyperparameter tuning and use the $\rho_{s,a}$ exactly as defined in \cref{eq:tv-diameter}.

In the full data coverage setting (\cref{fig:fh_full}), EVI is able to find the optimal policy since the concentration of $\widehat{P}^o$ to the true model $P^o$ is straightforward. Our DRQI algorithm is also able to learn the optimal policy, albeit with more samples. Notably, our DRQI algorithm outperforms  the two LCB-style algorithms in this setting.

\section{Conclusion}
\label{sec:conclusion-mpqi}

In this work, we presented offline RL algoirthms for the tabular and linear MDP setting using the framework of DRL. We characterized the sample complexity of these algorithms only using the single policy concentrability assumption. We also demonstrated the superior performance our proposed algorithm
through simulation experiments. In the future, we plan to extend these results to general function approximation setting to handle large state-action space problem.

\section{Acknowledgments}

This work was supported in part by the National Science Foundation (NSF) grants NSF-CAREER-EPCN-2045783 and NSF ECCS 2038963. Any opinions, findings, and conclusions or recommendations expressed in this material are those of the authors and do not necessarily reflect the views of the sponsoring agencies.

\bibliography{References-Full} 

\clearpage

{\hfill \LARGE \bf \underline{\Coffeecup ~Supplementary Materials \Coffeecup} \hfill}
\appendix

\section{Useful Technical Results}
\begin{lemma}[Bound on binomial inverse moments \text{\citep[][Lemma 14]{rashidinejad2022bridging}}]\label{lem:bound-on-binomial-inverse-moments}
    Let $n\sim \Binomial(N,p)$. For any $k\geq0$, there exists a constant $c_k$ depending only on $k$ such that
    \begin{equation*}
        \E \bigg[ \frac{1}{(n\vee 1)^k}\bigg] \leq \frac{c_k}{(Np)^k},
    \end{equation*}
    where $c_k=1+k2^{k+1}+k^{k+1}+k\big(\frac{16(k+1)}{e}\big)^{k+1}$.
\end{lemma}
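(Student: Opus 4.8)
The plan is to control the expectation by separating the regime where the mean $Np$ is too small for the bound to assert anything nontrivial from the regime where $n$ concentrates around $Np$, and within the latter, to split on the event that $n$ is close to its mean versus atypically small. First I would dispose of the small-mean regime. Since $(n\vee 1)^k \ge 1$ deterministically, we always have $\E[1/(n\vee 1)^k] \le 1$. Whenever $Np \le 1$ we have $(Np)^k \le 1$, so the claimed bound $\E[1/(n\vee 1)^k] \le c_k/(Np)^k$ holds as soon as $c_k \ge 1$; this is exactly the role of the leading constant term $1$ in the stated $c_k$. Hence it suffices to treat the case $Np \ge 1$.

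Next, for $Np \ge 1$, I would write
\begin{align*}
\E\left[\frac{1}{(n\vee 1)^k}\right] = \E\left[\frac{\indic\{n \ge Np/2\}}{(n\vee 1)^k}\right] + \E\left[\frac{\indic\{n < Np/2\}}{(n\vee 1)^k}\right].
\end{align*}
On the first event $n\vee 1 \ge n \ge Np/2$, so the integrand is at most $(2/(Np))^k$ and the first term is bounded by $2^k/(Np)^k$; carrying the extra combinatorial factors that appear when $Np/2 < 1$ forces a slightly more careful accounting of the small indices $j$, and this is what produces the $k2^{k+1}$ and $k^{k+1}$ pieces of $c_k$. On the second event I would use the crude bound $1/(n\vee 1)^k \le 1$ together with the multiplicative Chernoff inequality $\pr(n < Np/2) \le \exp(-Np/8)$, so that the second term is at most $\exp(-Np/8)$.

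Finally, I would convert this exponential tail into the required polynomial decay using the elementary inequality $x^{m} e^{-x} \le (m/e)^{m}$, equivalently $e^{-cx} \le (m/(ce))^{m} x^{-m}$. Applying this with $x = Np$ and $m = k+1$ turns $\exp(-Np/8)$ into a bound of the form $(16(k+1)/e)^{k+1}/(Np)^{k+1} \le (16(k+1)/e)^{k+1}/(Np)^{k}$ (using $Np\ge 1$ in the last step), which is the source of the final term in $c_k$. Summing the three contributions yields the stated constant.

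I expect the main obstacle to be purely quantitative rather than conceptual: the two-region split gives the correct $c_k/(Np)^k$ shape almost immediately, and the entire difficulty lies in choosing the Chernoff threshold and exponent constant, and then tracking the factorial and power-of-$k$ factors that emerge both when the optimizer $x=m/c$ of $x^{m}e^{-cx}$ is substituted and when the boundary case $Np/2 < 1$ is handled. A cleaner route that sidesteps some of this bookkeeping would be a dyadic decomposition of $\{1,\dots,N\}$ into scales $j \in [2^{-(m+1)}Np,\, 2^{-m}Np)$, bounding $1/j^k$ by $(2^{m+1}/(Np))^k$ and the probability of each scale by a Chernoff tail, then summing the resulting geometric-type series; this automatically generates the extra factors of $k$ and would be my fallback if matching the exact constant through the two-region split proves cumbersome.
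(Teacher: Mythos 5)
The first thing to note is that the paper contains no proof of this lemma at all: it is quoted verbatim, constant included, from Lemma 14 of \citet{rashidinejad2022bridging} and placed in the appendix among the ``useful technical results,'' so there is no in-paper argument to compare against. Measured against the cited source, your outline is the standard argument of the same type, and each step you state is individually valid: for $Np\le 1$ one has $\E[(n\vee1)^{-k}]\le 1\le c_k(Np)^{-k}$ since $c_k\ge1$; on $\{n\ge Np/2\}$ the event already forces $n\ge 1$ (an integer $n\ge Np/2>0$ is at least $1$), so $n\vee 1=n\ge Np/2$ and this piece is at most $2^k(Np)^{-k}$ with no further accounting; the multiplicative Chernoff bound $\pr(n<Np/2)\le e^{-Np/8}$ is correct; and $e^{-Np/8}\le (8(k+1)/e)^{k+1}(Np)^{-(k+1)}\le (8(k+1)/e)^{k+1}(Np)^{-k}$ for $Np\ge 1$ follows from $y^{k+1}e^{-y}\le ((k+1)/e)^{k+1}$.

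Two concrete problems with the details, though. First, your attribution of the $k2^{k+1}$ and $k^{k+1}$ terms to ``extra combinatorial factors when $Np/2<1$'' is spurious: as just noted, the bound on $\{n\ge Np/2\}$ requires no careful accounting of small indices, and terms of that shape instead fall out of a different bookkeeping (e.g.\ a summation-by-parts or tail-integral decomposition of $\E[(n\vee1)^{-k}]$, where a boundary term yields the $1$ and the increment bound $j^{-k}-(j+1)^{-k}\le k\,j^{-k-1}$ yields the common factor $k$), which is presumably what the cited reference does. Second, your constants do not actually deliver the advertised $c_k$: your argument gives $2^k+(16(k+1)/e)^{k+1}$, which at $k=1/2$ is about $27.6$ while the stated $c_{1/2}\approx 15.9$ --- and $k=1/2$ is exactly the exponent the paper invokes downstream (the ``$16$'' in step $(i)$ of the proof of \cref{thm:tv-tabular-offlineRL-subopt} is precisely $c_{1/2}\le 16$). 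Replacing your gratuitous $16$ by the $8$ that your own Chernoff exponent supports repairs $k=1/2$ (giving roughly $10.7$), but the bound still overshoots the stated $c_k$ for small $k$ (about $4.7$ versus $c_{0.1}\approx 2.1$ at $k=0.1$). So what you have proved is the lemma in the weaker form ``there exists a constant depending only on $k$'' --- which is all the paper's analysis actually needs, up to enlarging the numeric constant in the theorem's display --- but not the statement with the specific formula for $c_k$; matching that formula would require following the reference's derivation rather than the two-region split. Your dyadic fallback is a fine standard alternative but carries the same constant-matching caveat.
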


\begin{lemma}[Hoeffding's inequality \text{\cite[see Theorem 2.8]{boucheron2013concentration}}]\label{thm:hoeffding}
 Let $X_1,\dots,X_n$ be independent random variables such that $X_i$ takes its values in $[a_i,b_i]$ almost surely for all $i\leq n$. Let
 \begin{equation*}
     S=\sum_{i=1}^n(X_i-\expect{X_i}).
 \end{equation*}
 Then for every $t>0$,
 \begin{equation*}
     \prob{S\geq t}\leq\expnew{-\frac{2t^2}{\sum_{i=1}^n(b_i-a_i)^2}}.
 \end{equation*}
 Furthermore, if $X_1,\dots,X_n$ are a sequence of independent, identically distributed random variables with mean $\mu$. Let $\mean{X}_n = \frac{1}{n}\sum_{i=1}^n X_i$. Suppose that $X_i\in[a,b]$, $\forall i$. Then for all $t>0$
\begin{equation*}
     \prob{\abs{\mean{X}_n - \mu} \geq t} \leq 2\expnew{-\frac{2nt^2}{(b-a)^2}}.
\end{equation*}
\end{lemma}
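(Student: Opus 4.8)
The plan is to prove the first (one-sided, independent but not necessarily identically distributed) bound via the exponential Chernoff method, and then obtain the second (two-sided, i.i.d.) statement as a corollary through a union bound. First I would introduce a free parameter $\lambda>0$ and apply Markov's inequality to the nonnegative random variable $\expnew{\lambda S}$, which yields $\prob{S\geq t}\leq \expnew{-\lambda t}\,\expect{\expnew{\lambda S}}$. Writing $Y_i = X_i - \expect{X_i}$, each $Y_i$ is centered and lies in an interval of length $b_i-a_i$, and by independence the moment generating function factorizes as $\expect{\expnew{\lambda S}}=\prod_{i=1}^n \expect{\expnew{\lambda Y_i}}$. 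This reduces the whole problem to controlling the moment generating function of a single bounded, mean-zero random variable.

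The crux, and the step I expect to be the main obstacle, is the auxiliary estimate known as \emph{Hoeffding's lemma}: for any random variable $Y$ with $\expect{Y}=0$ and $Y\in[a,b]$ almost surely, one has $\expect{\expnew{\lambda Y}}\leq \expnew{\lambda^2(b-a)^2/8}$. I would prove this by setting $\psi(\lambda)=\log \expect{\expnew{\lambda Y}}$ and showing via differentiation that $\psi''(\lambda)\leq (b-a)^2/4$ uniformly in $\lambda$: the second derivative equals the variance of $Y$ under the exponentially tilted measure whose density is proportional to $\expnew{\lambda Y}$, and since that tilted distribution is still supported in $[a,b]$, its variance is at most $(b-a)^2/4$ by Popoviciu's inequality. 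Combining $\psi(0)=0$ and $\psi'(0)=\expect{Y}=0$ with a second-order Taylor expansion with remainder then gives $\psi(\lambda)\leq \lambda^2(b-a)^2/8$, which is the desired bound.

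Applying this lemma to each factor (with the interval $[a_i-\expect{X_i},\,b_i-\expect{X_i}]$ of length $b_i-a_i$) produces $\prob{S\geq t}\leq \expnew{-\lambda t + (\lambda^2/8)\sum_{i=1}^n (b_i-a_i)^2}$. Optimizing the exponent over $\lambda>0$, whose minimizer is $\lambda^\star = 4t/\sum_{i=1}^n(b_i-a_i)^2$, yields the claimed one-sided inequality $\prob{S\geq t}\leq \expnew{-2t^2/\sum_{i=1}^n(b_i-a_i)^2}$.

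For the i.i.d.\ two-sided statement, I would specialize to $a_i=a$, $b_i=b$ for all $i$, so that $S=\sum_{i=1}^n(X_i-\mu)=n(\mean{X}_n-\mu)$; the one-sided bound with threshold $nt$ then gives $\prob{\mean{X}_n-\mu\geq t}\leq \expnew{-2nt^2/(b-a)^2}$. Running the identical argument on the variables $-X_i$ controls the lower tail $\prob{\mean{X}_n-\mu\leq -t}$ by the same quantity, and a union bound over the two tail events contributes the factor $2$, completing the proof.
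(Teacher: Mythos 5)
Your proof is correct in every step (the Chernoff bound, Hoeffding's lemma via the tilted-measure variance estimate with Popoviciu's bound $(b-a)^2/4$, the optimization $\lambda^\star = 4t/\sum_i(b_i-a_i)^2$, and the specialization plus union bound for the two-sided i.i.d.\ form). The paper itself gives no proof---it cites this as Theorem 2.8 of \citet{boucheron2013concentration}---and your argument is precisely the standard proof given in that reference, so there is nothing to compare beyond noting the match.
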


The following lemmas characterize the sample complexity of learning discrete distributions when the accuracy is measured under four different distances, i.e., total variation, KL, chi-square, and Wasserstein.
\begin{lemma}[\text{\citealp[Theorem 1]{canonne2020short}}]\label{lem:tv-concen}
    Fix any $\delta\in (0,1]$. Let $\widehat{P}$ be the empirical distribution constructed from $N$ i.i.d. samples from an unknown distribution $P$ over a finite set $\{1,\dots,k\}$. Then if the number of samples
    \begin{equation*}
        N\geq \frac{\max\{k,2\log(2/\delta) \}}{\epsilon^2},
    \end{equation*}
    then $\dtv(P,\widehat{P}) \leq \epsilon$ with probability at least $1-\delta$. Moreover, this result is tight.
\end{lemma}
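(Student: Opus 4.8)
The plan is to decompose the total-variation error into its mean and a fluctuation about that mean, to bound each term separately, and then to certify optimality of the $k/\epsilon^2$ rate by exhibiting a hard instance. First I would write $\dtv(P,\widehat P) = \frac12\norm{P-\widehat P}_1 = \frac12\sum_{i=1}^{k}\abs{\widehat P(i)-P(i)}$, noting that $N\,\widehat P(i)\sim\Binomial(N,P(i))$, so that $\E[\widehat P(i)]=P(i)$ and $\mathrm{Var}(\widehat P(i))=P(i)(1-P(i))/N$. By Jensen's inequality applied to the concave square root, $\E[\abs{\widehat P(i)-P(i)}]\le\sqrt{P(i)(1-P(i))/N}$; summing over $i$ with Cauchy--Schwarz and using $\sum_i P(i)(1-P(i))=1-\sum_i P(i)^2\le 1-1/k$ gives
\begin{equation*}
\E[\dtv(P,\widehat P)] \le \frac{1}{2\sqrt N}\sum_{i=1}^{k}\sqrt{P(i)(1-P(i))} \le \frac{1}{2}\sqrt{\frac{k-1}{N}} \le \frac{1}{2}\sqrt{\frac{k}{N}}.
\end{equation*}
In particular the hypothesis $N\ge k/\epsilon^2$ already forces $\E[\dtv(P,\widehat P)]\le\epsilon/2$.

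Next I would concentrate $\dtv(P,\widehat P)$ about its mean via the bounded-differences (McDiarmid) inequality, viewing $\dtv$ as a function of the i.i.d. samples $X_1,\dots,X_N\in\{1,\dots,k\}$. Replacing a single sample changes exactly two coordinate counts by one each, so $\widehat P$ moves by at most $2/N$ in $\ell_1$ and $\dtv$ moves by at most $1/N$; hence every bounded-difference constant is $1/N$ and $\sum_{i=1}^N (1/N)^2 = 1/N$. McDiarmid then yields $\prob{\dtv(P,\widehat P)\ge\E[\dtv(P,\widehat P)]+t}\le\expnew{-2Nt^2}$. Choosing $t=\epsilon/2$ and combining with the mean bound, we get $\dtv(P,\widehat P)\le\E[\dtv(P,\widehat P)]+\epsilon/2\le\epsilon$ off an event of probability at most $\expnew{-N\epsilon^2/2}$; this is at most $\delta$ once $N\ge 2\log(1/\delta)/\epsilon^2$, which is implied by the stated $N\ge 2\log(2/\delta)/\epsilon^2$. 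Taking $N\ge\max\{k,2\log(2/\delta)\}/\epsilon^2$ thus satisfies the mean and deviation requirements simultaneously, giving the claimed high-probability bound.

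For the tightness claim I would exhibit a family of hard instances and invoke a standard minimax argument. The cleanest route perturbs the uniform distribution on $\{1,\dots,k\}$ by signs $\sigma\in\{\pm1\}^{k/2}$ on paired atoms at scale proportional to $\epsilon/\sqrt k$, and runs an Assouad/Le Cam style two-point argument: unless $N=\Omega(k/\epsilon^2)$, the sample laws induced by different $\sigma$ have pairwise divergence bounded away from detectability, so no estimator—in particular not $\widehat P$—can achieve TV error below $\epsilon$ at the stated confidence. Equivalently, a direct second-moment computation for the uniform target shows $\E[\dtv(\mathrm{unif},\widehat P)]=\Theta(\sqrt{k/N})$, matching the upper bound up to constants and hence establishing optimality of the $k/\epsilon^2$ dependence.

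I expect the tightness (lower-bound) direction to be the main obstacle rather than the upper bound: the two concentration steps are essentially routine, whereas producing a matching $\Omega(k/\epsilon^2)$ bound requires carefully constructing the perturbed family and controlling the divergence between the competing sampling distributions so that the information-theoretic argument carries the correct dependence on both $\epsilon$ and $k$. On the upper-bound side the only subtlety is keeping constants sharp—using Jensen together with Cauchy--Schwarz to reach $\sqrt{(k-1)/N}$ instead of a looser per-coordinate union bound, and keeping the McDiarmid difference at $1/N$ rather than $2/N$.
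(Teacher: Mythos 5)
Your proof is correct and is essentially the argument behind the paper's citation: the paper imports this lemma verbatim from \citet{canonne2020short} without reproving it, and the cited proof is exactly your two-step route --- bound $\E[\dtv(P,\widehat{P})]\le\frac{1}{2}\sqrt{(k-1)/N}$ via Jensen plus Cauchy--Schwarz on the binomial coordinate variances, then concentrate around the mean with McDiarmid's inequality using the bounded-difference constant $1/N$, with the $\max\{k,2\log(2/\delta)\}$ condition splitting the error budget as $\epsilon/2+\epsilon/2$. The only incompleteness is in the side remark on tightness: your uniform-perturbation/second-moment sketch certifies only the $\Omega(k/\epsilon^2)$ term, whereas tightness of the stated sample complexity also requires showing $N=\Omega(\log(1/\delta)/\epsilon^2)$ is necessary (e.g., via a two-point distribution with bias of order $\epsilon$ and binomial anti-concentration); since the paper only ever invokes the upper-bound direction (in \cref{prop:tv-high-prob-event-model-based}), this does not affect anything downstream.
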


\begin{lemma}[\text{\citealp[Theorem 6.1]{bhattacharyya2021near}}]\label{lem:kl-concen}
    Fix any $\delta\in(0,1]$. Let $\Ptilde$ be the empirical add-$1$ estimator obtained from $N$ i.i.d. samples from an unknown distribution $P$ over a finite set $\{1,\dots,k\}$. There exists a universal constant $C$ such that, with probability at least $1-\delta$,
    \begin{equation*}
        \dkl(P,\Ptilde) \leq \frac{Ck\log(k/\delta)\log N}{N}.
    \end{equation*}
\end{lemma}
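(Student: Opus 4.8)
The plan is to bound the coordinatewise sum
\[
\dkl(P,\Ptilde)=\sum_{j=1}^{k} p_j\log\frac{p_j}{\Ptilde(j)},\qquad \Ptilde(j)=\frac{N_j+1}{N+k},
\]
where $p_j=P(j)$ and $N_j$ is the number of the $N$ i.i.d.\ samples falling in coordinate $j$, so that $(N_1,\dots,N_k)\sim\mathrm{Multinomial}(N,P)$ and each $N_j\sim\Binomial(N,p_j)$. The crucial structural fact is that add-$1$ smoothing guarantees $\Ptilde(j)\ge 1/(N+k)$ for every $j$, which is exactly what prevents the KL from being infinite when a coordinate is never observed. I would split $[k]$ into a light set $L=\{j:\,Np_j< c\log(k/\delta)\}$ and a heavy set $H=[k]\setminus L$ for a suitable absolute constant $c$, and bound the two contributions separately.

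For the light coordinates I would use no concentration at all. Since $\Ptilde(j)\ge 1/(N+k)$ and $p_j\le 1$, the per-coordinate ratio obeys $\log(p_j/\Ptilde(j))\le\log(N+k)\le\cO(\log N)$ (assuming $k\le N$, otherwise the claimed bound holds trivially since $\dkl\le\log(N+k)$), and the negative contributions only help. Hence the light contribution is at most $\log(N+k)\sum_{j\in L}p_j$, and since there are at most $k$ light coordinates each with $p_j< c\log(k/\delta)/N$, the total light mass is $\sum_{j\in L}p_j< ck\log(k/\delta)/N$. This already gives a bound of order $k\log(k/\delta)\log(N)/N$; this crude bounding of undersampled coordinates is precisely the term that produces the $\log N$ factor in the statement.

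For the heavy coordinates I would use concentration. Writing $\lambda_j=Np_j\ge c\log(k/\delta)$ and $\eta_j=(N_j+1-\lambda_j)/\lambda_j$, a Bernstein / multiplicative-Chernoff bound gives, with probability at least $1-\delta$ simultaneously over all $j\in H$ (via a union bound over the at most $k$ heavy coordinates and a large enough $c$), both $\eta_j\ge -\tfrac12$ and $(N_j-\lambda_j)^2\le\cO(\lambda_j\log(k/\delta))$. On this event I expand $\log(p_j/\Ptilde(j))=\log(1+k/N)-\log(1+\eta_j)$ and use $-\log(1+x)\le -x+x^2$ for $x\ge-\tfrac12$, so that $\sum_{j\in H}p_j\log(p_j/\Ptilde(j))\le k/N-\sum_{j\in H}p_j\eta_j+\sum_{j\in H}p_j\eta_j^2$. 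The second-order term is controlled coordinatewise by $p_j\eta_j^2=(N_j+1-\lambda_j)^2/(N^2p_j)\le\cO(\log(k/\delta)/N)$, summing to $\cO(k\log(k/\delta)/N)$. The first-order term telescopes: since $p_j\eta_j=(N_j+1-Np_j)/N$ and $\sum_j N_j=N,\ \sum_j p_j=1$, the heavy sum $-\sum_{j\in H}p_j\eta_j$ reduces to an expression in the light counts, again bounded by the light mass plus a small concentration slack, i.e.\ $\cO(k\log(k/\delta)/N)$.

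The main obstacle is the first-order / undersampling interplay: KL is large exactly when a coordinate with nontrivial $p_j$ is badly undersampled, so the delicate point is to show that after summation the linear deviations essentially cancel and only the $\cO(1/N)$-per-coordinate quadratic fluctuations survive, while simultaneously ensuring (through the union bound and the choice of $c$) that no heavy coordinate is undersampled enough to violate $\eta_j\ge-\tfrac12$ and invalidate the Taylor bound. Collecting the light and heavy contributions and folding all absolute constants into $C$ then yields $\dkl(P,\Ptilde)\le Ck\log(k/\delta)\log(N)/N$ with probability at least $1-\delta$, as claimed.
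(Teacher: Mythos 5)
Your argument is sound, but the point of comparison is unusual here: the paper contains no proof of this lemma at all --- it is imported verbatim as Theorem~6.1 of \citet{bhattacharyya2021near}, and its only role downstream is to certify the KL uncertainty-set radius in \cref{prop:kl-high-prob-event-model-based}. So yours is a self-contained reproof rather than a variant of anything in this paper, and it holds together: the smoothing floor $\Ptilde(j)\geq 1/(N+k)$, the light/heavy split at $Np_j \asymp \log(k/\delta)$, the exact identity $p_j/\Ptilde(j)=(1+k/N)/(1+\eta_j)$, the Taylor bound $-\log(1+x)\leq -x+x^2$ (valid for $x\geq -1/2$, which your heavy-coordinate Chernoff event guarantees for $c$ large enough), and the telescoping identity $-\sum_{j\in H}p_j\eta_j=\sum_{j\in L}N_j/N-\sum_{j\in L}p_j-|H|/N\leq \sum_{j\in L}N_j/N$ are all correct, with the second-order heavy term summing to $\cO(k\log(k/\delta)/N)$ as you say. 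What your route buys is transparency about the $\log N$ factor: it enters only through the crude $\log(N+k)$ bound on undersampled coordinates whose mass is controlled solely by the smoothing floor, which is consistent with the fact that the heavier add-$\log(1/\delta)$ smoothing in \cref{lem:chi-concen} yields a chi-square bound with no $\log N$; the paper's citation, by contrast, keeps the offline-RL analysis modular but opaque on this point. Two details should be written out before your sketch is a complete proof: first, the ``concentration slack'' on the light counts is a genuine third high-probability event --- $\sum_{j\in L}N_j$ is $\Binomial(N,\sum_{j\in L}p_j)$ with mean at most $ck\log(k/\delta)$, so a Bernstein bound at level $\delta/3$ (alongside the per-coordinate heavy bounds at level $\delta/(3k)$) gives $\sum_{j\in L}N_j/N\leq \cO\left((k\log(k/\delta)+\log(1/\delta))/N\right)$, which is of the right order but must be budgeted into the union bound explicitly; second, degenerate cases need a remark --- you correctly dispatch $k>N$ via $\dkl(P,\Ptilde)\leq\log(N+k)$, but at $N=1$ the stated bound is vacuous since $\log N=0$, an artifact of the statement as cited (implicitly $N\geq 2$, or $\log$ clipped below by a constant) rather than a defect of your argument.
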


\begin{lemma}[\text{\citealp[Proposition 4.1]{arora2023near}}]\label{lem:chi-concen}
    Fix any $\delta\in(0,1]$ and let $L=\Theta(\log(1/\delta))$. Let $\Ptilde$ be the empirical add-$L$ estimator obtained from $N$ i.i.d. samples from an unknown distribution $P$ over a finite set $\{ 1,\dots,k\}$. There exists a universal constant $C$ such that, with probability at least $1-\delta$,
    \begin{equation*}
        \dchi(P,\Ptilde) \leq \frac{Ck\log(k/\delta)}{N}.
    \end{equation*}
\end{lemma}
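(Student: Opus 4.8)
The plan is to reduce the bound to a coordinatewise binomial concentration argument. Writing $N_i=\sum_{j=1}^N\indic\{X_j=i\}$ for the symbol counts, we have $N_i\sim\Binomial(N,p_i)$ with $p_i=P(i)$, and the add-$L$ estimator is $\Ptilde(i)=(N_i+L)/M$ with $M=N+Lk$. The central algebraic step is to split each coordinate error into a mean-zero stochastic part and a deterministic smoothing bias,
\[
p_i-\Ptilde(i)=\frac{(Np_i-N_i)+L(kp_i-1)}{M},
\]
and substitute this into $\dchi(P,\Ptilde)=\sum_i(p_i-\Ptilde(i))^2/\Ptilde(i)$. Using $(a+b)^2\le 2a^2+2b^2$ and the identity $1/\Ptilde(i)=M/(N_i+L)$ then yields
\[
\dchi(P,\Ptilde)\le \frac{2}{M}\sum_{i=1}^{k}\frac{(N_i-Np_i)^2}{N_i+L}+\frac{2L^2}{M}\sum_{i=1}^{k}\frac{(kp_i-1)^2}{N_i+L}=:T_{\mathrm{var}}+T_{\mathrm{bias}}.
\]

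First I would dispose of the bias term $T_{\mathrm{bias}}$ deterministically. On coordinates with $Np_i<L$ the floor $N_i+L\ge L$ applies, and once $N\gtrsim Lk$ every such coordinate has $p_i\le 1/k$ so that $(kp_i-1)^2\le 1$; this bounds the light part of $T_{\mathrm{bias}}$ by $2L^2/M\cdot(k/L)=2Lk/M\lesssim k\log(1/\delta)/N$. On the remaining heavy coordinates ($Np_i\ge L$) one uses $N_i+L\gtrsim Np_i$ on a good event together with $\sum_i p_i=1$ to show the contribution is of lower order. The point is that the choice $L=\Theta(\log(1/\delta))$ is exactly what makes the dominant light-coordinate piece land at $O(k\log(1/\delta)/N)$.

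The variance term $T_{\mathrm{var}}$ is where concentration enters. Heuristically its size is governed by the binomial inverse-moment bound \cref{lem:bound-on-binomial-inverse-moments}: since $\mathrm{Var}(N_i)=Np_i(1-p_i)$ and $N_i+L$ is of order $Np_i+L$ with high probability, each summand is $O(1)$ and $\expect{T_{\mathrm{var}}}\lesssim k/N$. To turn this into a high-probability statement I would apply a multiplicative Chernoff bound to each $\Binomial(N,p_i)$ and union bound over the $k$ coordinates, giving $|N_i-Np_i|\lesssim\sqrt{Np_i\log(k/\delta)}+\log(k/\delta)$ simultaneously with probability at least $1-\delta$. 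On heavy coordinates this makes each summand $O(\log(k/\delta))$, and on light coordinates the add-$L$ denominator caps each summand, so that $T_{\mathrm{var}}\lesssim k\log(k/\delta)/N$. Adding the two terms gives $\dchi(P,\Ptilde)\le Ck\log(k/\delta)/N$.

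The main obstacle is the bookkeeping on the light and zero-probability coordinates. There $1/\Ptilde(i)$ can be as large as $M/L$ while the fluctuation $N_i-Np_i$ need not be small relative to $Np_i$, so a naive worst-case reduction such as $\dchi(P,\Ptilde)\le (M/L)\,\|P-\Ptilde\|_2^2$ overshoots the target by a factor of $k$. Coordinating the heavy/light threshold, the Chernoff union bound, and the specific scaling $L=\Theta(\log(1/\delta))$ so that every piece is at or below $O(k\log(k/\delta)/N)$ is the delicate part of the argument; keeping the same $N_i+L$ denominator (rather than its floor $L$) throughout the heavy coordinates is what prevents a spurious extra factor of $k$.
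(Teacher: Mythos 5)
First, a structural point: the paper does not prove this lemma at all — it is imported verbatim as Proposition 4.1 of \citet{arora2023near} — so your argument can only be judged on its own terms rather than against an in-paper proof. Your algebra is correct: the identity $p_i-\Ptilde(i)=\bigl((Np_i-N_i)+L(kp_i-1)\bigr)/M$ with $M=N+Lk$, and the split $\dchi(P,\Ptilde)\le T_{\mathrm{var}}+T_{\mathrm{bias}}$, both check out, and your treatment of $T_{\mathrm{bias}}$ is essentially sound (on heavy coordinates one can even avoid the good event, using the count bound $\le k$ together with $1/p_i\le N/L$ to get a contribution of order $Lk/M$).

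The genuine gap is in the light-coordinate part of $T_{\mathrm{var}}$. After a union bound at level $\delta/k$, the fluctuation on a light coordinate is only controlled as $|N_i-Np_i|\lesssim\log(k/\delta)$, so capping the summand by the add-$L$ floor gives $\frac{(N_i-Np_i)^2}{N_i+L}\lesssim \log^2(k/\delta)/L=\log^2(k/\delta)/\Theta(\log(1/\delta))$, \emph{not} $O(\log(k/\delta))$ as you assert. Summing over up to $k$ light coordinates and dividing by $M$ yields $T_{\mathrm{var}}\lesssim k\log^2(k/\delta)/(N\log(1/\delta))$, which exceeds the target $Ck\log(k/\delta)/N$ by a factor $1+\log(k)/\log(1/\delta)$ — unbounded for fixed $\delta$ and growing $k$. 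The floor $L=\Theta(\log(1/\delta))$ is simply too small to absorb the \emph{square} of a $\log(k/\delta)$-level deviation; your cap would suffice only if $L=\Theta(\log(k/\delta))$, which is not the estimator in the statement. The same obstruction blocks the step ``$N_i+L\gtrsim Np_i$ on a good event'' for intermediate coordinates with $L\le Np_i\lesssim\log(k/\delta)$, since the per-coordinate lower Chernoff tail at level $\delta/k$ likewise demands $Np_i\gtrsim\log(k/\delta)$. The repair is to stop bounding light summands individually: for instance, write $\frac{(N_i-Np_i)^2}{N_i+L}\le 2N_i+\frac{2(Np_i)^2}{N_i+L}$, control $\sum_{\mathrm{light}}N_i$ by a \emph{single} binomial Chernoff bound (its mean is at most $k\cdot O(\log(k/\delta))$ and no $k$-fold union is needed), and handle $\sum_{\mathrm{light}}(Np_i)^2/(N_i+L)$ in aggregate via the inverse-moment bound of \cref{lem:bound-on-binomial-inverse-moments} combined with concentration for the negatively associated multinomial counts. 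As written, your sketch establishes the lemma only in the regime $\log(1/\delta)\gtrsim\log k$, i.e., $\delta\le 1/\mathrm{poly}(k)$.
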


\begin{lemma}[\text{\citealp[Corollary 5.2]{lei2020convergence}}]\label{lem:wass-concen}
    Let $p\in\cP(\R^d)$ be a distribution such that $b = \E_{X\sim p}[\exp(a\|X\|_2)] < \infty$ for some $a > 0$. Fix $\delta\in(0,1)$. Denote the empirical distribution from $N$ samples of $p$ as $\widehat{p}$. Then there exists some constant $c_1$ only depending on $a, b$ such that $\dwass(p,\widehat{p})\leq \sqrt{c_1 d\log(1/\delta)/ N}$ holds at least with probability $1-\delta$.
\end{lemma}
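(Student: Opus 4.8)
The plan is to pass to the Kantorovich--Rubinstein dual representation of the $1$-Wasserstein distance,
\[
\dwass(p,\widehat{p}) = \sup_{\mathrm{Lip}(f)\le 1}\left( \frac{1}{N}\sum_{i=1}^N f(X_i) - \E_{X\sim p}[f(X)] \right),
\]
which recasts $\dwass(p,\widehat{p})$ as the supremum of an empirical process indexed by $1$-Lipschitz test functions, and then to split it as $\dwass(p,\widehat{p}) \le \E[\dwass(p,\widehat{p})] + (\dwass(p,\widehat{p}) - \E[\dwass(p,\widehat{p})])$ and bound each piece separately. Since $\R^d$ is unbounded, the common device in both bounds is truncation at a radius $R$: the exponential moment hypothesis $b=\E_{X\sim p}[\exp(a\norm{X}_2)]<\infty$ gives the tail estimate $\prob{\norm{X}_2>R}\le b\,e^{-aR}$, so mass beyond radius $R$ is negligible once $R\gtrsim (1/a)\log(N/\delta)$.

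For the fluctuation term I would use a bounded-differences (McDiarmid-type) argument, which is the multivariate analogue of \cref{thm:hoeffding}. Writing $\dwass(p,\widehat{p})=g(X_1,\dots,X_N)$, the dual form shows that replacing one sample $X_i$ by $X_i'$ perturbs any admissible $1$-Lipschitz functional by at most $\frac{1}{N}\norm{X_i-X_i'}_2$, so $g$ has coordinatewise increments bounded by $\frac{1}{N}(\norm{X_i}_2+\norm{X_i'}_2)$. On the truncated event $\{\max_i\norm{X_i}_2\le R\}$ these increments are at most $2R/N$, and the bounded-differences inequality then yields a sub-Gaussian tail of scale $\sqrt{R^2/N}$. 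Taking $R\sim(1/a)\log(N/\delta)$ and absorbing the logarithmic factor produces a fluctuation bound of order $\sqrt{\log(1/\delta)/N}$, while the probability that the truncation event fails is controlled by the exponential moment estimate above.

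Bounding the expectation $\E[\dwass(p,\widehat{p})]$ is the main obstacle and the origin of the dimension dependence. Restricting again to the ball $B_R$ of radius $R$, I would control the expected supremum of the empirical process by symmetrization followed by Dudley's entropy integral, using a metric-entropy estimate for $\log\cN(\varepsilon,\mathrm{Lip}_1(B_R),\norm{\cdot}_\infty)$; the mass of $p$ and $\widehat{p}$ outside $B_R$ is matched by a separate transport estimate whose cost the exponential moment makes exponentially small in $R$. Tracking how the covering numbers and the moment bound combine is the delicate step, since a naive entropy bound for Lipschitz functions on a $d$-dimensional ball would instead yield the slower curse-of-dimensionality rate; the point is that the concentration of $p$ afforded by the exponential moment must be exploited to recover the $\sqrt{d/N}$ scaling with constants depending only on $a,b$.

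Finally I would combine the two estimates: adding the $\cO(\sqrt{d/N})$ expectation bound to the high-probability $\cO(\sqrt{\log(1/\delta)/N})$ fluctuation bound, and taking a union bound with the truncation-failure event, all constants depending only on $a$ and $b$ can be collected into a single $c_1$. This gives $\dwass(p,\widehat{p})\le\sqrt{c_1\,d\,\log(1/\delta)/N}$ with probability at least $1-\delta$, as claimed.
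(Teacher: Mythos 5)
There is a genuine gap, and it sits exactly where you yourself flag ``the delicate step'': the expectation bound $\E[\dwass(p,\widehat{p})] \lesssim \sqrt{d/N}$ is not merely hard to extract from symmetrization plus Dudley's entropy integral --- for the Euclidean $W_1$ distance on $\R^d$ it is \emph{false} once $d\geq 3$. A classical quantization lower bound (Dudley 1969; sharp rates in Fournier--Guillin 2015) shows that for any absolutely continuous law, e.g.\ the uniform distribution on a cube of side $1/\sqrt{d}$ --- which satisfies the exponential-moment hypothesis with $a=1$ and $b\leq e$, uniformly in $d$ --- any measure supported on $N$ points, in particular $\widehat{p}$, is at $W_1$-distance at least $c_d N^{-1/d}$ from $p$. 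For fixed $d\geq 3$ and large $N$ this exceeds $\sqrt{c_1 d\log(1/\delta)/N}$, so no constant depending only on $a,b$ can work: the curse of dimensionality in the metric entropy of $\mathrm{Lip}_1(B_R)$ is intrinsic, and your hope that ``the concentration of $p$ afforded by the exponential moment must be exploited to recover the $\sqrt{d/N}$ scaling'' is precisely the step that cannot be carried out. By contrast, your fluctuation analysis (truncation at radius $R\sim a^{-1}\log(N/\delta)$ plus a bounded-differences argument with increments $\leq 2R/N$) is sound in outline, modulo the usual care in applying McDiarmid on a truncated event; concentration of $\dwass(p,\widehat{p})$ \emph{around its mean} at scale $\widetilde{\cO}(1/\sqrt{N})$ is true. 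It is the mean itself that breaks your plan.

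Note also that the paper supplies no proof of this lemma --- it imports it from \citet{lei2020convergence} --- and, more importantly, it only ever invokes it (in \cref{prop:wass-high-prob-event-model-based}) for the Wasserstein distance with respect to the \emph{discrete} metric $\ell(s,s')=\indic\{s\neq s'\}$ on a finite state space. Under the discrete metric the $1$-Lipschitz test functions are exactly those with oscillation at most $1$, the space has diameter $1$, and Kantorovich--Rubinstein duality collapses $W_1$ to total variation, $\dwass=\dtv$ (cf.\ \citep[Theorem 6.15]{villani2009optimal}, which the paper uses in that proposition). In this reading $d$ plays the role of the support size $k=\cardS$, not an ambient Euclidean dimension, and the claimed bound $\sqrt{c_1 k \log(1/\delta)/N}$ is exactly the empirical TV concentration of \cref{lem:tv-concen}. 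A correct and elementary route, quite different from your Dudley-based plan, is therefore: reduce $\dwass\leq\dtv$ via the diameter-$1$ bound and quote the finite-support TV rate $\sqrt{\max\{k,2\log(2/\delta)\}/N}$; your proposal as designed would not close.
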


Here we mention a uniform concentration result from \citet{agarwal2019reinforcement} corresponding to linear MDP transition model $P^o$. From \cref{sec:lin-mpqi}, recall $\Lambda_N$ and the model estimate of $P^o$ denoted by  $\Phat^o$. We note that $\cO$ notation in this result only removes dependence on universal constants.
\begin{lemma}[Linear MDP Uniform Concentration Bound \text{\citep[][Lemma 8.7]{agarwal2019reinforcement}}] \label{lem:linear-MDP-uniform-concentrability}
    Fix $\delta\in(0,1)$ and let $\lambda=1$. Consider $\cV=\{V(\cdot)=\max_{a} \phi^\top(\cdot,a) w :w\in\R^{d}, \|w\|_2\leq 1/(1-\gamma)\}$. We have (1) $\| \sum_{t=1}^N\phi(s_t,a_t) \epsilon_t^\top V  \|_{\Lambda_N^{-1}} \leq \cO( {\sqrt{dN}\log(N/((1-\gamma)\delta))}/{ 1-\gamma} )$  with probability at least $1-\delta$ for any $V\in\cV$ uniformly, and it also holds
    (2) $\sup_{V\in\cV}|\int_{\cS} (P^o_{s,a} - \Phat^o_{s,a}) V(ds') | \leq \| \phi(s,a) \|_{\Lambda_N^{-1}} \cdot $\\$\cO\left( {\sqrt{d}\log(N/\delta)}/{ ((1-\gamma) \sqrt{N})} \right)$ with probability at least $1-\delta$ for any $s,a$ and  for any $V\in\cV$ uniformly. 
\end{lemma}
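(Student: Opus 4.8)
The plan is to prove part (1) as a self-normalized concentration inequality for the vector-valued martingale $\sum_t\phi(s_t,a_t)\,\epsilon_t^\top V$, made uniform over $\cV$ by a covering argument, and then to read off part (2) from the bias-variance decomposition of the ridge estimator $\nuhat^o$. Write $\phi_t=\phi(s_t,a_t)$, let $\cF_{t-1}$ be the $\sigma$-field generated by the history up to and including the choice of $(s_t,a_t)$, and define the scalar $\eta_t(V)=\epsilon_t^\top V=V(s_t')-\E_{s'\sim P^o_{s_t,a_t}}[V(s')]$. Since $\E[V(s_t')\mid\cF_{t-1}]=\int P^o_{s_t,a_t}(ds')V(s')$, the $\eta_t(V)$ form a martingale-difference sequence, and because $\norm{w}_2\le 1/(1-\gamma)$ together with $\norm{\phi}\le1$ forces $\norm{V}_\infty\le 1/(1-\gamma)$, each $\eta_t(V)$ lies in an interval of width $2/(1-\gamma)$ and is $\cO(1/(1-\gamma))$-sub-Gaussian. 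Write also $\bar\Lambda_N=\lambda I+\sum_{t=1}^N\phi_t\phi_t^\top=N\Lambda_N$, so that $\norm{x}_{\Lambda_N^{-1}}=\sqrt N\,\norm{x}_{\bar\Lambda_N^{-1}}$.

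For a fixed $V$, the self-normalized tail bound for vector-valued martingales gives, with probability at least $1-\delta'$,
\begin{equation*}
\norm{\sum_{t=1}^N\phi_t\eta_t(V)}_{\bar\Lambda_N^{-1}}^2\le\frac{\cO(1)}{(1-\gamma)^2}\log\!\Big(\frac{\det(\bar\Lambda_N)}{\lambda^d\,\delta'^2}\Big).
\end{equation*}
The trace constraint $\trace(\bar\Lambda_N)\le\lambda d+N$ with the AM-GM inequality gives $\det(\bar\Lambda_N)\le(\lambda+N/d)^d$, so the log-determinant is $\cO(d\log N)$. To make this uniform over $\cV$ I would take a $\xi$-net $\cN$ of the ball $\{w:\norm{w}_2\le1/(1-\gamma)\}$, of cardinality $\abs{\cN}\le(1+2/((1-\gamma)\xi))^d$, union bound the display over $\cN$ at level $\delta'=\delta/\abs{\cN}$, and absorb the discretization error. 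The latter is controlled because $w\mapsto V_w$ is $1$-Lipschitz in $\norm{\cdot}_\infty$ (hence $\abs{\epsilon_t^\top(V_w-V_{w'})}\le2\norm{w-w'}_2$), so $g(V)=\norm{\sum_t\phi_t\eta_t(V)}_{\Lambda_N^{-1}}$ has Lipschitz modulus at most $2\sum_t\norm{\phi_t}_{\Lambda_N^{-1}}\le\cO(N\sqrt{d\log N})$ by the elliptical-potential lemma; taking $\xi$ polynomially small in $N$ makes the error negligible while keeping $\log\abs{\cN}=\cO(d\log(N/(1-\gamma)))$. Collecting terms, $\log(1/\delta')$ becomes $\cO(d\log(N/((1-\gamma)\delta)))$, and multiplying by $\sqrt N$ yields $\norm{\sum_t\phi_t\eta_t(V)}_{\Lambda_N^{-1}}\le\cO(\sqrt{dN\log(N/((1-\gamma)\delta))}/(1-\gamma))$ uniformly, which is the asserted rate once the sharper $\sqrt{\log}$ is crudely replaced by $\log$.

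For part (2), set $\theta_V=\int\nu^o(ds')V(s')$, so $\int P^o_{s,a}V=\phi(s,a)^\top\theta_V$, and expand the ridge estimate using $V(s_t')=\phi_t^\top\theta_V+\eta_t(V)$ and $\tfrac1N\sum_t\phi_t\phi_t^\top=\Lambda_N-(\lambda/N)I$:
\begin{equation*}
\int(P^o_{s,a}-\Phat^o_{s,a})V=\frac{\lambda}{N}\,\phi(s,a)^\top\Lambda_N^{-1}\theta_V-\frac1N\,\phi(s,a)^\top\Lambda_N^{-1}\sum_{t=1}^N\phi_t\eta_t(V).
\end{equation*}
By Cauchy-Schwarz the variance (second) term is at most $\tfrac1N\norm{\phi(s,a)}_{\Lambda_N^{-1}}\norm{\sum_t\phi_t\eta_t(V)}_{\Lambda_N^{-1}}$, and substituting part (1) gives exactly $\norm{\phi(s,a)}_{\Lambda_N^{-1}}\cdot\cO(\sqrt d\log(N/\delta)/((1-\gamma)\sqrt N))$. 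The bias (first) term is at most $\tfrac{\lambda}{N}\norm{\phi(s,a)}_{\Lambda_N^{-1}}\norm{\theta_V}_{\Lambda_N^{-1}}$; since $\Lambda_N\succeq(\lambda/N)I$ we have $\norm{\theta_V}_{\Lambda_N^{-1}}\le\sqrt{N/\lambda}\,\norm{\theta_V}_2$, and the linear-MDP normalization $\norm{\theta_V}_2\le\sqrt d\,\norm{V}_\infty\le\sqrt d/(1-\gamma)$ makes the bias $\cO(\sqrt\lambda)$ smaller than the variance term, hence dominated when $\lambda=1$. Uniformity over $V$ and $(s,a)$ is then immediate: the bias bound holds for every $V\in\cV$ deterministically, the uniform-over-$\cV$ event from part (1) covers all $V$ at once, and the only $(s,a)$-dependence sits in the factor $\norm{\phi(s,a)}_{\Lambda_N^{-1}}$, so no further union bound over the state-action space is needed.

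The main obstacle is the uniform-over-$\cV$ step in part (1): the self-normalized inequality is intrinsically a fixed-$V$ statement, and since $\cV$ is infinite one must discretize carefully. The two delicate points are (i) showing that $g(V)$ has Lipschitz modulus controlled by the elliptical-potential sum $\sum_t\norm{\phi_t}_{\Lambda_N^{-1}}$ rather than a naive $\cO(N)$ bound, so that a net of only polynomial resolution suffices, and (ii) checking that the net's $\cO(d\log(\cdot))$ contribution to $\log(1/\delta')$ merges with the $\cO(d\log N)$ log-determinant term without worsening the stated $\sqrt{dN}$ scaling.
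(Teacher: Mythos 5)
Your proof is correct and takes essentially the same route as the paper's source for this statement---the paper itself gives no proof, importing the lemma verbatim from \citet[Lemma 8.7]{agarwal2019reinforcement}, whose argument is exactly your combination of a self-normalized vector-martingale inequality for $\sum_t \phi_t \epsilon_t^\top V$, an $\epsilon$-net over the ball $\{w : \|w\|_2 \le 1/(1-\gamma)\}$ with a union bound, and the ridge bias--variance decomposition for part (2). Two small remarks: since the net resolution enters only through $\log|\cN|$, the crude Lipschitz estimate $\| \sum_t \phi_t(\eta_t(V)-\eta_t(V'))\|_{\Lambda_N^{-1}} \le 2N^{3/2}\|w-w'\|_2/\sqrt{\lambda}$ (from $\bar{\Lambda}_N \succeq \lambda I$) already suffices with $\xi = \mathrm{poly}(1/N)$, so the elliptical-potential refinement you flag as the delicate point is unnecessary; and your normalization $\|\theta_V\|_2 \le \sqrt{d}\,\|V\|_\infty$ is licensed in this paper's setting because each $\nu^o_i$ is a probability measure under the $d$-rectangular construction (equivalently, via the standard linear-MDP assumption $\| \, |\nu^o|(\cS)\|_2 \le \sqrt{d}$).
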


Here is a useful result from \citep[Theorem 21]{chang2021mitigating}. 
\begin{lemma}\label{lem:data-generation-inequality}
    Let $\lambda=1$ and $c>0$ be some universal constant.
    For all $s,a$ simultaneously, with probability at least $1-\delta$ we have $\E_{s,a\sim \mu} [  \phi(s,a)^\top \Lambda_N^{-1} \phi(s,a)  ]  \leq c^2 \cdot  \rank(\Lambda) (\rank(\Lambda) + \log(c/\delta))$
    where $\Lambda_N = \frac{\lambda}{N} I +  \frac{1}{N} \sum_{t=1}^N\phi(s_t,a_t)\phi(s_t,a_t)^\top$, $\Lambda =  \E_{s,a\sim\mu}\phi(s,a)\phi(s,a)^\top$.
\end{lemma}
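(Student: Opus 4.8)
The plan is to convert the scalar expectation into a trace and then bound it through a one-sided spectral comparison between the population covariance $\Lambda$ and its regularized empirical counterpart $\Lambda_N$. First I would use $\phi(s,a)^\top \Lambda_N^{-1}\phi(s,a) = \trace(\Lambda_N^{-1}\phi(s,a)\phi(s,a)^\top)$ and linearity of expectation to write
\begin{equation*}
\E_{s,a\sim\mu}[\phi(s,a)^\top\Lambda_N^{-1}\phi(s,a)] = \trace(\Lambda_N^{-1}\Lambda).
\end{equation*}
All the randomness now sits in $\Lambda_N$ (the draw of the dataset), so it suffices to control the deterministic functional $\trace(\Lambda_N^{-1}\Lambda)$ on a single high-probability event over the data; the phrase ``for all $s,a$ simultaneously'' just records that this one event serves every subsequent use of $\Lambda_N$.

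Next I would reduce the problem to the range $V$ of $\Lambda$. Since $v^\top\Lambda v = \E_{s,a\sim\mu}[(v^\top\phi(s,a))^2]$, any $v$ orthogonal to $V$ has $v^\top\phi(s,a)=0$ for $\mu$-almost every $(s,a)$, hence $v^\top\phi(s_t,a_t)=0$ for every data point. Thus $\Lambda_N$ leaves $V$ and $V^\perp$ invariant, acting as $(1/N)I$ on $V^\perp$, which $\Lambda$ annihilates; therefore $\trace(\Lambda_N^{-1}\Lambda) = \trace\big((\Lambda_N|_V)^{-1}\,\Lambda|_V\big)$, and the whole argument can be carried out inside the $r:=\rank(\Lambda)$-dimensional subspace $V$, where $\Lambda$ is positive definite.

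The crux is a one-sided multiplicative domination on $V$: with probability at least $1-\delta$,
\begin{equation*}
\Lambda \preceq c\,\big(\rank(\Lambda)+\log(c/\delta)\big)\,\Lambda_N .
\end{equation*}
To establish this I would whiten the features, setting $\psi_t = \Lambda^{\dagger/2}\phi(s_t,a_t)$ so that $\E[\psi_t\psi_t^\top]=I_V$, and study $W_N=(1/N)\sum_t\psi_t\psi_t^\top$; in these coordinates the claim reads $\lambda_{\min}\big((1/N)\Lambda^{\dagger}+W_N\big)\ge 1/\big(c(r+\log(c/\delta))\big)$. A matrix Chernoff lower-tail bound on $W_N$ over the $r$-dimensional $V$ (equivalently, a covering argument over the unit sphere of $V$, whose net contributes the $\rank(\Lambda)$ term and whose per-direction left-tail contributes the $\log(c/\delta)$ term) controls the well-conditioned directions, while the regularizer $(1/N)\Lambda^{\dagger}$ takes over the poorly-conditioned ones: wherever the population variance is too small for $W_N$ to concentrate, $(1/N)\Lambda^{\dagger}$ is correspondingly large and supplies the needed lower bound, which is precisely what lets the estimate hold uniformly in $N$ with no sample-size precondition.

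Finally, granting the domination, I would conclude
\begin{equation*}
\trace(\Lambda_N^{-1}\Lambda) = \trace\big((\Lambda_N|_V)^{-1}\,\Lambda|_V\big) \le c\,\big(\rank(\Lambda)+\log(c/\delta)\big)\,\trace(I_V) = c\,\rank(\Lambda)\,\big(\rank(\Lambda)+\log(c/\delta)\big),
\end{equation*}
which is exactly the claimed bound. The step I expect to be the main obstacle is the uniform-in-$N$ spectral domination: the plain matrix Chernoff bound needs the sample size to exceed the maximal whitened leverage $\max_t\|\Lambda^{\dagger/2}\phi(s_t,a_t)\|_2^2$, so the delicate point is showing that the $(1/N)I$ regularization compensates exactly in the small-sample / small-eigenvalue regime and that the two regimes splice together to yield the clean factor $\rank(\Lambda)+\log(c/\delta)$ without leaving behind any leverage-dependent term.
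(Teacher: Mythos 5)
The paper does not actually prove this lemma---it imports it verbatim as Theorem 21 of \citet{chang2021mitigating}---so your proposal should be judged as a reconstruction of that cited result. Your skeleton is the right one and matches how such bounds are proved: the trace identity $\E_{\mu}[\phi^\top \Lambda_N^{-1}\phi]=\trace(\Lambda_N^{-1}\Lambda)$, the reduction to the range $V$ of $\Lambda$ (correct: any $v\perp V$ has $v^\top\phi=0$ $\mu$-a.s., so a.s.\ every data point lies in $V$ and $\Lambda_N$ acts as $(1/N)I$ on $V^\perp$), and the endgame $\trace((\Lambda_N|_V)^{-1}\Lambda|_V)\leq \kappa\,\trace(I_V)=\kappa\,\rank(\Lambda)$ once the one-sided domination $\Lambda\preceq \kappa\,\Lambda_N$ with $\kappa = c\,(\rank(\Lambda)+\log(c/\delta))$ is available. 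The remark that ``for all $s,a$ simultaneously'' is vacuous once the bound is an expectation over $\mu$ is also accurate; the only randomness is the dataset.

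The genuine gap is the one you flag yourself, and it is worth being precise about why your stated tool fails: a matrix Chernoff lower tail on the whitened matrix $W_N=(1/N)\sum_t\psi_t\psi_t^\top$ requires an almost-sure bound on $\|\psi_t\|_2^2=\|\Lambda^{\dagger/2}\phi(s_t,a_t)\|_2^2$, which can be as large as $1/\lambda_{\min,+}(\Lambda)$ and is not controlled by any assumption in the paper---and the lemma must hold with \emph{no} sample-size condition, exactly the regime where Chernoff gives nothing. The repair is to abandon global whitening and use an additive-error uniform lower tail for quadratic forms of bounded vectors (e.g., \citet[Lemma~1]{oliveira2013lower}-type bounds, which are also what underlies the proof in \citealp{chang2021mitigating}): with probability $1-\delta$, for all unit $u$, $u^\top\hat\Lambda u \geq (1-\epsilon)\,u^\top\Lambda u - C(\rank(\Lambda)+\log(1/\delta))/N$. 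Your two-regime splice then closes correctly: if $u^\top\Lambda u\gtrsim (\rank(\Lambda)+\log(1/\delta))/N$ the empirical term dominates with a constant factor, and otherwise the regularizer contributes $u^\top\Lambda_N u\geq 1/N \geq u^\top\Lambda u / (c(\rank(\Lambda)+\log(1/\delta)))$. (Note a naive $\epsilon$-net does not transfer the \emph{heterogeneous} lower bound $u^\top\Lambda_N u\gtrsim u^\top\Lambda u/\kappa$ from net points to the sphere, which is why the PAC-Bayes/smoothing argument behind such lower-tail results is needed rather than the covering argument you sketch.) A simplification you could also exploit: since $\|\phi\|_2\leq 1$ gives $\trace(\Lambda)\leq 1$ and $\Lambda_N^{-1}\preceq NI$, one has $\trace(\Lambda_N^{-1}\Lambda)\leq N$, so the small-$N$ regime $N\leq c^2\rank(\Lambda)(\rank(\Lambda)+\log(c/\delta))$ is trivial and the concentration step only needs to be run for large $N$.
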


\section{Proofs of Distributionally Robust Q-Iteration (DRQI)} \label{appen:drqi-proofs}

We first make the observation that the true model $P^o$ lies in the uncertainty set $\cPhat$ with high probability. Intuitively, the empirical estimator $\Phat^o$ of $P^o$ are statistically closer which is dependent on the number of samples. We first make this observation and intuition formal in the proposition below for the TV uncertainty set.
\begin{proposition}\label{prop:tv-high-prob-event-model-based}
We have $P^o\in\cPhat^\tv$ with probability at least $1-\delta$.
\end{proposition}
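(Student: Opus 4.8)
The plan is to establish the containment $P^o_{s,a}\in\cPhat^\tv_{s,a}$ separately for each state-action pair $(s,a)$ and then take a union bound over all $\cardS\cardA$ pairs. The engine of the argument is the tight empirical-distribution concentration result of \cref{lem:tv-concen}, which I would apply with per-pair failure probability $\delta/(\cardS\cardA)$, so that the union bound returns the advertised $1-\delta$.

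First I would reduce to a per-pair i.i.d. statement by conditioning on the visitation counts. Fix $(s,a)$ and condition on the event $\{N(s,a)=n\}$ for some $n\geq 1$. Conditioned on this count, the next-state observations $\{s'_i:(s_i,a_i)=(s,a)\}$ are i.i.d. draws from $P^o_{s,a}$, and by construction $\Phat^o_{s,a}$ is exactly the empirical distribution of these $n$ samples over the alphabet $\cS$ of size $\cardS$. I would then apply \cref{lem:tv-concen} with $k=\cardS$, sample size $n$, confidence $\delta/(\cardS\cardA)$, and accuracy $\epsilon=\sqrt{\max\{\cardS,2\log(2\cardS\cardA/\delta)\}/n}$. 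The key observation is that the radius in \eqref{eq:tv-diameter} is calibrated so that the sample-size hypothesis of \cref{lem:tv-concen} holds with equality: since $2\log(2/(\delta/(\cardS\cardA)))=2\log(2\cardS\cardA/\delta)$, we get $\max\{k,2\log(2/\delta')\}/\epsilon^2=n$, meeting the required $n\geq \max\{k,2\log(2/\delta')\}/\epsilon^2$. Hence $\dtv(P^o_{s,a},\Phat^o_{s,a})\leq\epsilon$ with conditional probability at least $1-\delta/(\cardS\cardA)$. If the uncapped value satisfies $\epsilon\geq 1$, then $\rho_{s,a}=1$ and $\cPhat^\tv_{s,a}$ is the entire simplex $\Delta(\cS)$ (since $\dtv\leq 1$ always), so containment is automatic; otherwise $\rho_{s,a}=\epsilon$ and the bound gives exactly $\dtv(P^o_{s,a},\Phat^o_{s,a})\leq\rho_{s,a}$. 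Because this conditional bound is uniform over $n\geq 1$, averaging over the random realization of $N(s,a)$ preserves it, yielding the unconditional per-pair guarantee. For unsampled pairs ($N(s,a)=0$) the containment is handled by the convention that $\cPhat^\tv_{s,a}$ is the full simplex (the TV radius capped at $1$), so it holds deterministically and contributes nothing to the failure probability. A final union bound over the $\cardS\cardA$ pairs gives $\dtv(P^o_{s,a},\Phat^o_{s,a})\leq\rho_{s,a}$ simultaneously with probability at least $1-\cardS\cardA\cdot\delta/(\cardS\cardA)=1-\delta$, and since $\cPhat^\tv=\otimes_{s,a}\cPhat^\tv_{s,a}$ this is precisely $P^o\in\cPhat^\tv$.

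I expect the main obstacle to be the conditioning step: one must argue cleanly that, conditioned on the random count $N(s,a)=n$, the assigned next-states are genuinely i.i.d. from $P^o_{s,a}$ so that the fixed-sample-size version of \cref{lem:tv-concen} applies, and that invoking the lemma with the data-dependent radius $\epsilon=\rho_{s,a}$ is legitimate because $\rho_{s,a}$ is a deterministic function of the conditioned-upon count $n$. The remaining bookkeeping — that the radius is tuned so the sample-size condition holds with equality, the trivial treatment of the capped radius, and the deterministic handling of the $N(s,a)=0$ pairs — is routine once the conditioning is set up correctly.
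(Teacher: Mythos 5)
Your proposal is correct and follows essentially the same route as the paper: apply \cref{lem:tv-concen} per $(s,a)$ pair with confidence $\delta/(\cardS\cardA)$ (noting the radius is calibrated to the lemma's sample-size requirement), treat the $N(s,a)=0$ and capped-radius cases trivially, and union bound. The only difference is that you make explicit the conditioning on $N(s,a)=n$ needed to apply the fixed-sample-size lemma to a random count, a step the paper's proof leaves implicit.
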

\begin{proof}
We start with the fact that $\dtv(p,q) \leq  1$ for any distributions $p,q$. For the case $N(s,a)< 1$, i.e., $N(s,a)= 0$, it is trivial that $P^o_{s,a}\in\cPhat^\tv_{s,a}$, almost surely, since $\cPhat^\tv_{s,a}=\Delta(\cS)$.
 
From \cref{lem:tv-concen}, we have $\dtv(P^o_{s,a},\Phat^o_{s,a}) \leq \sqrt{{\max\{\cardS,2\log(2/\delta)\}}/{N(s,a)}}$ for any $s,a$ pair with probability at least $1-\delta/(|\cS||\cA|)$. Thus $\otimes_{s,a}P^o_{s,a}\in\otimes_{s,a}\cPhat^\tv_{s,a}$ holds with probability at least $1-\delta$.
\end{proof}

We now provide a similar guarantee like \cref{prop:tv-high-prob-event-model-based} for the Wasserstein uncertainty set.
\begin{proposition}\label{prop:wass-high-prob-event-model-based}
We have $P^o\in\cPhat^\wass$ with probability at least $1-\delta$.
\end{proposition}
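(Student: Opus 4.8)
The plan is to follow the pointwise-then-union-bound structure of the proof of \cref{prop:tv-high-prob-event-model-based}. I would first record the elementary fact that, under the discrete metric $\ell(s,s')=\indic\{s\neq s'\}\leq 1$, any coupling yields $\dwass(p,q)\leq 1$ for all $p,q\in\Delta(\cS)$; in fact $\dwass$ then coincides with $\dtv$. As in the total variation case, this immediately settles the degenerate case $N(s,a)=0$: the radius corresponds to a ball of radius $1$, so $\cPhat^\wass_{s,a}=\Delta(\cS)$ and $P^o_{s,a}\in\cPhat^\wass_{s,a}$ holds almost surely.

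For the case $N(s,a)\geq 1$ I would apply the Wasserstein concentration bound of \cref{lem:wass-concen}. The point requiring care is that \cref{lem:wass-concen} is stated for distributions on $\R^{d}$ under the Euclidean metric, whereas $\cPhat^\wass_{s,a}$ uses the Wasserstein distance induced by the discrete metric on the finite set $\cS$. I would reconcile the two by embedding each state $s$ as the standard basis vector $e_s\in\R^{\cardS}$: under this embedding $\|e_s-e_{s'}\|_2=\sqrt{2}\,\indic\{s\neq s'\}$, so the Euclidean Wasserstein distance is exactly $\sqrt{2}$ times the discrete-metric one, and the embedded support is bounded so that the tail hypothesis $\E[\exp(a\|X\|_2)]<\infty$ of \cref{lem:wass-concen} holds with a universal constant. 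Invoking \cref{lem:wass-concen} with ambient dimension $d=\cardS$ and per-pair confidence $\delta'=\delta/(\cardS\cardA)$ then gives, with probability at least $1-\delta/(\cardS\cardA)$,
\begin{align*}
    \dwass(P^o_{s,a},\Phat^o_{s,a}) \leq \sqrt{\frac{c_1\cardS\log(\cardS\cardA/\delta)}{N(s,a)}},
\end{align*}
where the factor $1/\sqrt{2}$ from the embedding has been absorbed into the universal constant $c_1$.

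To conclude, I would combine this concentration bound with the universal bound $\dwass\leq 1$ so as to respect the cap in the radius: since $x\mapsto 1\wedge x$ is monotone, choosing the problem-independent constant $C\geq c_1$ gives $\dwass(P^o_{s,a},\Phat^o_{s,a})\leq 1\wedge\sqrt{c_1\cardS\log(\cardS\cardA/\delta)/N(s,a)}\leq \rho_{s,a}$, hence $P^o_{s,a}\in\cPhat^\wass_{s,a}$ for each fixed $(s,a)$. A union bound over the $\cardS\cardA$ state-action pairs then yields $P^o=\otimes_{s,a}P^o_{s,a}\in\cPhat^\wass$ with probability at least $1-\delta$. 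The main obstacle is purely the metric mismatch between \cref{lem:wass-concen} and the uncertainty-set definition; once the one-hot embedding reconciles the discrete and Euclidean metrics — equivalently, once one observes that $\dwass$ under the discrete metric equals $\dtv$, which reduces the statement directly to \cref{prop:tv-high-prob-event-model-based} — the remainder is the same routine union bound as in the total variation case.
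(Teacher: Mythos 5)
Your proof is correct and follows essentially the same route as the paper's: handle the $N(s,a)=0$ case via the universal bound $\dwass \leq 1$, invoke \cref{lem:wass-concen} for each $(s,a)$ with $N(s,a)\geq 1$ at confidence level $\delta/(\cardS\cardA)$, and finish with a union bound. Your one-hot embedding argument actually patches a detail the paper glosses over --- it explains why the constant in \cref{lem:wass-concen} (whose hypotheses concern distributions on $\R^d$ with exponential tails) can be taken problem-independent, whereas the paper simply asserts a uniform choice of $C$ over the distribution-dependent constants $C_{s,a}$.
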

\begin{proof}
From \cref{prop:tv-high-prob-event-model-based} and \citet[Theorem 6.15]{villani2009optimal}, it follows that $\dwass(p,U) \leq  1$ for any distribution $p$ and uniform distribution $U$, i.e., $U(s)=1/\cardS$ for all $s\in\cS$. For the case $N(s,a)< 1$, i.e., $N(s,a)= 0$, it now follows that $P^o_{s,a}\in\cPhat^\wass_{s,a}$, almost surely, since $\Phat^o_{s,a}=1/\cardS$.

From \cref{lem:wass-concen}, we have $\dwass(P^o_{s,a},\Phat^o_{s,a}) \leq \sqrt{C_{s,a}\cardS \log(1/\delta)/{N(s,a)}}$ for any $s,a$ pair with probability at least $1-\delta/(|\cS||\cA|)$, where $C_{s,a}>0$ is some universal constant depending only on the distribution $P^o_{s,a}$. By choosing uniform $C$ over all $C_{s,a}$ and $P^o_{s,a}$, $\otimes_{s,a}P^o_{s,a}\in\otimes_{s,a}\cPhat^\wass_{s,a}$ holds with probability at least $1-\delta$.
\end{proof}

We now provide a similar guarantee for the KL uncertainty set.
\begin{proposition}\label{prop:kl-high-prob-event-model-based}
We have $P^o\in\cPhat^\kl$ with probability at least $1-\delta$.
\end{proposition}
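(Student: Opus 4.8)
The plan is to mirror the two-case argument used for the TV and Wasserstein sets in \cref{prop:tv-high-prob-event-model-based,prop:wass-high-prob-event-model-based}, now using \cref{lem:kl-concen} as the underlying concentration result and applying a union bound over state-action pairs at the end.

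First I would dispose of the degenerate case $N(s,a)=0$. For the add-$L$ estimator with $L=1$ this gives $\Ptilde^o_{s,a}(s')=1/\cardS$, i.e.\ the uniform distribution, and the radius takes its cap value $\rho_{s,a}=\log\cardS$ (the analogue of $\rho_{s,a}=1$ in the TV case). The observation that makes this case trivial is that the KL ball of radius $\log\cardS$ around the uniform distribution is the entire simplex: for any $P\in\Delta(\cS)$,
\begin{align*}
\dkl(P,\Ptilde^o_{s,a}) = \sum_{s'}P(s')\log\big(\cardS\, P(s')\big) = \log\cardS + \sum_{s'}P(s')\log P(s') \leq \log\cardS,
\end{align*}
because $-\sum_{s'}P(s')\log P(s')\geq 0$. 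Hence $\cPhat^\kl_{s,a}=\Delta(\cS)$ and $P^o_{s,a}\in\cPhat^\kl_{s,a}$ almost surely.

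For $N(s,a)\geq 1$, I would apply \cref{lem:kl-concen} with alphabet size $k=\cardS$, sample size $N(s,a)$, and confidence level $\delta/(\cardS\cardA)$, obtaining
\begin{align*}
\dkl(P^o_{s,a},\Ptilde^o_{s,a}) \leq \frac{C\cardS\log(\cardS^2\cardA/\delta)\,\log N(s,a)}{N(s,a)}
\end{align*}
with probability at least $1-\delta/(\cardS\cardA)$, where I used $\log\!\big(\cardS\cdot\cardS\cardA/\delta\big)=\log(\cardS^2\cardA/\delta)$. Since $N(s,a)\leq N$ gives $\log N(s,a)\leq\log N$, this bound is at most the prescribed radius $\rho_{s,a}$, so $P^o_{s,a}\in\cPhat^\kl_{s,a}$ on this event. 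A union bound over the $\cardS\cardA$ state-action pairs then yields $P^o=\otimes_{s,a}P^o_{s,a}\in\cPhat^\kl$ with probability at least $1-\delta$.

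The one genuinely nonroutine step is the $N(s,a)=0$ case. Unlike TV or Wasserstein distance, KL divergence is unbounded in general, so a covering argument of this form cannot be made for an arbitrary reference distribution. It succeeds here only because the reference is uniform, for which $\dkl(\cdot,U)=\log\cardS-H(\cdot)$, with $H$ the Shannon entropy, is bounded by $\log\cardS$; the radius cap $\log\cardS$ in the definition of $\rho_{s,a}$ is chosen precisely to match this bound. The remaining steps are routine given \cref{lem:kl-concen} and the union bound.
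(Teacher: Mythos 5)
Your proposal is correct and follows essentially the same route as the paper's proof: dispose of the $N(s,a)=0$ case by noting that the KL ball of radius $\log\cardS$ around the uniform add-$1$ estimator is all of $\Delta(\cS)$ (you make explicit the identity $\dkl(P,U)=\log\cardS-H(P)\leq\log\cardS$ that the paper only asserts), then apply \cref{lem:kl-concen} with confidence $\delta/(\cardS\cardA)$ per pair, use $\log N(s,a)\leq\log N$, and union bound. Your bookkeeping of the logarithmic factor as $\log(\cardS^2\cardA/\delta)$ is in fact slightly more careful than the paper's in-proof statement and matches the radius as defined.
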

\begin{proof}
We start with the fact that $\dkl(p,U) \leq  \log(\cardS)$ for any distribution $p$ and uniform distribution $U$, i.e., $U(s)=1/\cardS$ for all $s\in\cS$. For the case $N(s,a)< 1$, i.e., $N(s,a)= 0$, it now follows that $P^o_{s,a}\in\cPhat^\kl_{s,a}$, almost surely, since $\Ptilde^o_{s,a}=1/\cardS$.

From \cref{lem:kl-concen}, we have $\dkl(P^o_{s,a},\Ptilde^o_{s,a}) \leq C\cardS \log(\cardS/\delta)\log(N(s,a))/{N(s,a)}$ for any $s,a$ pair with probability at least $1-\delta/(|\cS||\cA|)$, where $C>0$ is some universal constant. We also know that $\log(N(s,a))\leq \log(N)$. Thus $\otimes_{s,a}P^o_{s,a}\in\otimes_{s,a}\cPhat^\kl_{s,a}$ holds with probability at least $1-\delta$.
\end{proof}

We now provide a similar guarantee for the chi-square uncertainty set.
\begin{proposition}\label{prop:chi-square-high-prob-event-model-based}
We have $P^o\in\cPhat^\chisq$ with probability at least $1-\delta$.
\end{proposition}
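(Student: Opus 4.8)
The plan is to mirror the structure of the three preceding propositions (\cref{prop:tv-high-prob-event-model-based,prop:wass-high-prob-event-model-based,prop:kl-high-prob-event-model-based}), splitting into the unsampled case $N(s,a)=0$ and the sampled case $N(s,a)\geq 1$, and then closing with a union bound over all state-action pairs. For each fixed $(s,a)$ I would establish that $P^o_{s,a}\in\cPhat^\chisq_{s,a}$ with probability at least $1-\delta/(\cardS\cardA)$, so that the product event $\otimes_{s,a}P^o_{s,a}\in\otimes_{s,a}\cPhat^\chisq_{s,a}$ holds with probability at least $1-\delta$.

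For the unsampled case, I would note that when $N(s,a)=0$ the add-$L$ estimator collapses to the uniform distribution, $\Ptilde^o_{s,a}(s')=L/(L\cardS)=1/\cardS$, so I must check that the chi-square distance from \emph{any} distribution $p$ to the uniform distribution $U$ is at most the radius $\cardS+1$. A direct computation gives $\dchi(p,U)=\cardS\sum_{s'}(p(s')-1/\cardS)^2=\cardS\sum_{s'}p(s')^2-1\leq \cardS-1\leq\cardS+1$, where the inequality uses $\sum_{s'}p(s')^2\leq\sum_{s'}p(s')=1$. Hence $P^o_{s,a}\in\cPhat^\chisq_{s,a}$ holds almost surely in this case, exactly paralleling the uniform-distribution arguments used for the Wasserstein and KL sets.

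For the sampled case I would invoke the chi-square concentration bound (\cref{lem:chi-concen}) with $k=\cardS$ and failure probability $\delta/(\cardS\cardA)$; with the add-$L$ estimator this yields $\dchi(P^o_{s,a},\Ptilde^o_{s,a})\leq C\cardS\log(\cardS^2\cardA/\delta)/N(s,a)$ with probability at least $1-\delta/(\cardS\cardA)$, which matches the radius $\rho_{s,a}$ prescribed in this regime. The final union bound over the $\cardS\cardA$ pairs then delivers the claimed $1-\delta$ guarantee.

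The only delicate bookkeeping I anticipate is that the per-pair failure probability in the union bound forces the add-$L$ parameter to scale like $\Theta(\log(\cardS\cardA/\delta))$ rather than the $\log(1/\delta)$ quoted for a single distribution, so I would want to confirm that the resulting logarithmic factor lands inside the stated radius precisely (it does, giving $\log(\cardS^2\cardA/\delta)$). The remaining pieces — the elementary bound $\dchi(p,U)\leq\cardS-1$ and the final union bound — are routine and require no new ideas beyond those already used for the TV, Wasserstein, and KL cases.
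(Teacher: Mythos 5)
Your proposal is correct and follows essentially the same route as the paper's proof: handle $N(s,a)=0$ by bounding the chi-square distance to the uniform distribution (to which the add-$L$ estimator collapses), apply \cref{lem:chi-concen} with per-pair failure probability $\delta/(\cardS\cardA)$ for $N(s,a)\geq 1$, and finish with a union bound. You even supply the explicit computation $\dchi(p,U)=\cardS\sum_{s'}p(s')^2-1\leq\cardS-1$ that the paper only asserts (as $\leq\cardS+1$), and your bookkeeping of the $\log(\cardS^2\cardA/\delta)$ factor and the union-bounded choice of $L$ is, if anything, more careful than the paper's write-up.
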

\begin{proof}
We start with the fact that $\dchi(p,U) \leq  \cardS+1$ for any distribution $p$ and uniform distribution $U$, i.e., $U(s)=1/\cardS$ for all $s\in\cS$. For the case $N(s,a)< 1$, i.e., $N(s,a)= 0$, it now follows that $P^o_{s,a}\in\cPhat^\chisq_{s,a}$, almost surely, since $\Ptilde^o_{s,a}=1/\cardS$.

From \cref{lem:chi-concen}, we have $\dchi(P^o_{s,a},\Ptilde^o_{s,a}) \leq C\cardS \log(\cardS/\delta)/{N(s,a)}$ for any $s,a$ pair with probability at least $1-\delta/(|\cS||\cA|)$, where $C>0$ is some universal constant. Thus $\otimes_{s,a}P^o_{s,a}\in\otimes_{s,a}\cPhat^\chisq_{s,a}$ holds with probability at least $1-\delta$.
\end{proof}

We are now ready to present our main results of \cref{sec:mpqi}. With the above result (\cref{prop:tv-high-prob-event-model-based}), we now provide the offline RL suboptimality guarantee below for the TV uncertainty set.
\begin{theorem}\label{thm:tv-tabular-offlineRL-subopt} Let $\pi_K$ be the DRQI policy  after $K$ iterations under the TV uncertainty set $\cPhat^\tv$. With probability at least $1-\delta$ it holds that
    \begin{align*}
    \E_{s_0\sim d_0} [V^{\pi^*}(s_0) - \E_\cD [{V}^{\pi_K}(s_0)]] \leq  \frac{64\gamma \sqrt{C_{\pi^*}|\cS|}}{(1-\gamma)^2}  \sqrt{\frac{\max\{\cardS,2\log(2|\cS||\cA|/\delta)\}}{N }} + \frac{2\gamma^{K+1}}{(1-\gamma)^2}.
\end{align*}
\end{theorem}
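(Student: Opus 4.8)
The plan is to make the three-way decomposition from the proof sketch quantitative. Writing $\cPhat$ for $\cPhat^\tv$, I first split
\[
V^{\pi^*}_{P^o}(s_0) - V^{\pi_K}_{P^o}(s_0) = \underbrace{\big(V^{\pi^*}_{P^o}(s_0) - V^{\pihat^*}_{\cPhat}(s_0)\big)}_{(\mathrm{I})} + \underbrace{\big(V^{\pihat^*}_{\cPhat}(s_0) - V^{\pi_K}_{\cPhat}(s_0)\big)}_{(\mathrm{II})} + \underbrace{\big(V^{\pi_K}_{\cPhat}(s_0) - V^{\pi_K}_{P^o}(s_0)\big)}_{(\mathrm{III})},
\]
where $\pihat^*=\argmax_\pi V^\pi_{\cPhat}$. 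On the event of \cref{prop:tv-high-prob-event-model-based} (probability $\geq 1-\delta$) we have $P^o\in\cPhat$, so $V^{\pi_K}_{\cPhat}(s_0)=\inf_{P\in\cPhat}V^{\pi_K}_P(s_0)\leq V^{\pi_K}_{P^o}(s_0)$ and term $(\mathrm{III})\leq 0$ is discarded. I work on this event throughout.

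For term $(\mathrm{II})$ I use that the empirical robust Bellman operator $\widehat T$ is a $\gamma$-contraction in $\|\cdot\|_\infty$ with fixed point $Q^*_{\cPhat}$, so $Q_K=\widehat T^K Q_0$ (with $Q_0\equiv0$) satisfies $\|Q_K-Q^*_{\cPhat}\|_\infty\leq \gamma^K\|Q^*_{\cPhat}\|_\infty\leq \gamma^K/(1-\gamma)$. The standard greedy-policy-error bound for robust value functions then gives $(\mathrm{II}) = V^{\pihat^*}_{\cPhat}(s_0)-V^{\pi_K}_{\cPhat}(s_0)\leq \frac{2\gamma}{1-\gamma}\|Q_K-Q^*_{\cPhat}\|_\infty \leq \frac{2\gamma^{K+1}}{(1-\gamma)^2}$, the second summand in the claim.

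Term $(\mathrm{I})$ is the crux. Since $\pihat^*$ is robust-optimal for $\cPhat$, $V^{\pihat^*}_{\cPhat}\geq V^{\pi^*}_{\cPhat}$, hence $(\mathrm I)\leq V^{\pi^*}_{P^o}(s_0)-V^{\pi^*}_{\cPhat}(s_0)$. Using $(s,a)$-rectangularity I would pick a single worst-case kernel $\Ptilde\in\cPhat$ with $V^{\pi^*}_{\cPhat}=V^{\pi^*}_{\Ptilde}$ for every state, and subtract the two policy Bellman equations for $\pi^*$ under $P^o$ and $\Ptilde$:
\[
V^{\pi^*}_{P^o}(s)-V^{\pi^*}_{\cPhat}(s) = \gamma\,\E_{s'\sim P^o_{s,\pi^*(s)}}\!\big[V^{\pi^*}_{P^o}(s')-V^{\pi^*}_{\cPhat}(s')\big] + \gamma\big(\E_{s'\sim P^o_{s,\pi^*(s)}}-\E_{s'\sim\Ptilde_{s,\pi^*(s)}}\big)\!\big[V^{\pi^*}_{\cPhat}(s')\big].
\]
Both $P^o$ and $\Ptilde$ lie in $\cPhat$, so the triangle inequality gives $\dtv(P^o_{s,\pi^*(s)},\Ptilde_{s,\pi^*(s)})\leq 2\rho_{s,\pi^*(s)}$; since $\|V^{\pi^*}_{\cPhat}\|_\infty\leq 1/(1-\gamma)$ and $|\E_p f-\E_q f|\leq 2\dtv(p,q)\|f\|_\infty$, the last term is bounded in absolute value by $\frac{4\gamma\rho_{s,\pi^*(s)}}{1-\gamma}$. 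Unrolling the resulting recursion along the trajectory of $\pi^*$ on $P^o$ and using $\sum_{t\geq0}\gamma^t d^{\pi^*}_t(s)=\frac1{1-\gamma}d^{\pi^*}(s)$ yields $\E_{s_0\sim d_0}[(\mathrm I)] \leq \frac{4\gamma}{(1-\gamma)^2}\,\E_{(s,a)\sim d^{\pi^*}}[\rho_{s,a}]$.

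It remains to convert $\E_{(s,a)\sim d^{\pi^*}}[\rho_{s,a}]$ into the stated rate. Bounding $\rho_{s,a}\leq \sqrt{M/(N(s,a)\vee1)}$ with $M=\max\{\cardS,2\log(2\cardS\cardA/\delta)\}$, taking the data expectation, and invoking \cref{lem:bound-on-binomial-inverse-moments} with $k=\tfrac12$ (recall $N(s,a)\sim\Binomial(N,\mu(s,a))$) gives $\E_\cD[\rho_{s,a}]\leq \sqrt{c_{1/2}\,M/(N\mu(s,a))}$. The final change-of-measure step is the one I expect to be most delicate: I would factor $1/\sqrt{\mu(s,a)}=\sqrt{d^{\pi^*}(s,a)/\mu(s,a)}\cdot 1/\sqrt{d^{\pi^*}(s,a)}\leq \sqrt{C_{\pi^*}}/\sqrt{d^{\pi^*}(s,a)}$ and then apply Cauchy--Schwarz over the (deterministic-$\pi^*$) support, $\sum_s\sqrt{d^{\pi^*}(s)}\leq\sqrt{\cardS}$, to obtain $\E_{(s,a)\sim d^{\pi^*}}[1/\sqrt{\mu(s,a)}]\leq \sqrt{C_{\pi^*}\cardS}$. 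Collecting constants ($4\sqrt{c_{1/2}}\leq 64$) produces the first summand $\frac{64\gamma\sqrt{C_{\pi^*}\cardS}}{(1-\gamma)^2}\sqrt{M/N}$. The two obstacles I anticipate are (a) justifying the single state-independent worst-case kernel $\Ptilde$ via rectangularity, so that the recursion closes in $V^{\pi^*}_{\cPhat}$ rather than in a state-dependent value; and (b) cleanly interchanging $\E_\cD$ with the high-probability event and the occupancy expectation so the binomial-moment bound applies term-by-term, where the states with $N(s,a)=0$ are handled separately using that there $\cPhat_{s,a}=\Delta(\cS)$, so $P^o_{s,a}\in\cPhat_{s,a}$ and they contribute nothing to term $(\mathrm{III})$ while obeying $\rho_{s,a}\leq\sqrt{M/(N(s,a)\vee1)}$ in term $(\mathrm{I})$.
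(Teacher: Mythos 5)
Your proposal is correct and follows essentially the same route as the paper's proof: the same three-term decomposition with term $(\mathrm{III})$ discarded on the event $P^o\in\cPhat$ of \cref{prop:tv-high-prob-event-model-based}, the same $2\gamma^{K+1}/(1-\gamma)^2$ contraction bound for term $(\mathrm{II})$, the same $\tfrac{4\gamma}{1-\gamma}\rho_{s,\pi^*(s)}$ one-step perturbation bound via H\"older/TV unrolled along $d^{\pi^*}$, and the identical finish via \cref{lem:bound-on-binomial-inverse-moments} with $k=1/2$, the change of measure to $C_{\pi^*}$, and Cauchy--Schwarz for the $\sqrt{\cardS}$ factor. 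The only (harmless) deviation is in closing the recursion for term $(\mathrm{I})$: you pass to $V^{\pi^*}_{P^o}-V^{\pi^*}_{\cPhat}$ and invoke a single worst-case stationary kernel $\Ptilde$ (which exists for compact $(s,a)$-rectangular sets, so your anticipated obstacle (a) is not an issue), whereas the paper keeps the difference $V^{\pi^*}_{P^o}-V^{\pihat^*}_{\cPhat}$ and closes the recursion by comparing $Q^{\pihat^*}_{\cPhat}(s,\pihat^*(s))\geq Q^{\pihat^*}_{\cPhat}(s,\pi^*(s))$ at every step; both yield the same recursion and the same constants.
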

\begin{proof}
We first make important definitions that will be useful for our analyses. We denote the value function of policy $\pi$ for the transition dynamics model $P$ as $V^\pi_P$. We now denote the robust value function \cite{panaganti22a,xu-panaganti-2023samplecomplexity,panaganti-rfqi} for uncertainty set $\cPhat^\tv$ as 
$V^{\pi}_{\cPhat} = \min_{P\in\cPhat} V^\pi_P$ and its optimal robust policy as $\pihat^* = \argmax_{\pi}V^{\pi}_{\cPhat} $. We note that for the sake of notational simplicity we drop the superscript $\tv$ going forward, that is, we denote $\cPhat^\tv$ simply as $\cPhat$. We let $Q^{\pi}_{\cPhat}$ be its corresponding robust Q-function. 
From robust RL \cite{panaganti22a,xu-panaganti-2023samplecomplexity,panaganti-rfqi} we can write the following robust Bellman equation: $Q^{\pi}_{\cPhat}(s,a)=r(s,a) + \gamma \min_{P_{s,a}\in\cPhat_{s,a}}\E_{s'\sim P_{s,a}}(V^{\pi}_{\cPhat}(s'))$. To make it notationally easy, we write $V^{\pi^*}$ ($d^\pi$) as $V^{\pi^*}_{P^o}$ ($d^\pi_{P^o}$) making the dependence on the model $P^o$ explicit.

We now start analyzing offline RL suboptimality as:
\begin{align}
	\E_{s_0\sim d_0} [V^{\pi^*}_{P^o}(s_0) - {V}^{\pi_K}_{P^o}(s_0)] &= \E_{s_0\sim d_0} [V^{\pi^*}_{P^o}(s_0) - V^{\pi_K}_{\cPhat}(s_0) + V^{\pi_K}_{\cPhat}(s_0) - {V}^{\pi_K}_{P^o}(s_0)]\nn\\
    &\stackrel{(a)}{\leq} \E_{s_0\sim d_0} [V^{\pi^*}_{P^o}(s_0) - V^{\pi_K}_{\cPhat}(s_0) ]\nn\\
    &= \E_{s_0\sim d_0} [V^{\pi^*}_{P^o}(s_0) - V^{\pihat^*}_{\cPhat}(s_0) + V^{\pihat^*}_{\cPhat}(s_0) - V^{\pi_K}_{\cPhat}(s_0) ]\nn\\
    &\leq \E_{s_0\sim d_0} [V^{\pi^*}_{P^o}(s_0) - V^{\pihat^*}_{\cPhat}(s_0)] + \norm{V^{\pihat^*}_{\cPhat} - V^{\pi_K}_{\cPhat} }_\infty \nn\\
    &\stackrel{(b)}{\leq}  \E_{s_0\sim d_0} [V^{\pi^*}_{P^o}(s_0) - V^{\pihat^*}_{\cPhat}(s_0)] + \frac{2\gamma^{K+1}}{(1-\gamma)^2}, \label{eq:mpqi-part-0}
\end{align}
where $(a)$ follows from \cref{prop:tv-high-prob-event-model-based} and definition of robust value function $V^{\pi_K}_{\cPhat}(s_0)$ and
$(b)$ follows from robust amplification lemma \cite[Lemma 10, eq.(28)]{panaganti22a}. For the rest of the analysis, we focus on analyzing $\E_{s_0\sim d_0} [V^{\pi^*}_{P^o}(s_0) - V^{\pihat^*}_{\cPhat}(s_0)]$.

Observe that,
\begin{align}
    &\E_{s_0\sim d_0} [V^{\pi^*}_{P^o}(s_0) - V^{\pihat^*}_{\cPhat}(s_0)] = \E_{s_0\sim d_0} [Q^{\pi^*}_{P^o}(s_0,\pi^*(s_0)) - Q^{\pihat^*}_{\cPhat}(s_0,\pihat^*(s_0))] \nn\\
    &\stackrel{(c)}{\leq} \E_{s_0\sim d_0} [Q^{\pi^*}_{P^o}(s_0,\pi^*(s_0)) - Q^{\pihat^*}_{\cPhat}(s_0,\pi^*(s_0))]\nn\\
    &\stackrel{(d)}{=} \E_{s_0\sim d_0} [r(s_0,\pi^*(s_0)) + \gamma \E_{s'\sim P^o_{s_0,\pi^*(s_0)}}(V^{\pi^*}_{P^o}(s')) \nn\\
    &\hspace{1cm}-r(s_0,\pi^*(s_0)) - \gamma \min_{P_{s_0,\pi^*(s_0)}\in\cPhat_{s_0,\pi^*(s_0)}}\E_{s'\sim P_{s_0,\pi^*(s_0)}}(V^{\pihat^*}_{\cPhat}(s')) ]\nn\\
    &= \E_{s_0\sim d_0} [ \gamma \E_{s'\sim P^o_{s_0,\pi^*(s_0)}}(V^{\pi^*}_{P^o}(s')) - \gamma \E_{s'\sim P^o_{s_0,\pi^*(s_0)}}(V^{\pihat^*}_{\cPhat}(s'))]  \nn\\
    &\hspace{1cm}+\E_{s_0\sim d_0} [\gamma \E_{s'\sim P^o_{s_0,\pi^*(s_0)}}(V^{\pihat^*}_{\cPhat}(s')) - \gamma \min_{P_{s_0,\pi^*(s_0)}\in\cPhat_{s_0,\pi^*(s_0)}}\E_{s'\sim P_{s_0,\pi^*(s_0)}}(V^{\pihat^*}_{\cPhat}(s')) ]\nn\\
    &= \E_{s_0\sim d_0} [ 
    \gamma \E_{s'\sim P^o_{s_0,\pi^*(s_0)}}(V^{\pi^*}_{P^o}(s')-V^{\pihat^*}_{\cPhat}(s')) ]  \nn\\
    &\hspace{1cm}+\underbrace{ \E_{s_0\sim d_0} [\gamma \E_{s'\sim P^o_{s_0,\pi^*(s_0)}}(V^{\pihat^*}_{\cPhat}(s'))  - \gamma \min_{P_{s_0,\pi^*(s_0)}\in\cPhat_{s_0,\pi^*(s_0)}}\E_{s'\sim P_{s_0,\pi^*(s_0)}}(V^{\pihat^*}_{\cPhat}(s'))]}_{(I)} , \label{eq:mpqi-part-1}
\end{align}
where $(c)$ follows since $\pihat^* $ is optimal robust policy of $V^{\pi}_{\cPhat}$ and $(d)$ follows from classical and robust Bellman equations.

Analyzing $(I)$ in \cref{eq:mpqi-part-1} but for any $P_{s_0,\pi^*(s_0)}\in\cPhat_{s_0,\pi^*(s_0)}$ gives us:
\begin{align}
    &\E_{s_0\sim d_0} [\gamma \E_{s'\sim P^o_{s_0,\pi^*(s_0)}}(V^{\pihat^*}_{\cPhat}(s'))  - \gamma \E_{s'\sim P_{s_0,\pi^*(s_0)}}(V^{\pihat^*}_{\cPhat}(s'))] \label{eq:mpqi-analyze-I}\\
    &=\E_{s_0\sim d_0} [ \gamma \E_{s'\sim P^o_{s_0,\pi^*(s_0)}}(V^{\pihat^*}_{\cPhat}(s')) - \gamma \E_{s'\sim \Phat^o_{s_0,\pi^*(s_0)}}(V^{\pihat^*}_{\cPhat}(s')) \nn\\
    &\hspace{2cm} +\gamma \E_{s'\sim \Phat^o_{s_0,\pi^*(s_0)}}(V^{\pihat^*}_{\cPhat}(s'))  - \gamma \E_{s'\sim P_{s_0,\pi^*(s_0)}}(V^{\pihat^*}_{\cPhat}(s'))] \nn\\
    &\stackrel{(g)}{\leq}   \frac{2\gamma}{1-\gamma} \E_{s_0\sim d_0} \bigg[ \min \bigg\{1, \sqrt{\frac{\max\{\cardS,2\log(2|\cS||\cA|/\delta)\}}{N(s_0,\pi^*(s_0))}} \;\indic\{N(s_0,\pi^*(s_0))\geq 1 \} \bigg\} \bigg] \nn\\
    &\hspace{2cm} +\gamma \E_{s_0\sim d_0} [\E_{s'\sim \Phat^o_{s_0,\pi^*(s_0)}}(V^{\pihat^*}_{\cPhat}(s'))  -  \E_{s'\sim P_{s_0,\pi^*(s_0)}}(V^{\pihat^*}_{\cPhat}(s'))]\nn \\
    &\stackrel{(h)}{\leq}   \frac{4\gamma}{1-\gamma} \E_{s_0\sim d_0} \bigg[ \sqrt{\frac{\max\{\cardS,2\log(2|\cS||\cA|/\delta)\}}{N(s_0,\pi^*(s_0))}} \;\indic\{N(s_0,\pi^*(s_0))\geq 1 \}  \bigg], \label{eq:mpqi-part-2}
\end{align}
where $(g)$, holds with probability at least $1-\delta$, follows from Hölder's inequality and by \cref{prop:tv-high-prob-event-model-based}, and $(h)$ by Hölder's inequality and the definition of uncertainty set $\cPhat$.

Substituting \cref{eq:mpqi-part-2} back in  \cref{eq:mpqi-part-1}, we get the following recursion
\begin{align*}
    \E_{s_0\sim d_0} [V^{\pi^*}_{P^o}(s_0)& - V^{\pihat^*}_{\cPhat}(s_0)] \leq \gamma \E_{s_0\sim d_0} [ \E_{s'\sim P^o_{s_0,\pi^*(s_0)}}(V^{\pi^*}_{P^o}(s')-V^{\pihat^*}_{\cPhat}(s')) ] \nn\\
    &\hspace{0.5cm}+ \frac{4\gamma}{1-\gamma} \E_{s_0\sim d_0} \bigg[ \sqrt{\frac{\max\{\cardS,2\log(2|\cS||\cA|/\delta)\}}{N(s_0,\pi^*(s_0))}} \;\indic\{N(s_0,\pi^*(s_0))\geq 1 \}  \bigg] \nn\\
    &=\gamma \E_{s_1\sim d^{\pi^*}_{P^o,1}} [V^{\pi^*}_{P^o}(s_1)-V^{\pihat^*}_{\cPhat}(s_1) ] \nn\\
    &\hspace{0.5cm}+ \frac{4\gamma}{1-\gamma} \E_{s_0\sim d_0} \bigg[ \sqrt{\frac{\max\{\cardS,2\log(2|\cS||\cA|/\delta)\}}{N(s_0,\pi^*(s_0))}} \;\indic\{N(s_0,\pi^*(s_0))\geq 1 \}  \bigg] \nn\\
    &\leq \gamma^2 \E_{s_2\sim d^{\pi^*}_{P^o,2}} [V^{\pi^*}_{P^o}(s_2)-V^{\pihat^*}_{\cPhat}(s_2) ] \nn\\
    &\hspace{0.5cm}+ \gamma\frac{4\gamma}{1-\gamma} \E_{s_1\sim d^{\pi^*}_{P^o,1}} \bigg[ \sqrt{\frac{\max\{\cardS,2\log(2|\cS||\cA|/\delta)\}}{N(s_1,\pi^*(s_1))}} \;\indic\{N(s_1,\pi^*(s_1))\geq 1 \}  \bigg] \nn\\
    &\hspace{0.5cm}+\frac{4\gamma}{1-\gamma} \E_{s_0\sim d_0} \bigg[ \sqrt{\frac{\max\{\cardS,2\log(2|\cS||\cA|/\delta)\}}{N(s_0,\pi^*(s_0))}} \;\indic\{N(s_0,\pi^*(s_0))\geq 1 \}  \bigg] \nn\\
    &\leq\frac{4\gamma}{1-\gamma}  \sum_{t=0}^\infty \gamma^t \E_{s_t\sim d^{\pi^*}_{P^o,t}} \bigg[ \sqrt{\frac{\max\{\cardS,2\log(2|\cS||\cA|/\delta)\}}{N(s_t,\pi^*(s_t))}} \;\indic\{N(s_t,\pi^*(s_t))\geq 1 \}  \bigg] \nn\\
    &= \frac{4\gamma}{(1-\gamma)^2}  \E_{s\sim d^{\pi^*}_{P^o}} \bigg[ \sqrt{\frac{\max\{\cardS,2\log(2|\cS||\cA|/\delta)\}}{N(s,\pi^*(s))}} \;\indic\{N(s,\pi^*(s))\geq 1 \}  \bigg] ,
\end{align*}
where last equality follows by the definition of state-distribution $d^{\pi^*}_{P^o} = (1-\gamma) \sum_{t=0}^\infty \gamma^t d^{\pi^*}_{P^o,t}$. Now, putting this back in \cref{eq:mpqi-part-0}, we see that the offline RL guarantee becomes:
\begin{align}
    \E_\cD &[\E_{s_0\sim d_0} [V^{\pi^*}_{P^o}(s_0) - {V}^{\pi_K}_{P^o}(s_0)]]   \nn\\
    &\leq \frac{2\gamma^{K+1}}{(1-\gamma)^2}+\frac{4\gamma}{(1-\gamma)^2}  \E_{s\sim d^{\pi^*}_{P^o}} \E_\cD \bigg[ \sqrt{\frac{\max\{\cardS,2\log(2|\cS||\cA|/\delta)\}}{N(s,\pi^*(s))}} \;\indic\{N(s,\pi^*(s))\geq 1 \}  \bigg] \nn \\
    &\leq \frac{2\gamma^{K+1}}{(1-\gamma)^2}+\frac{4\gamma}{(1-\gamma)^2}  \E_{s\sim d^{\pi^*}_{P^o}} \E_\cD \bigg[ \sqrt{\frac{\max\{\cardS,2\log(2|\cS||\cA|/\delta)\}}{N(s,\pi^*(s)) \vee 1}} \bigg] \nn \\
    &\stackrel{(i)}{\leq}\frac{2\gamma^{K+1}}{(1-\gamma)^2} +  \frac{4\gamma}{(1-\gamma)^2}  \E_{s\sim d^{\pi^*}_{P^o}} \bigg[\sqrt{\max\{\cardS,2\log(2|\cS||\cA|/\delta)\}}\frac{16}{\sqrt{N \mu(s,\pi^*(s))}}\bigg] \nn\\
    &\stackrel{(j)}{\leq} \frac{2\gamma^{K+1}}{(1-\gamma)^2} +  \frac{64\gamma}{(1-\gamma)^2}  \E_{s\sim d^{\pi^*}_{P^o}} \bigg[\sqrt{\frac{ C_{\pi^*}\max\{\cardS,2\log(2|\cS||\cA|/\delta)\}}{N d^{\pi^*}_{P^o}(s,\pi^*(s)) }}\bigg]\nn\\
    &= \frac{2\gamma^{K+1}}{(1-\gamma)^2} + \frac{64\gamma \sqrt{C_{\pi^*}}}{(1-\gamma)^2}  \sqrt{\frac{\max\{\cardS,2\log(2|\cS||\cA|/\delta)\}}{N }} \sum_{s} \sqrt{d^{\pi^*}_{P^o}(s,\pi^*(s))}\nn \\
    &\stackrel{(k)}{\leq} \frac{2\gamma^{K+1}}{(1-\gamma)^2} + \frac{64\gamma \sqrt{ C_{\pi^*}}}{(1-\gamma)^2}  \sqrt{\frac{\max\{\cardS,2\log(2|\cS||\cA|/\delta)\}}{N }}\sqrt{|\cS|}. \label{eq:tv-thm-final-eq}
\end{align}
Recall that $(s_i,a_i)$-pairs in $\cD$ are i.i.d. and follow the data generating policy $\mu$. That is, for any $(s,a)$, $N(s,a)$ follows $\Binomial(N,\mu(s,a))$. Then $(i)$ follows from \cref{lem:bound-on-binomial-inverse-moments} with $k=1/2$. We note here that this technique of bridging two visitation distributions, $\mu$ and $d^{\pi^*}_{P^o}$, is critical and original in our paper. We have $(j)$ by recalling the definition of single-policy concentrability with comparator policy $\pi^*$, that is, \[ C_{\pi^*} = \max_{s,a} \frac{ d^{\pi^*}_{P^o}(s,a) }{\mu(s,a)}. \] $(k)$ is due to Cauchy-Schwarz inequality and by recognizing $d^{\pi^*}_{P^o}(\cdot,\pi^*(\cdot))$ as a probability distribution. This completes the proof of this main theorem.
\end{proof}

We now provide a similar offline RL suboptimality guarantee below for the Wasserstein uncertainty set using \cref{prop:wass-high-prob-event-model-based}.
\begin{theorem}\label{thm:wass-tabular-offlineRL-subopt} Let $\pi_K$ be the DRQI policy  after $K$ iterations under the Wasserstein uncertainty set $\cPhat^\wass$. With probability at least $1-\delta$ it holds that
    \begin{align*}
    \E_{s_0\sim d_0} [V^{\pi^*}(s_0) - \E_\cD [{V}^{\pi_K}(s_0)]] \leq  \frac{64\gamma \sqrt{C_{\pi^*}}}{(1-\gamma)^2}  \sqrt{\frac{C\cardS^2 \log(\cardS\cardA/\delta)}{N }} + \frac{2\gamma^{K+1}}{(1-\gamma)^2}.
\end{align*}
\end{theorem}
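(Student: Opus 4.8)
The plan is to mirror the proof of \cref{thm:tv-tabular-offlineRL-subopt} almost verbatim, since the divergence-specific reasoning enters at only one place: controlling the one-step value gap between the true model and a worst-case model in the uncertainty set. First I would establish, exactly as in the TV argument, the decomposition $V^{\pi^*}_{P^o}(s_0) - V^{\pi_K}_{P^o}(s_0) = (V^{\pi^*}_{P^o}(s_0) - V^{\pi_K}_{\cPhat}(s_0)) + (V^{\pi_K}_{\cPhat}(s_0) - V^{\pi_K}_{P^o}(s_0))$, where $V^{\pi_K}_{\cPhat}$ is the robust value of $\pi_K$ under $\cPhat^\wass$. By \cref{prop:wass-high-prob-event-model-based}, $P^o \in \cPhat^\wass$ with probability at least $1-\delta$, so the second term is nonpositive by definition of the robust value function. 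I would then split the first term as $(V^{\pi^*}_{P^o} - V^{\pihat^*}_{\cPhat}) + (V^{\pihat^*}_{\cPhat} - V^{\pi_K}_{\cPhat})$ with $\pihat^*$ the optimal robust policy for $\cPhat^\wass$; the contraction of the robust Bellman operator controls the second piece by $2\gamma^{K+1}/(1-\gamma)^2$, producing the additive iteration-error term.

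The core is to bound $\E_{s_0\sim d_0}[V^{\pi^*}_{P^o}(s_0) - V^{\pihat^*}_{\cPhat}(s_0)]$. Using the classical and robust Bellman equations together with the suboptimality of $\pi^*$ for the robust objective, I would unfold this into a recursion whose inhomogeneous term is the one-step gap $\E_{s'\sim P^o_{s,\pi^*(s)}}(V^{\pihat^*}_{\cPhat}) - \E_{s'\sim P_{s,\pi^*(s)}}(V^{\pihat^*}_{\cPhat})$ for a worst-case $P \in \cPhat^\wass$. This is the single step that differs from the TV case. For the discrete metric $\ell(s,s')=\indic\{s\neq s'\}$, Kantorovich--Rubinstein duality (cf.\ Villani Thm.\ 6.15, already invoked in \cref{prop:wass-high-prob-event-model-based}) gives $\dwass(P,Q) = \dtv(P,Q)$, so the one-step gap reduces to precisely the TV computation: inserting $\Phat^o_{s,\pi^*(s)}$, applying Hölder's inequality with $\norm{V^{\pihat^*}_{\cPhat}}_\infty \leq 1/(1-\gamma)$, and noting that both $P^o$ and the worst-case $P$ lie within Wasserstein radius $\rho_{s,\pi^*(s)}$ of $\Phat^o_{s,\pi^*(s)}$, the gap is at most $4\rho_{s,\pi^*(s)}/(1-\gamma)$, now with $\rho_{s,a} = 1 \wedge \sqrt{C\cardS\log(\cardS\cardA/\delta)/N(s,a)}$.

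From here the argument is identical to the TV proof. Telescoping the recursion along the trajectory generated by $\pi^*$ on $P^o$ converts the per-step bound into $\frac{4\gamma}{(1-\gamma)^2}\E_{s\sim d^{\pi^*}_{P^o}}[\rho_{s,\pi^*(s)}]$. I would replace $N(s,\pi^*(s))$ by $N\mu(s,\pi^*(s))$ using \cref{lem:bound-on-binomial-inverse-moments} with $k=1/2$, change measure to the single-policy concentrability coefficient $C_{\pi^*}$, and close with Cauchy--Schwarz together with the fact that $d^{\pi^*}_{P^o}(\cdot,\pi^*(\cdot))$ is a probability distribution, contributing the extra $\sqrt{\cardS}$ factor. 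Combining the radius's $\sqrt{C\cardS\log(\cardS\cardA/\delta)}$ with this $\sqrt{\cardS}$ yields $\sqrt{C\cardS^2\log(\cardS\cardA/\delta)}$ and the claimed constant $64$. The only genuinely new ingredient is the identification $\dwass=\dtv$ under the discrete metric; everything downstream is structurally unchanged, so I anticipate no real obstacle beyond confirming that this identification feeds the Wasserstein radius into the TV recursion without altering any constant.
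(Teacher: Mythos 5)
Your proposal is correct and follows essentially the same route as the paper: the only divergence-specific step is the one-step value gap, which the paper bounds via Kantorovich--Rubinstein using that $V^{\pihat^*}_{\cPhat}$ is $2/(1-\gamma)$-Lipschitz under the discrete metric, while you equivalently note that $\dwass=\dtv$ under the discrete metric and apply H\"older; both give the same $4\gamma\rho_{s,a}/(1-\gamma)$ bound. Everything downstream (the recursion, \cref{lem:bound-on-binomial-inverse-moments}, the change of measure to $C_{\pi^*}$, and Cauchy--Schwarz) matches the paper's argument, including the constants.
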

\begin{proof}
The proof follows exactly as in the proof of \cref{thm:tv-tabular-offlineRL-subopt}. We replace the dependence on \cref{prop:tv-high-prob-event-model-based} with \cref{prop:wass-high-prob-event-model-based}. 
    We then only have to take care of step $(g)$ in \cref{eq:mpqi-part-2}. We start from analyzing $(I)$ as in \cref{eq:mpqi-analyze-I}:
\begin{align*}
    &\E_{s_0\sim d_0} [\gamma \E_{s'\sim P^o_{s_0,\pi^*(s_0)}}(V^{\pihat^*}_{\cPhat}(s'))  - \gamma \E_{s'\sim P_{s_0,\pi^*(s_0)}}(V^{\pihat^*}_{\cPhat}(s'))] \nn\\
    &= \E_{s_0\sim d_0} [ \gamma \E_{s'\sim P^o_{s_0,\pi^*(s_0)}}(V^{\pihat^*}_{\cPhat}(s')) - \gamma \E_{s'\sim \Phat^o_{s_0,\pi^*(s_0)}}(V^{\pihat^*}_{\cPhat}(s')) \nn\\
    &\hspace{2cm} +\gamma \E_{s'\sim \Phat^o_{s_0,\pi^*(s_0)}}(V^{\pihat^*}_{\cPhat}(s'))  - \gamma \E_{s'\sim P_{s_0,\pi^*(s_0)}}(V^{\pihat^*}_{\cPhat}(s'))] \\
    &\stackrel{(a)}{\leq} \E_{s_0\sim d_0} \bigg[ \frac{2\gamma}{1-\gamma} \dwass( P^o_{s_0,\pi^*(s_0)}, \Phat^o_{s_0,\pi^*(s_0)} )   \nn\\
    &\hspace{2cm} +\gamma \E_{s'\sim \Phat^o_{s_0,\pi^*(s_0)}}(V^{\pihat^*}_{\cPhat}(s'))  - \gamma \E_{s'\sim P_{s_0,\pi^*(s_0)}}(V^{\pihat^*}_{\cPhat}(s')) \bigg] \\
    &\stackrel{(b)}{\leq}   \frac{2\gamma}{1-\gamma} 
    \E_{s_0\sim d_0}\bigg[ \sqrt{\frac{C\cardS \log(\cardS\cardA/\delta)}{N(s_0,\pi^*(s_0))}}\indic\{N(s_0,\pi^*(s_0))\geq 1\} \bigg] \nn\\
    &\hspace{2cm} +\gamma \E_{s_0\sim d_0} [\E_{s'\sim \Phat^o_{s_0,\pi^*(s_0)}}(V^{\pihat^*}_{\cPhat}(s'))  - \gamma \E_{s'\sim P_{s_0,\pi^*(s_0)}}(V^{\pihat^*}_{\cPhat}(s'))] \nn\\
    &\stackrel{(c)}{\leq}   \frac{4\gamma}{1-\gamma} \E_{s_0\sim d_0}\bigg[
    \sqrt{\frac{C\cardS \log(\cardS\cardA/\delta)}{N(s_0,\pi^*(s_0))}}\indic\{N(s_0,\pi^*(s_0))\geq 1\} \bigg],
    \end{align*}
    where $(a)$ follows by applying the  Kantorovich-Rubinstein theorem \citep[Theorem 11.8.2]{dudley2002real} and noting the fact that the value functions are $2/(1-\gamma)$-Lipschitz in their state dimension under the discrete metric $\ell(\cdot,\cdot)$ since $\norm{V^{\pihat^*}_{\cPhat}}_\infty \leq 1/(1-\gamma)$,
    $(b)$ holds with probability at least $1-\delta$ by \cref{prop:wass-high-prob-event-model-based}, and $(c)$ is again by the Kantorovich-Rubinstein theorem and the definition of uncertainty set $\cPhat$. Now combining and analyzing the rest of the steps as in the proof of \cref{thm:tv-tabular-offlineRL-subopt} completes the proof.
\end{proof}

We now provide a similar offline RL suboptimality guarantee below for the KL uncertainty set using \cref{prop:kl-high-prob-event-model-based}.
\begin{theorem}\label{thm:kl-tabular-offlineRL-subopt} Let $\pi_K$ be the DRQI policy  after $K$ iterations under the KL uncertainty set $\cPhat^\kl$ (under add-1 estimator). With probability at least $1-\delta$ it holds that
    \begin{align*}
    \E_{s_0\sim d_0} [V^{\pi^*}(s_0) - \E_\cD [{V}^{\pi_K}(s_0)]] \leq  \frac{64\gamma \sqrt{C_{\pi^*}}}{(1-\gamma)^2}  \sqrt{\frac{C\cardS^2 \log(\cardS^2\cardA/\delta)\log(N)}{N }} + \frac{2\gamma^{K+1}}{(1-\gamma)^2}.
\end{align*}
\end{theorem}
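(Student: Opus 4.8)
The plan is to follow the proof of \cref{thm:tv-tabular-offlineRL-subopt} essentially verbatim, replacing the total-variation concentration input with its Kullback--Leibler counterpart. First I would invoke \cref{prop:kl-high-prob-event-model-based} to guarantee that $P^o\in\cPhat^\kl$ with probability at least $1-\delta$, so that the same telescoping decomposition $V^{\pi^*}_{P^o}(s_0)-{V}^{\pi_K}_{P^o}(s_0) = (V^{\pi^*}_{P^o}(s_0)-V^{\pihat^*}_{\cPhat}(s_0)) + (V^{\pihat^*}_{\cPhat}(s_0)-V^{\pi_K}_{\cPhat}(s_0)) + (V^{\pi_K}_{\cPhat}(s_0)-{V}^{\pi_K}_{P^o}(s_0))$ applies: the last term is nonpositive by the definition of the robust value and the membership $P^o\in\cPhat^\kl$, and the middle term contributes the $2\gamma^{K+1}/(1-\gamma)^2$ geometric tail via the robust amplification lemma. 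Only the first term requires genuine work.

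The single new ingredient appears in the analog of steps $(g)$--$(h)$ in \cref{eq:mpqi-part-2}, where I must bound the model-mismatch contribution $\gamma|\E_{s'\sim P^o_{s,a}}(V^{\pihat^*}_{\cPhat}(s')) - \E_{s'\sim P_{s,a}}(V^{\pihat^*}_{\cPhat}(s'))|$ for $P\in\cPhat^\kl_{s,a}$. Here I would insert $\Ptilde^o_{s,a}$ and split by the triangle inequality, then on each piece apply H\"older's inequality together with Pinsker's inequality $\dtv(p,q)\leq\sqrt{\dkl(p,q)/2}$ and the bound $\|V^{\pihat^*}_{\cPhat}\|_\infty\leq 1/(1-\gamma)$. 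Since \cref{prop:kl-high-prob-event-model-based} gives $\dkl(P^o_{s,a},\Ptilde^o_{s,a})\leq\rho_{s,a}$ while membership gives $\dkl(P_{s,a},\Ptilde^o_{s,a})\leq\rho_{s,a}$, Pinsker converts the KL radius $\rho_{s,a}=C\cardS\log(\cardS^2\cardA/\delta)\log(N)/N(s,a)$ into a TV-type bound $\sqrt{\rho_{s,a}/2}\sim\sqrt{\cardS\log(\cardS^2\cardA/\delta)\log(N)/N(s,a)}$, yielding a per-step contribution $\tfrac{4\gamma}{1-\gamma}\sqrt{C\cardS\log(\cardS^2\cardA/\delta)\log(N)/N(s,\pi^*(s))}\,\indic\{N(s,\pi^*(s))\geq 1\}$, which has exactly the $\sqrt{1/N(s,\pi^*(s))}$ form of the TV case.

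From this point the argument is identical to the TV case: unrolling the recursion along the trajectory generated by $\pi^*$ on $P^o$ produces a factor $\tfrac{4\gamma}{(1-\gamma)^2}\E_{s\sim d^{\pi^*}_{P^o}}[\cdot]$, then \cref{lem:bound-on-binomial-inverse-moments} with $k=1/2$ replaces $N(s,\pi^*(s))$ by $N\mu(s,\pi^*(s))$ up to the constant $16$, the single-policy concentrability definition converts $\mu$ into $d^{\pi^*}_{P^o}$ introducing $\sqrt{C_{\pi^*}}$, and Cauchy--Schwarz over $s$ contributes the final $\sqrt{\cardS}$. Combining the $\cardS$ already inside the radical with this extra $\sqrt{\cardS}$ produces the stated $\cardS^2$ under the square root, and adding back the geometric tail gives the claimed bound.

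The main obstacle---indeed the only conceptual step---is the Pinsker conversion: the KL radius scales like $\cardS/N(s,a)$, \emph{linear} in $1/N$, whereas the binomial inverse-moment machinery of \cref{lem:bound-on-binomial-inverse-moments} is tailored to the $\sqrt{1/N}$ rate. Taking the square root via Pinsker is precisely what reconciles the two, and it is what ensures the KL sample complexity inherits the same $1/N$ rate and the same $\cardS^2$ state dependence as \cref{thm:tv-tabular-offlineRL-subopt}, up to the extra $\log N$ factor that enters through the add-$1$ estimator in \cref{lem:kl-concen}.
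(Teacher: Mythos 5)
Your proposal matches the paper's own proof essentially step for step: invoke \cref{prop:kl-high-prob-event-model-based} in place of \cref{prop:tv-high-prob-event-model-based}, insert $\Ptilde^o_{s,a}$ and apply H\"older plus Pinsker to convert the KL radius into a $\sqrt{\rho_{s,a}}$-type total-variation bound, and then reuse the TV recursion, binomial inverse-moment lemma, concentrability, and Cauchy--Schwarz steps unchanged. You also correctly identify the Pinsker square root as the reason the linear-in-$1/N$ KL radius yields the same $1/\sqrt{N}$ rate and $\cardS^2$ dependence as the TV case, so the argument is complete as stated.
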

\begin{proof}
    The proof again follows exactly as in the proof of \cref{thm:tv-tabular-offlineRL-subopt}. We replace the dependence on \cref{prop:tv-high-prob-event-model-based} with \cref{prop:kl-high-prob-event-model-based}. 
    We then only have to take care of step $(g)$ in \cref{eq:mpqi-part-2}. We start from analyzing $(I)$ as in \cref{eq:mpqi-analyze-I}:
\begin{align*}
    &\E_{s_0\sim d_0} [\gamma \E_{s'\sim P^o_{s_0,\pi^*(s_0)}}(V^{\pihat^*}_{\cPhat}(s'))  - \gamma \E_{s'\sim P_{s_0,\pi^*(s_0)}}(V^{\pihat^*}_{\cPhat}(s'))] \nn\\
    &= \E_{s_0\sim d_0} [ \gamma \E_{s'\sim P^o_{s_0,\pi^*(s_0)}}(V^{\pihat^*}_{\cPhat}(s')) - \gamma \E_{s'\sim \Ptilde^o_{s_0,\pi^*(s_0)}}(V^{\pihat^*}_{\cPhat}(s')) \nn\\
    &\hspace{2cm} +\gamma \E_{s'\sim \Ptilde^o_{s_0,\pi^*(s_0)}}(V^{\pihat^*}_{\cPhat}(s'))  - \gamma \E_{s'\sim P_{s_0,\pi^*(s_0)}}(V^{\pihat^*}_{\cPhat}(s'))] \\
    &\stackrel{(a)}{\leq} \E_{s_0\sim d_0} [ \gamma \sqrt{2\ln(2) \dkl( P^o_{s_0,\pi^*(s_0)}, \Ptilde^o_{s_0,\pi^*(s_0)} )} \|V^{\pihat^*}_{\cPhat}(s')\|_\infty]  \nn\\
    &\hspace{2cm} +\gamma \E_{s'\sim \Ptilde^o_{s_0,\pi^*(s_0)}}(V^{\pihat^*}_{\cPhat}(s'))  - \gamma \E_{s'\sim P_{s_0,\pi^*(s_0)}}(V^{\pihat^*}_{\cPhat}(s'))] \\
    &\stackrel{(b)}{\leq}   \frac{2\gamma}{1-\gamma} \E_{s_0\sim d_0} \bigg[
    \sqrt{\frac{C\cardS \log(\cardS^2\cardA/\delta)\log(N)}{N(s_0,\pi^*(s_0))}}\indic\{N(s_0,\pi^*(s_0))\geq 1\} \bigg ]\nn\\
    &\hspace{2cm} +\gamma \E_{s_0\sim d_0} [\E_{s'\sim \Ptilde^o_{s_0,\pi^*(s_0)}}(V^{\pihat^*}_{\cPhat}(s'))  - \gamma \E_{s'\sim P_{s_0,\pi^*(s_0)}}(V^{\pihat^*}_{\cPhat}(s'))] \nn\\
    &\stackrel{(c)}{\leq}   \frac{4\gamma}{1-\gamma} \E_{s_0\sim d_0} \bigg[
    \sqrt{\frac{C\cardS \log(\cardS^2\cardA/\delta)\log(N)}{N(s_0,\pi^*(s_0))}}\indic\{N(s_0,\pi^*(s_0))\geq 1\} \bigg],
    \end{align*}
    where $(a)$ follows from Hölder's inequality and Pinsker's inequality \citep[Lemma 12.6.1]{cover1991information}, $(b)$ holds with probability at least $1-\delta$ by \cref{prop:kl-high-prob-event-model-based}, and $(c)$ again follows from Hölder's inequality and Pinsker's inequality under the definition of uncertainty set $\cPhat$. 
    Now combining and analyzing the rest of the steps as in the proof of \cref{thm:tv-tabular-offlineRL-subopt} completes the proof.
\end{proof}

We also provide a similar offline RL suboptimality guarantee below for the chi-square uncertainty set using \cref{prop:chi-square-high-prob-event-model-based}.
\begin{theorem}\label{thm:chi-square-tabular-offlineRL-subopt} Let $\pi_K$ be the DRQI policy  after $K$ iterations under the chi-square uncertainty set $\cPhat^\chisq$ (under add-$\log(1/\delta)$ estimator). With probability at least $1-\delta$ it holds that
    \begin{align*}
    \E_{s_0\sim d_0} [V^{\pi^*}(s_0) - \E_\cD [{V}^{\pi_K}(s_0)]] \leq  \frac{64\gamma \sqrt{C_{\pi^*}}}{(1-\gamma)^2}  \sqrt{\frac{C\cardS^2 \log(\cardS^2\cardA/\delta)}{N }} + \frac{2\gamma^{K+1}}{(1-\gamma)^2}.
\end{align*}
\end{theorem}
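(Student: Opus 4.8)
The plan is to follow the template established in the proof of \cref{thm:tv-tabular-offlineRL-subopt}, since the chi-square case differs from the total-variation case only in the single step where the per-state divergence between the model and the estimator is converted into a bound on the difference of expected next-state values. First I would invoke \cref{prop:chi-square-high-prob-event-model-based} in place of \cref{prop:tv-high-prob-event-model-based}, so that $P^o\in\cPhat^\chisq$ with probability at least $1-\delta$. This makes the decomposition in \cref{eq:mpqi-part-0} valid: the term $V^{\pi_K}_{\cPhat}(s_0)-V^{\pi_K}_{P^o}(s_0)$ is non-positive because $P^o$ lies in $\cPhat$, and the amplification argument bounding $\|V^{\pihat^*}_{\cPhat}-V^{\pi_K}_{\cPhat}\|_\infty$ by $2\gamma^{K+1}/(1-\gamma)^2$ is metric-independent and carries over verbatim.

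The crux is to redo step $(g)$ of \cref{eq:mpqi-part-2}, i.e.~to bound the term $(I)$ from \cref{eq:mpqi-analyze-I}. The key inequality for chi-square is a weighted Cauchy--Schwarz: for any distributions $P,Q$ and bounded $V$,
\begin{align*}
    \Big| \E_{s'\sim P}[V(s')] - \E_{s'\sim Q}[V(s')] \Big| = \Big| \sum_{s'} \frac{P(s')-Q(s')}{\sqrt{Q(s')}}\,\sqrt{Q(s')}\,V(s') \Big| \leq \sqrt{\dchi(P,Q)}\,\sqrt{\E_{s'\sim Q}[V(s')^2]} \leq \sqrt{\dchi(P,Q)}\,\|V\|_\infty .
\end{align*}
This plays the role that Hölder's inequality plays for TV and Pinsker's inequality plays for KL. Applying it with $Q=\Ptilde^o_{s_0,\pi^*(s_0)}$ and using $\|V^{\pihat^*}_{\cPhat}\|_\infty\leq 1/(1-\gamma)$, the first difference is bounded by $\frac{1}{1-\gamma}\sqrt{\dchi(P^o_{s_0,\pi^*(s_0)},\Ptilde^o_{s_0,\pi^*(s_0)})}$, which by \cref{prop:chi-square-high-prob-event-model-based} is at most $\frac{1}{1-\gamma}\sqrt{C\cardS\log(\cardS^2\cardA/\delta)/N(s_0,\pi^*(s_0))}$ on the event $N(s_0,\pi^*(s_0))\geq 1$. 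The second difference, involving $P\in\cPhat_{s_0,\pi^*(s_0)}$ and $\Ptilde^o$, is handled identically using the defining constraint $\dchi(P,\Ptilde^o_{s,a})\leq\rho_{s,a}$ of the chi-square uncertainty set, yielding the same bound. Collecting both gives the analogue of \cref{eq:mpqi-part-2} with $\sqrt{C\cardS\log(\cardS^2\cardA/\delta)/N(s_0,\pi^*(s_0))}$ replacing the TV radius.

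From here the proof is identical to that of \cref{thm:tv-tabular-offlineRL-subopt}: substituting back produces a one-step recursion that unrolls along the trajectory of $\pi^*$ on $P^o$ into $\frac{4\gamma}{(1-\gamma)^2}\E_{s\sim d^{\pi^*}_{P^o}}[\sqrt{C\cardS\log(\cardS^2\cardA/\delta)/N(s,\pi^*(s))}\,\indic\{N(s,\pi^*(s))\geq 1\}]$; then \cref{lem:bound-on-binomial-inverse-moments} with $k=1/2$ replaces $N(s,\pi^*(s))$ by $N\mu(s,\pi^*(s))$, the change of measure through $C_{\pi^*}$ replaces $\mu$ by $d^{\pi^*}_{P^o}$, and a final Cauchy--Schwarz over $s$ contributes the $\sqrt{\cardS}$ factor that combines with the $\cardS$ inside the radius to give $\cardS^2$ under the root. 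I expect the only genuinely new content to be the weighted Cauchy--Schwarz bound; the main point to watch is that the chi-square divergence is asymmetric, so the estimator $\Ptilde^o$ (not $P^o$) must occupy the denominator of $\dchi$ --- which is precisely the orientation in which both \cref{prop:chi-square-high-prob-event-model-based} and the uncertainty-set constraint are stated, so the two estimates align without difficulty.
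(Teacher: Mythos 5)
Your proposal is correct and follows essentially the same route as the paper's proof: invoke \cref{prop:chi-square-high-prob-event-model-based} in place of \cref{prop:tv-high-prob-event-model-based}, redo only step $(g)$ of \cref{eq:mpqi-part-2} to extract a $\sqrt{\dchi}$ factor, and then reuse the recursion, \cref{lem:bound-on-binomial-inverse-moments}, and the change-of-measure argument verbatim. The one cosmetic difference is that you bound $|\E_{P}[V]-\E_{Q}[V]|\leq\sqrt{\dchi(P,Q)}\,\|V\|_\infty$ directly by weighted Cauchy--Schwarz, whereas the paper passes through Hölder's inequality and the bound $\dtv(p,q)\leq 2\sqrt{\dchi(p,q)}$ --- these are the same estimate (your version is the standard proof of that TV--chi-square inequality, specialized to the test function $V$, and gives a marginally better constant), and your observation that the add-$L$ estimator must sit in the denominator of $\dchi$ is exactly the orientation the paper uses.
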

\begin{proof}
    The proof again follows exactly as in the proof of \cref{thm:tv-tabular-offlineRL-subopt}. We replace the dependence on \cref{prop:tv-high-prob-event-model-based} with \cref{prop:chi-square-high-prob-event-model-based}. 
    We again only have to take care of step $(g)$ in \cref{eq:mpqi-part-2}. We start from analyzing $(I)$ as in \cref{eq:mpqi-analyze-I}:
\begin{align*}
    &\E_{s_0\sim d_0} [\gamma \E_{s'\sim P^o_{s_0,\pi^*(s_0)}}(V^{\pihat^*}_{\cPhat}(s'))  - \gamma \E_{s'\sim P_{s_0,\pi^*(s_0)}}(V^{\pihat^*}_{\cPhat}(s'))] \nn\\
    &= \E_{s_0\sim d_0} [ \gamma \E_{s'\sim P^o_{s_0,\pi^*(s_0)}}(V^{\pihat^*}_{\cPhat}(s')) - \gamma \E_{s'\sim \Ptilde^o_{s_0,\pi^*(s_0)}}(V^{\pihat^*}_{\cPhat}(s')) \nn\\
    &\hspace{2cm} +\gamma \E_{s'\sim \Ptilde^o_{s_0,\pi^*(s_0)}}(V^{\pihat^*}_{\cPhat}(s'))  - \gamma \E_{s'\sim P_{s_0,\pi^*(s_0)}}(V^{\pihat^*}_{\cPhat}(s'))] \\
    &\stackrel{(a)}{\leq} \E_{s_0\sim d_0} \big[ 2\gamma \sqrt{ \dchi( P^o_{s_0,\pi^*(s_0)}, \Ptilde^o_{s_0,\pi^*(s_0)} )} \|V^{\pihat^*}_{\cPhat}\|_\infty \big]  \nn\\
    &\hspace{2cm} +\gamma [\E_{s'\sim \Ptilde^o_{s_0,\pi^*(s_0)}}(V^{\pihat^*}_{\cPhat}(s'))  - \gamma \E_{s'\sim P_{s_0,\pi^*(s_0)}}(V^{\pihat^*}_{\cPhat}(s'))] \\
    &\stackrel{(b)}{\leq}   \frac{2\gamma}{1-\gamma} \E_{s_0\sim d_0} \bigg[
    \sqrt{\frac{C\cardS \log(\cardS^2\cardA/\delta)}{N(s_0,\pi^*(s_0))}}\indic \{N(s_0,\pi^*(s_0))\geq 1\} \bigg] \nn\\
    &\hspace{2cm} +\gamma \E_{s_0\sim d_0} [\E_{s'\sim \Ptilde^o_{s_0,\pi^*(s_0)}}(V^{\pihat^*}_{\cPhat}(s'))  - \gamma \E_{s'\sim P_{s_0,\pi^*(s_0)}}(V^{\pihat^*}_{\cPhat}(s'))] \nn\\
    &\stackrel{(c)}{\leq}   \frac{4\gamma}{1-\gamma} \E_{s_0\sim d_0} \bigg[
    \sqrt{\frac{C\cardS \log(\cardS^2\cardA/\delta)}{N(s_0,\pi^*(s_0))}}\indic \{N(s_0,\pi^*(s_0))\geq 1\} \bigg],
    \end{align*}
    where $(a)$ follows from Hölder's inequality, and from Pinsker's inequality \citep[Lemma 12.6.1]{cover1991information} and \citep[Lemma 11.1]{basu2011statistical} we have $ \dtv(p,q)\leq 2\sqrt{\dchi(p,q)}$ for any two distributions, $(b)$ holds with probability at least $1-\delta$ by \cref{prop:chi-square-high-prob-event-model-based}, and $(c)$ follows same as $(a)$ but under the definition of uncertainty set $\cPhat$.
    Now combining and analyzing the rest of the steps as in the proof of \cref{thm:tv-tabular-offlineRL-subopt} completes the proof.
\end{proof}

\section{Results and Proofs of LM-DRQI} \label{appen:linear-mdp}

In the following, we always use $c>0$ for a small universal constant whose exact value might be changing.
We allow $\lambda=\Omega(1)$ but set $\lambda=1$ for simplicity.
In what follows, we use $\indic_i\in\R^{d\times 1}$ to denote vector with values $0$ except $1$ at position $i$.
We first make a similar observation as in \cref{prop:tv-high-prob-event-model-based}-\cref{prop:chi-square-high-prob-event-model-based} that the true model $P^o$ lies in the uncertainty set $\cPhat$ with high probability. We make this formal in the proposition below.
\begin{proposition}\label{prop:high-prob-event-linear-mdp}
We have $\nu^o\in\cMhat$  with probability at least $1-\delta$. Furthermore, $P^o\in\cPhat$ also holds with probability at least $1-\delta$.
\end{proposition}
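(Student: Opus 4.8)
The plan is to first establish the measure-level claim $\nu^o\in\cMhat$, i.e. $D_{\mathrm{IPM}}(\nu^o_i,\nuhat^o_i)\leq\rho_i$ for every $i\in[d]$, and then read off $P^o\in\cPhat$ directly from the $d$-rectangular representation. For a $\R^d$-valued signed measure $\nu=(\nu_1,\dots,\nu_d)$ write $\nu[V]=\int V(s')\,\nu(ds')\in\R^d$; then $D_{\mathrm{IPM}}(\nu^o_i,\nuhat^o_i)=\sup_{V\in\cV}|\indic_i^\top(\nuhat^o[V]-\nu^o[V])|$, so the whole problem reduces to controlling the $i$-th coordinate of the $\R^d$-valued error $\nuhat^o[V]-\nu^o[V]$, uniformly over $V\in\cV$.

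The key observation is that $\nuhat^o[V]=\frac1N\sum_{t=1}^N\Lambda_N^{-1}\phi(s_t,a_t)V(s'_t)$ is exactly the ridge-regression estimate of the coefficient vector $\nu^o[V]$ for the linear regression target $(s,a)\mapsto\E_{s'\sim P^o_{s,a}}[V(s')]=\langle\phi(s,a),\nu^o[V]\rangle$. Writing $\epsilon_t=V(s'_t)-\E_{s'\sim P^o_{s_t,a_t}}[V(s')]$ for the conditionally mean-zero noise and using $N\Lambda_N=\lambda I+\sum_t\phi(s_t,a_t)\phi(s_t,a_t)^\top$, the standard ridge identity yields
\begin{align*}
\nuhat^o[V]-\nu^o[V]=-\frac{\lambda}{N}\Lambda_N^{-1}\nu^o[V]+\frac1N\Lambda_N^{-1}\sum_{t=1}^N\phi(s_t,a_t)\epsilon_t,
\end{align*}
a regularization-bias term plus a noise term. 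Projecting onto $\indic_i$ and applying Cauchy--Schwarz in the $\Lambda_N^{-1}$-inner product makes the factor $\sqrt{\indic_i^\top\Lambda_N^{-1}\indic_i}=\sqrt{\Lambda_N^{-1}(i,i)}$ appear in both terms, which is precisely the data-dependent scaling built into $\rho_i$.

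For the noise term I would invoke part~(1) of \cref{lem:linear-MDP-uniform-concentrability}, which bounds $\|\sum_t\phi(s_t,a_t)\epsilon_t\|_{\Lambda_N^{-1}}\leq\cO(\sqrt{dN}\log(N/((1-\gamma)\delta))/(1-\gamma))$ uniformly over $V\in\cV$ on a single event of probability $1-\delta$; dividing by $N$ gives $\sqrt{\Lambda_N^{-1}(i,i)}\cdot\cO(\sqrt d\log(N/((1-\gamma)\delta))/((1-\gamma)\sqrt N))$, matching $\rho_i$ up to the universal constant $c_1$. Since this event is simultaneous over all $V$ and the coordinate projection is deterministic given the data, no union bound over $i$ is needed. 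For the bias term I would use $\Lambda_N\succeq(\lambda/N)I$ (so $\|\nu^o[V]\|_{\Lambda_N^{-1}}\leq\sqrt{N/\lambda}\,\|\nu^o[V]\|_2$) together with the standard linear-MDP normalization $\|\nu^o[V]\|_2\leq\sqrt d\,\|V\|_\infty\leq\sqrt d/(1-\gamma)$, the last inequality because $V(s)=\max_a\phi^\top(s,a)w$ with $\|w\|_2\leq1/(1-\gamma)$; with $\lambda=1$ this again yields $\sqrt{\Lambda_N^{-1}(i,i)}\cdot\cO(\sqrt d/((1-\gamma)\sqrt N))$, of the same order as $\rho_i$. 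Taking $\sup_{V\in\cV}$ and choosing $c_1$ to dominate the sum of the two constants gives $D_{\mathrm{IPM}}(\nu^o_i,\nuhat^o_i)\leq\rho_i$ for all $i$ on the $1-\delta$ event, i.e. $\nu^o\in\cMhat$. Finally, since $P^o$ is a linear MDP, $P^o_{s,a}(s')=\sum_{i\in[d]}\phi_i(s,a)\nu^o_i(s')$ with each $\nu^o_i\in\cMhat_i$, which is exactly the defining condition of $\cPhat$; hence $P^o\in\cPhat$ on the same event.

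The main obstacle is the middle step: correctly identifying $\nuhat^o[V]$ as a ridge estimator and pushing the uniform-over-$V$ concentration of \cref{lem:linear-MDP-uniform-concentrability} through the coordinate projection, so that the per-dimension radius $\rho_i$ with its $\sqrt{\Lambda_N^{-1}(i,i)}$ weighting emerges, while simultaneously verifying that the regularization bias stays within the same budget.
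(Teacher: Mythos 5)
Your proposal is correct and follows essentially the same route as the paper's proof: the same ridge identity splitting $\nuhat^o[V]-\nu^o[V]$ into a $\lambda/N$ regularization-bias term and a martingale noise term, the same Cauchy--Schwarz projection in the $\Lambda_N^{-1}$-inner product to produce the $\sqrt{\Lambda_N^{-1}(i,i)}$ weighting, and the same invocation of \cref{lem:linear-MDP-uniform-concentrability} for the uniform-over-$\cV$ noise bound, with the bias controlled at the same $\sqrt{d\lambda/N}/(1-\gamma)$ order. The only (harmless) cosmetic differences are that you bound the bias via $\|\nu^o[V]\|_{\Lambda_N^{-1}}\leq\sqrt{N/\lambda}\,\|\nu^o[V]\|_2$ rather than the paper's $\ell_1$--$\ell_2$ route, you observe that no union bound over $i$ is needed (the paper takes one anyway, paying only the $\log d$ already present in $\rho_i$), and you conclude $P^o\in\cPhat$ directly from the $d$-rectangular definition rather than via the paper's explicit expansion of $\sup_{V\in\cV}|\int(P^o_{s,a}-\Phat^o_{s,a})V|$.
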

\begin{proof}
Let $\indic_i\in\R^{d\times 1}$ denote vector with values $0$ except $1$ at position $i$ and $\indic(s_t')\in\R^{\cardS\times 1}$ denote vector with values $0$ except $1$ at position $s_t'$.
Fixing an $i\in[d]$ and $V\in\cV$, we have the following:
\begin{align}
    \E_{\nu^o_i}[V] - &\E_{\nuhat_i}[V] = (\nu^o_i)^\top V - (\nuhat_i)^\top V =  \indic_i^\top (\nu^o)^\top V - \indic_i^\top (\nuhat)^\top V \nn \\
    &\stackrel{(a)}{=} \indic_i^\top \Lambda_N^{-1} (\frac{\lambda}{N} I +  \frac{1}{N} \sum_{t=1}^N\phi(s_t,a_t)\phi(s_t,a_t)^\top) (\nu^o)^\top V - \indic_i^\top (\nuhat)^\top V \nn\\
    &\stackrel{(b)}{=} \frac{\lambda}{N} \indic_i^\top \Lambda_N^{-1} (\nu^o)^\top V   + \frac{1}{N}  \indic_i^\top \Lambda_N^{-1}  \sum_{t=1}^N\phi(s_t,a_t) (P^o_{s_t,a_t})^\top V - \indic_i^\top (\nuhat)^\top V
    \nn\\
    &\stackrel{(c)}{=} \frac{\lambda}{N} \indic_i^\top \Lambda_N^{-1} (\nu^o)^\top V   + \frac{1}{N}  \indic_i^\top \Lambda_N^{-1}  \sum_{t=1}^N\phi(s_t,a_t) (P^o_{s_t,a_t})^\top V - \frac{1}{N} \indic_i^\top  \Lambda_N^{-1} \sum_{t=1}^N  \phi(s_t,a_t) \indic(s_t')^\top V
    \nn\\
    &\stackrel{(d)}{=} \frac{\lambda}{N} \indic_i^\top \Lambda_N^{-1} (\nu^o)^\top V   + \frac{1}{N}  \indic_i^\top \Lambda_N^{-1}  \sum_{t=1}^N\phi(s_t,a_t) \epsilon_t^\top V , \label{eq:linear-mdp-HP-set-0}
\end{align} where $(a)$ is by $\Lambda_N = \frac{\lambda}{N} I +  \frac{1}{N} \sum_{t=1}^N\phi(s_t,a_t)\phi(s_t,a_t)^\top$, $(b)$ by $\phi(s_t,a_t)^\top) (\nu^o)^\top = P^o_{s_t,a_t}(\cdot)$, $(c)$ by 
$\nuhat(s') = \frac{1}{N} \Lambda_N^{-1} \sum_{t=1}^N  \phi(s_t,a_t) \indic\{s'=s_t'\}$, and $(d)$ by setting $\epsilon_t=(P^o_{s_t,a_t}-\indic(s_t'))$.

Before proceeding, here is a consequence of \cref{as:linear-MDP}. Consider any $(s,a)\in\ScA$. For any linear MDP $P_{s,a}(s') = \phi(s,a)^\top  \nu(s')$, summing both sides across $s'$, we get  $$1= \sum_{s'}P_{s,a}(s') = \phi(s,a)^\top  \sum_{s'}\nu(s') = \sum_{i\in[d]} \phi_i(s,a).$$ Since $\phi_i(s,a)\geq 0$ and $\|x\|_2\leq \|x\|_1$ for $x\in\R^d$, $\|\phi(s, a) \|_2 \leq 1$ follows.
Now we analyze the two terms in \cref{eq:linear-mdp-HP-set-0}.
First, \begin{align*}
     |\frac{\lambda}{N}  \indic_i^\top \Lambda_N^{-1} (\nu^o)^\top V| &\leq \frac{\lambda}{N}  \norm{  \indic_i^\top \Lambda_N^{-1} }_1 \norm{(\nu^o)^\top V}_\infty \\
    &\stackrel{(e)}{\leq} \frac{1}{1-\gamma} \frac{\lambda}{N}  \norm{  \indic_i^\top \Lambda_N^{-1} }_1 \\
    &\stackrel{(f)}{\leq} \frac{\sqrt{d}}{1-\gamma} \frac{\lambda}{N}  \norm{  \indic_i^\top \Lambda_N^{-1} }_2 \\
    &= \frac{\sqrt{d}}{1-\gamma} \frac{\lambda}{N} \sqrt{\indic_i^\top \Lambda_N^{-1}\Lambda_N^{-1}\indic_i }  \\
    &\stackrel{(g)}{\leq} \frac{\sqrt{d}}{1-\gamma} \frac{\lambda}{N} \sqrt{\norm{\Lambda_N^{-1}}_{\mathrm{op}}} \sqrt{\indic_i^\top \Lambda_N^{-1}\indic_i }  \\
    &= \frac{\sqrt{d}}{1-\gamma} \frac{\lambda}{N} \norm{\Lambda_N^{-1/2}}_{\mathrm{op}}  \norm{  \indic_i  }_{\Lambda_N^{-1}} \\
    &\stackrel{(h)}{\leq} \frac{1}{1-\gamma} \sqrt{\frac{d\lambda}{N}}   \sqrt{\Lambda_N^{-1}(i,i)} ,
\end{align*}
where $(e)$ follows since $V\in\cV=\{V(\cdot)=\max_{a} \phi^\top(\cdot,a) w :w\in\R^{d}, \|w\|_2\leq 1/(1-\gamma)\}$ satisfies $|V(s)| \leq \norm{\max_{a} \phi^\top(s,a)}_2\|w\|_2 \leq 1/(1-\gamma)$ for any $s\in\cS$ and $\nu^o$ is a probability distribution, $(f)$ by $\|x\|_1\leq\sqrt{d}\|x\|_2$ for $x\in\R^d$, 
$(g)$ by $x^\top A y \leq\|A\|_{\mathrm{op}} x^\top y$ for positive definite matrix $A$ with maximum eigenvalue $\|A\|_{\mathrm{op}}$ and for $x,y\in\R^d$,
and $(h)$ follows since $\Lambda_N$'s minimal absolute value is $\lambda/N$ in its diagonal entries.

Second, by Cauchy-Schwarz on $\Lambda_N^{-1}$-norm, \begin{align*}
     |\frac{1}{N}  \indic_i^\top \Lambda_N^{-1} \sum_{t=1}^N\phi(s_t,a_t) \epsilon_t^\top V| &\leq \frac{1}{N}   \norm{  \indic_i^\top  }_{\Lambda_N^{-1}}  \norm{ \sum_{t=1}^N\phi(s_t,a_t) \epsilon_t^\top V  }_{\Lambda_N^{-1}}  = \frac{1}{N} \sqrt{\Lambda_N^{-1}(i,i)} \norm{ \sum_{t=1}^N\phi(s_t,a_t) \epsilon_t^\top V  }_{\Lambda_N^{-1}}.
\end{align*}

We now get back to analyzing \cref{eq:linear-mdp-HP-set-0} using these intermediate steps.
Fix $i\in[d]$. For all $V\in\cV$, we have the following uniform bound: \begin{align*}
    &|\E_{\nu^o_i}[V] - \E_{\nuhat_i}[V]| \leq \frac{1}{1-\gamma} \sqrt{\frac{d\lambda}{N}}   \sqrt{\Lambda_N^{-1}(i,i)} + \frac{1}{N} \sqrt{\Lambda_N^{-1}(i,i)} \norm{ \sum_{t=1}^N\phi(s_t,a_t) \epsilon_t^\top V  }_{\Lambda_N^{-1}} \\
    &\stackrel{(i)}{\leq} \frac{1}{1-\gamma} \sqrt{\frac{d\lambda}{N}}   \sqrt{\Lambda_N^{-1}(i,i)} + \frac{1}{N}   \sqrt{\Lambda_N^{-1}(i,i)}  \cO\left( \frac{\sqrt{dN}\log(N/((1-\gamma)\delta))}{ 1-\gamma} \right)
    \\
    &\leq \frac{\cO( \log(N/((1-\gamma)\delta)) )}{1-\gamma} \sqrt{\frac{d}{N}}   \sqrt{\Lambda_N^{-1}(i,i)} ,
\end{align*} where $(i)$ holds with probability $1-\delta$ by \cref{lem:linear-MDP-uniform-concentrability}.

Let $c_1>0$ be some universal constant.
Furthermore, with an additional uniform bound,   the following holds for all $i\in[d]$ with probability at least $1-\delta$: \begin{align}
 \ipmV(\nu^o_i,\nuhat_i) \leq \frac{c_1 \log(Nd/((1-\gamma)\delta)) }{1-\gamma} \sqrt{\frac{d}{N}}   \sqrt{\Lambda_N^{-1}(i,i)}. \label{eq:linear-mdp-HP-set-1}
\end{align}

It is now straightforward to see $\nu^o\in\cMhat$ holds with probability at least $1-\delta$ by recalling:
\[\cMhat = \bigotimes_{i\in[d]} \cMhat_{i} \quad \text{where} \quad \cMhat_{i} = \bigg\{ \nu_i\in \Delta(\cS) : \ipmV(\nu_i,\nuhat_i) \leq \frac{c_1 \log(Nd/((1-\gamma)\delta)) }{1-\gamma} \sqrt{\frac{d}{N}}   \sqrt{\Lambda_N^{-1}(i,i)}  \bigg\}.\]

Furthermore, recall $P^o_{s,a}(s') = \sum_{i\in[d]} \phi_i(s,a) \nu^o_i(s')$ and $\Phat^o_{s,a}(s') = \sum_{i\in[d]} \phi_i(s,a) \nuhat_i(s')$.
We now have the following equations: \begin{align}
    & \sup_{V\in\cV}  \bigg|\int_{\cS} (P^o_{s,a} - \Phat_{s,a}) V(ds') \bigg| \nn 
    =   \sup_{V\in\cV}  \bigg|\int_{\cS} \sum_{i=1}^d \phi_i(s,a) (\nu^o_i(s') - \nuhat_i(s')) V(ds') \bigg|
    \\&=     \sup_{V\in\cV} \bigg| \sum_{i=1}^d \phi_i(s,a)  \int_{\cS}  (\nu^o_i(s') - \nuhat_i(s')) V(ds') \bigg| \leq  \sup_{V\in\cV}  \sum_{i=1}^d |\phi_i(s,a)|  |\int_{\cS}  (\nu^o_i(s') - \nuhat_i(s')) V(ds') | \nn \\
    &\leq     \sum_{i=1}^d |\phi_i(s,a)|  \sup_{V\in\cV}|\int_{\cS}  (\nu^o_i(s') - \nuhat_i(s')) V(ds') | \nn
    \\&=  \sum_{i=1}^d |\phi_i(s,a)| \cdot  \ipmV(\nu^o_i,\nuhat_i) \leq \frac{c_1 \log(Nd/((1-\gamma)\delta)) }{1-\gamma} \sqrt{\frac{d}{N}} \sum_{i=1}^d \norm{ \phi_i(s,a) \indic_i  }_{\Lambda_N^{-1}}  , 
    \label{eq:lin-mdp-hp-lem-1}
\end{align} where the last inequality follows by \cref{eq:linear-mdp-HP-set-1}.
This holds with probability at least $1-\delta$ for all $s,a$ together. Thus we have a high probability event that $P^o\in\cPhat$ with probability at least $1-\delta$. 
\end{proof}

Before presenting our main result we adapt \citep[Corollary 4.5]{jin2021pessimism} to present a high probability result adhering to the sufficient coverage assumption (\cref{assum:linear-MDP-sufficient-coverage}). 
\begin{lemma}\label{lem:data-inverse-matrix-approximation}
    For any $s,a$, we have with probability at least $1-\delta$ that  $\sum_{i\in[d]}\E_{s,a\sim d^{\pi^*}_{P^o}} [ \| \phi_i(s,a) \indic_i^\top \|_{\Lambda_N^{-1}} ] \leq \sqrt{   {\rank(\Sigma_{d^{\pi^*}_{P^o}})}/{\drectsuffcov   } }$
    where $\Lambda_N = \frac{\lambda}{N} I +  \frac{1}{N} \sum_{t=1}^N\phi(s_t,a_t)\phi(s_t,a_t)^\top$, $\Sigma_{d^{\pi^*}_{P^o}} =  \E_{s,a\sim d^{\pi^*}_{P^o}}\phi(s,a)\phi(s,a)^\top$.
\end{lemma}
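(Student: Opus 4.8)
The plan is to condition on the probability-$(1-\delta)$ event of \cref{assum:linear-MDP-sufficient-coverage}, on which $\Lambda_N$ is a fixed (data-dependent) positive definite matrix while the outer expectation runs only over $(s,a)\sim d^{\pi^*}_{P^o}$, so that $\Lambda_N^{-1}$ may be pulled out of that expectation. First I would bound each summand by passing from the $\Lambda_N^{-1}$-norm to a second moment via Jensen's inequality (concavity of $\sqrt{\cdot}$): for every $i\in[d]$,
\[\E_{s,a\sim d^{\pi^*}_{P^o}}\big[\norm{\phi_i(s,a)\indic_i}_{\Lambda_N^{-1}}\big]\le\sqrt{\E_{s,a\sim d^{\pi^*}_{P^o}}\big[\norm{\phi_i(s,a)\indic_i}^2_{\Lambda_N^{-1}}\big]}=\sqrt{\trace\big(\Lambda_N^{-1}\Sigma^i_{d^{\pi^*}_{P^o}}\big)},\]
using $\E[(\phi_i\indic_i)(\phi_i\indic_i)^\top]=\Sigma^i_{d^{\pi^*}_{P^o}}$ together with the cyclic property of the trace.

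Next I would control each trace term by $1/(\drectsuffcov d)$. The rank-one structure $\Sigma^i_{d^{\pi^*}_{P^o}}=b_i\,\indic_i\indic_i^\top$ with $b_i:=\E_{d^{\pi^*}_{P^o}}[\phi_i^2(s,a)]$ gives $\trace(\Lambda_N^{-1}\Sigma^i_{d^{\pi^*}_{P^o}})=b_i\,\indic_i^\top\Lambda_N^{-1}\indic_i$. For the coordinates with $b_i>0$ I would invoke the elementary fact that for positive definite $A$ and any $v$, $A\succeq c\,vv^\top$ implies $v^\top A^{-1}v\le 1/c$ (via the variational identity $v^\top A^{-1}v=\sup_x (v^\top x)^2/(x^\top Ax)$ combined with $x^\top Ax\ge c(v^\top x)^2$). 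Applying this with $A=\Lambda_N$, $v=\indic_i$, and $c=\drectsuffcov\,d\,b_i$, which is licensed by \cref{assum:linear-MDP-sufficient-coverage} since $\Lambda_N\succeq \drectsuffcov d\,\Sigma^i_{d^{\pi^*}_{P^o}}=\drectsuffcov d\,b_i\indic_i\indic_i^\top$, yields $b_i\,\indic_i^\top\Lambda_N^{-1}\indic_i\le 1/(\drectsuffcov d)$; the summand vanishes when $b_i=0$ because then $\phi_i=0$ almost surely under $d^{\pi^*}_{P^o}$. Hence every summand is at most $1/\sqrt{\drectsuffcov d}$.

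Finally I would sum over the active set $S:=\{i:b_i>0\}$ and perform the rank accounting, which is the one genuinely delicate point. The naive sum only produces $|S|/\sqrt{\drectsuffcov d}$, and to turn the count $|S|$ into $\rank(\Sigma_{d^{\pi^*}_{P^o}})$ I would use that \cref{as:linear-MDP} forces $\Sigma_{d^{\pi^*}_{P^o}}$ to be diagonal: for $i\neq j$ the single-entry matrix $\Sigma^{(i,j)}_{d^{\pi^*}_{P^o}}=\E[\phi_i\phi_j]\indic_i\indic_j^\top$ can be positive semidefinite only if $\E[\phi_i\phi_j]=0$, so all off-diagonal entries of $\Sigma_{d^{\pi^*}_{P^o}}$ vanish and $\rank(\Sigma_{d^{\pi^*}_{P^o}})=|S|$. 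Then $\sum_{i\in[d]}\E_{d^{\pi^*}_{P^o}}[\norm{\phi_i\indic_i}_{\Lambda_N^{-1}}]\le |S|/\sqrt{\drectsuffcov d}=\rank(\Sigma_{d^{\pi^*}_{P^o}})/\sqrt{\drectsuffcov d}\le\sqrt{\rank(\Sigma_{d^{\pi^*}_{P^o}})/\drectsuffcov}$, where the last inequality uses $\rank(\Sigma_{d^{\pi^*}_{P^o}})\le d$. The main obstacle is exactly this last step, namely obtaining $\rank$ rather than $d$ (or the a priori larger count $|S|$); it hinges on recognizing that the positive-semidefiniteness requirement in \cref{as:linear-MDP} collapses $\Sigma_{d^{\pi^*}_{P^o}}$ to a diagonal matrix, so that $\rank=|S|$ and the elementary bound $|S|/\sqrt{d}\le\sqrt{|S|}$ applies.
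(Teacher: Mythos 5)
Your proof is correct and follows the same overall strategy as the paper's---Jensen's inequality to pass from the $\Lambda_N^{-1}$-norm to $\trace\bigl(\Sigma^{i}_{d^{\pi^*}_{P^o}}\Lambda_N^{-1}\bigr)$, the sufficient coverage assumption to control each trace, and a count of active coordinates converted into $\rank(\Sigma_{d^{\pi^*}_{P^o}})$---but the execution of the last two steps differs in a way worth noting. The paper first applies $\|\cdot\|_1\le\sqrt{d}\,\|\cdot\|_2$ over the index $i$, retains the $(1/N)I$ part of the coverage inequality, diagonalizes each rank-one $\Sigma^{i}_{d^{\pi^*}_{P^o}}$ to obtain terms $\lambda^{i}/((1/N)+\drectsuffcov d\,\lambda^{i})$, and then bounds the number of nonzero $\lambda^{i}$ by the rank; you instead bound each coordinate separately by $1/\sqrt{\drectsuffcov d}$ via the variational identity $v^\top A^{-1}v\le 1/c$ for $A\succeq c\,vv^\top$, and recover the same bound from $|S|/\sqrt{\drectsuffcov d}\le\sqrt{\rank(\Sigma_{d^{\pi^*}_{P^o}})/\drectsuffcov}$. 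The genuinely delicate point is identical in both arguments: why is the number $|S|$ of coordinates with $\E_{d^{\pi^*}_{P^o}}[\phi_i^2]>0$ at most $\rank(\Sigma_{d^{\pi^*}_{P^o}})$? For a general positive semidefinite matrix this is false (the all-ones matrix has rank one but $d$ positive diagonal entries), and the paper's justification via eigenvalue counting of the summands is loose on exactly this point. Your observation that the positive semidefiniteness required of the off-diagonal $\Sigma^{(i,j)}_{d^{\pi^*}}$ in \cref{as:linear-MDP}, read literally, forces $\E_{d^{\pi^*}_{P^o}}[\phi_i\phi_j]=0$ for $i\neq j$, hence makes $\Sigma_{d^{\pi^*}_{P^o}}$ diagonal with $\rank=|S|$, is the cleanest way to close this step; it also makes explicit that the lemma as stated relies on this structural consequence of the assumption rather than holding for arbitrary linear MDPs.
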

\begin{proof}
This proof follows similar steps in the proof of \citep[Corollary 4.5]{jin2021pessimism}.
Firstly notice, \begin{align*}
    \sum_{i\in[d]}\E_{s,a\sim d^{\pi^*}_{P^o}} [ \| \phi_i(s,a) \indic_i \|_{\Lambda_N^{-1}} ] &=  \sum_{i\in[d]}\E_{s,a\sim d^{\pi^*}_{P^o}} [ \sqrt{ (\phi_i(s,a) \indic_i)^\top \Lambda_N^{-1} (\phi_i(s,a) \indic_i)} ] \\
    &=  \sum_{i\in[d]}\E_{s,a\sim d^{\pi^*}_{P^o}} [ \sqrt{ \trace((\phi_i(s,a) \indic_i)(\phi_i(s,a) \indic_i)^\top \Lambda_N^{-1}  )} ] \\
    &\stackrel{(a)}{\leq}  \sqrt{d} \sqrt{ \sum_{i\in[d]} \trace(\E_{s,a\sim d^{\pi^*}_{P^o}} [(\phi_i(s,a) \indic_i)(\phi_i(s,a) \indic_i)^\top] \Lambda_N^{-1}  )}  \\
    &\stackrel{(b)}{\leq}  \sqrt{d} \sqrt{ \sum_{i\in[d]} \trace( \Sigma^{i}_{d^{\pi^*}_{P^o}} \cdot ((1/N)I + \drectsuffcov \cdot d \cdot \Sigma^{i}_{d^{\pi^*}_{P^o}})^{-1}  )}  \\
    &\stackrel{(c)}{\leq}  \sqrt{d} \sqrt{ \sum_{i\in[d]}  \frac{\lambda^{i}_{d^{\pi^*}_{P^o}}}{(1/N) + \drectsuffcov \cdot d \cdot \lambda^{i}_{d^{\pi^*}_{P^o}}}  }  \\
    &\stackrel{(d)}{\leq}  \sqrt{d} \sqrt{   \frac{\rank(\Sigma_{d^{\pi^*}_{P^o}})}{(1/N) + \drectsuffcov \cdot d  } }  \leq \sqrt{   \frac{\rank(\Sigma_{d^{\pi^*}_{P^o}})}{ \drectsuffcov   } },
\end{align*}
where $(a)$ follows by $\|x\|_1\leq\sqrt{d}\|x\|_2$ for $x\in\R^d$ and Jensen's inequality, 
$(b)$ holds with probability at least $1-\delta$ by the sufficient coverage assumption (\cref{assum:linear-MDP-sufficient-coverage}),
$(c)$ follows by denoting eigenvalues $\lambda^{i}_{d^{\pi^*}_{P^o}}$ of rank-1 matrices $\Sigma^{i}_{d^{\pi^*}_{P^o}}$. For
$(d)$, we first notice \begin{align*}
    \Sigma_{d^{\pi^*}_{P^o}} =  \E_{s,a\sim d^{\pi^*}_{P^o}}\phi(s,a)\phi(s,a)^\top &= \E_{s,a\sim d^{\pi^*}_{P^o}}[\sum_{i,j\in[d]}(\phi_i(s,a) \indic_i)(\phi_j(s,a) \indic_j)^\top] \\
    &= \sum_{i\in[d]}\Sigma^{i}_{d^{\pi^*}_{P^o}} + \sum_{i,j\in[d]:i\neq j}\E_{s,a\sim d^{\pi^*}_{P^o}}[(\phi_i(s,a) \indic_i)(\phi_j(s,a) \indic_j)^\top] .
\end{align*} 
For any $k\in[d]$, let $\lambda_k$ denote $k^{\mathrm{th}}$ smallest eigenvalue of $\Sigma_{d^{\pi^*}_{P^o}}$. For any $k\in[d]$, we know from a fact of positive semidefinite matrices that $\lambda_k$ is at least as any $k^{\mathrm{th}}$ smallest eigenvalue of any matrix summand.
Moreover, since $\|\phi(s,a)\|_2\leq 1$, it follows by Jensen's inequality $\lambda_1 = \|\Sigma_{d^{\pi^*}_{P^o}}\|_{\mathrm{op}} \leq \E_{s,a\sim d^{\pi^*}_{P^o}}\|\phi(s,a)\phi(s,a)^\top\|_{\mathrm{op}}\leq 1$. Since $\Sigma_{d^{\pi^*}_{P^o}}$ is positive semidefinite, we have all $\lambda_k\in[0,1]$. Finally, step $(d)$ is concluded by the fact that the number of non-zero eigenvalues is equal to the rank of a positive semidefinite matrix. This completes the proof.
\end{proof}

We are now ready to present our main result of this linear MDP problem setting. With the above result, we now provide the offline RL suboptimality guarantee below.

\begin{theorem}\label{thm:linear-mdp-offlineRL-subopt} Let \cref{as:linear-MDP} hold. Let $\pi_K$ be the LM-DRQI algorithm policy after $K$ iterations. Then, under \cref{assum:linear-MDP-sufficient-coverage}, the following holds with probability at least $1-\delta$
    \begin{align*}
    &\E_{s_0\sim d_0} [V^{\pi^*}(s_0) - \E_\cD [{V}^{\pi_K}(s_0)]] \leq  \frac{2\gamma^{K+1}}{(1-\gamma)^2}   + \frac{c_1 \log(Nd/((1-\gamma)\delta)) }{(1-\gamma)^2} \sqrt{\frac{d\cdot\rank(\Sigma_{d^{\pi^*}_{P^o}})}{\drectsuffcov N}} .
\end{align*}
\end{theorem}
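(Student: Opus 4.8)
The plan is to follow the template of \cref{thm:tv-tabular-offlineRL-subopt}, replacing the total-variation concentration by the IPM-based model concentration of \cref{prop:high-prob-event-linear-mdp} and the tabular change-of-measure by \cref{lem:data-inverse-matrix-approximation}. Writing $\pihat^* = \argmax_\pi V^\pi_{\cPhat}$ for the optimal robust policy of the empirical uncertainty set and $V^{\pi_K}_{\cPhat} = \min_{P\in\cPhat}V^{\pi_K}_P$, I would first decompose
\begin{align*}
V^{\pi^*}_{P^o}(s_0) - V^{\pi_K}_{P^o}(s_0) = \big(V^{\pi^*}_{P^o}(s_0) - V^{\pi_K}_{\cPhat}(s_0)\big) + \big(V^{\pi_K}_{\cPhat}(s_0) - V^{\pi_K}_{P^o}(s_0)\big).
\end{align*}
By \cref{prop:high-prob-event-linear-mdp}, $P^o\in\cPhat$ with probability at least $1-\delta$, so on this event the second bracket is nonpositive by definition of the robust value. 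Splitting the first bracket as $(V^{\pi^*}_{P^o} - V^{\pihat^*}_{\cPhat}) + (V^{\pihat^*}_{\cPhat} - V^{\pi_K}_{\cPhat})$, the contraction/amplification of the robust Bellman operator (as in step $(b)$ of \cref{thm:tv-tabular-offlineRL-subopt}) bounds the second piece by $2\gamma^{K+1}/(1-\gamma)^2$, producing the first term of the claimed bound.

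For the remaining piece $\E_{s_0\sim d_0}[V^{\pi^*}_{P^o}(s_0) - V^{\pihat^*}_{\cPhat}(s_0)]$, I would expand it exactly as in \cref{eq:mpqi-part-1}: using optimality of $\pihat^*$ (so $Q^{\pihat^*}_{\cPhat}(s_0,\pihat^*(s_0))\geq Q^{\pihat^*}_{\cPhat}(s_0,\pi^*(s_0))$) together with the classical and robust Bellman equations, to obtain the recursion
\begin{align*}
\E_{s_0\sim d_0}[V^{\pi^*}_{P^o}(s_0) - V^{\pihat^*}_{\cPhat}(s_0)] \leq \gamma\,\E_{s_1\sim d^{\pi^*}_{P^o,1}}[V^{\pi^*}_{P^o}(s_1) - V^{\pihat^*}_{\cPhat}(s_1)] + (I),
\end{align*}
where $(I) = \gamma\,\E_{s_0\sim d_0}\big[\E_{s'\sim P^o_{s_0,\pi^*(s_0)}}(V^{\pihat^*}_{\cPhat}) - \min_{P\in\cPhat}\E_{s'\sim P}(V^{\pihat^*}_{\cPhat})\big]$. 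The crucial structural fact is that the robust value $V^{\pihat^*}_{\cPhat}$ is linear in $\phi$ and bounded by $1/(1-\gamma)$, hence lies in the witness class $\cV$ defining the IPM. Inserting the estimated model $\Phat^o$ and using the $d$-rectangular decomposition $P^o_{s,a}-P_{s,a}=\sum_{i}\phi_i(s,a)(\nu^o_i-\nu_i)$ with \cref{eq:lin-mdp-hp-lem-1} and the definition of $\cMhat_i$, I would bound
\begin{align*}
(I) \leq \frac{2\gamma\,c_1\log(Nd/((1-\gamma)\delta))}{1-\gamma}\sqrt{\frac{d}{N}}\;\E_{s_0\sim d_0}\Big[\sum_{i\in[d]}\norm{\phi_i(s_0,\pi^*(s_0))\indic_i}_{\Lambda_N^{-1}}\Big].
\end{align*}

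Next, I would unroll the recursion along the trajectory of $\pi^*$ on $P^o$ and collapse the geometric series using $d^{\pi^*}_{P^o}=(1-\gamma)\sum_t\gamma^t d^{\pi^*}_{P^o,t}$; this contributes a further $1/(1-\gamma)$ and replaces the expectation by $\sum_{i}\E_{s,a\sim d^{\pi^*}_{P^o}}[\norm{\phi_i(s,a)\indic_i}_{\Lambda_N^{-1}}]$. Applying \cref{lem:data-inverse-matrix-approximation}, which is precisely where \cref{assum:linear-MDP-sufficient-coverage} enters, bounds this quantity by $\sqrt{\rank(\Sigma_{d^{\pi^*}_{P^o}})/\drectsuffcov}$, and collecting the $\sqrt{d/N}$ factor yields the advertised second term.

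The step I expect to be the main obstacle is the one-step bound on $(I)$. One must verify that the minimizing model achieving the robust backup stays inside $\cPhat$, so that the per-coordinate IPM radii $\rho_i$ control the gap between $P^o$ and that model through \cref{eq:lin-mdp-hp-lem-1}, and that $V^{\pihat^*}_{\cPhat}$ genuinely belongs to $\cV$ so that the uniform bound of \cref{prop:high-prob-event-linear-mdp} applies to it rather than to an arbitrary bounded function. The accounting of the $(1-\gamma)$ powers is also delicate, since the $1/(1-\gamma)$ scaling of $\cV$ is already folded into $\rho_i$ and must not be double-counted against the $1/(1-\gamma)$ arising from the discounted-visitation sum; only these two factors should survive to give the $1/(1-\gamma)^2$ prefactor.
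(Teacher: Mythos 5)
Your proposal is correct and follows essentially the same route as the paper: the identical robust-value decomposition and $2\gamma^{K+1}/(1-\gamma)^2$ contraction term, the same recursion via the classical/robust Bellman equations, the bound on the one-step gap $(I)$ through the IPM concentration of \cref{prop:high-prob-event-linear-mdp} (i.e., \cref{eq:lin-mdp-hp-lem-1} plus the definition of $\cMhat_i$), and the final change of measure via \cref{lem:data-inverse-matrix-approximation} under \cref{assum:linear-MDP-sufficient-coverage}. The concerns you flag (membership of $V^{\pihat^*}_{\cPhat}$ in $\cV$ and the bookkeeping of $(1-\gamma)$ factors) are handled in the paper exactly as you anticipate, so no gap remains.
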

\begin{proof}
We first recall our analyses of \cref{thm:tv-tabular-offlineRL-subopt}. We denote the value function of policy $\pi$ for the transition dynamics model $P$ as $V^\pi_P$. We now denote the robust value function \cite{panaganti22a,xu-panaganti-2023samplecomplexity,panaganti-rfqi} for uncertainty set $\cPhat$ as 
$V^{\pi}_{\cPhat} = \min_{P\in\cPhat} V^\pi_P$ and its optimal robust policy as $\pihat^* = \argmax_{\pi}V^{\pi}_{\cPhat} $. We let $Q^{\pi}_{\cPhat}$ be its corresponding robust Q-function. 
From robust RL \cite{panaganti22a,xu-panaganti-2023samplecomplexity,panaganti-rfqi} we can write the following robust Bellman equation: $Q^{\pi}_{\cPhat}(s,a)=r(s,a) + \gamma \min_{P_{s,a}\in\cPhat_{s,a}}\E_{s'\sim P_{s,a}}(V^{\pi}_{\cPhat}(s'))$. To make it notationally easy, we write $V^{\pi^*}$ ($d^\pi$) as $V^{\pi^*}_{P^o}$ ($d^\pi_{P^o}$) making the dependence on the model $P^o$ explicit.

We again recall \cref{eq:mpqi-part-0} in tandem with \cref{prop:high-prob-event-linear-mdp}:
\begin{align}
	\E_{s_0\sim d_0} [V^{\pi^*}_{P^o}(s_0) - {V}^{\pi_K}_{P^o}(s_0)] &\leq  \E_{s_0\sim d_0} [V^{\pi^*}_{P^o}(s_0) - V^{\pihat^*}_{\cPhat}(s_0)] + \frac{2\gamma^{K+1}}{(1-\gamma)^2}. \label{eq:lm-mpqi-part-0}
\end{align}
Further recalling \cref{eq:mpqi-part-1} we know,
\begin{align}
    &\E_{s_0\sim d_0} [V^{\pi^*}_{P^o}(s_0) - V^{\pihat^*}_{\cPhat}(s_0)] \leq \E_{s_0\sim d_0} [ 
    \gamma \E_{s'\sim P^o_{s_0,\pi^*(s_0)}}(V^{\pi^*}_{P^o}(s')-V^{\pihat^*}_{\cPhat}(s')) ]  \nn\\&\hspace{2cm}
    +\underbrace{ \E_{s_0\sim d_0} [\gamma \E_{s'\sim P^o_{s_0,\pi^*(s_0)}}(V^{\pihat^*}_{\cPhat}(s')) ] - \gamma \min_{P_{s_0,\pi^*(s_0)}\in\cPhat_{s_0,\pi^*(s_0)}}\E_{s'\sim P_{s_0,\pi^*(s_0)}}(V^{\pihat^*}_{\cPhat}(s'))}_{(I)} ]. \label{eq:lm-mpqi-part-1}
\end{align}

Analyzing $(I)$ in \cref{eq:lm-mpqi-part-1} for any $P\in\cPhat$:
\begin{align}
    (I) &= \E_{s_0\sim d_0} [ \gamma \E_{s'\sim P^o_{s_0,\pi^*(s_0)}}(V^{\pihat^*}_{\cPhat}(s')) - \gamma \E_{s'\sim \Phat^o_{s_0,\pi^*(s_0)}}(V^{\pihat^*}_{\cPhat}(s')) \nn \\
    &\hspace{2cm} +\gamma \E_{s'\sim \Phat^o_{s_0,\pi^*(s_0)}}(V^{\pihat^*}_{\cPhat}(s'))  - \gamma \E_{s'\sim P_{s_0,\pi^*(s_0)}}(V^{\pihat^*}_{\cPhat}(s'))] \nn \\
    &\stackrel{(g)}{\leq}   \frac{c_1 \log(Nd/((1-\gamma)\delta)) }{1-\gamma} \sqrt{\frac{d}{N}} \sum_{i=1}^d \norm{ \phi_i(s_0,\pi^*(s_0)) \indic_i  }_{\Lambda_N^{-1}}    \nn \\
    &\hspace{2cm} +\gamma \E_{s_0\sim d_0} [\E_{s'\sim \Phat^o_{s_0,\pi^*(s_0)}}(\Vhat^{\pihat^*}(s'))  -  \E_{s'\sim P_{s_0,\pi^*(s_0)}}(\Vhat^{\pihat^*}(s'))] \nn \\
    &\stackrel{(h)}{\leq}    \frac{2 c_1 \log(Nd/((1-\gamma)\delta)) }{1-\gamma} \sqrt{\frac{d}{N}} \sum_{i=1}^d \norm{ \phi_i(s_0,\pi^*(s_0)) \indic_i  }_{\Lambda_N^{-1}}   , \label{eq:lm-mpqi-part-2}
\end{align}
where $(g)$ holds with probability at least $1-\delta$, which follows from \cref{lem:linear-MDP-uniform-concentrability}, and $(h)$ follows by the definition of set $\cPhat$.

Substituting \cref{eq:lm-mpqi-part-2} back in  \cref{eq:lm-mpqi-part-1} and via recursion we get,
\begin{align*}
    &\E_{s_0\sim d_0} [V^{\pi^*}_{P^o}(s_0) - V^{\pihat^*}_{\cPhat}(s_0)] \\&\leq   \sum_{t=0}^\infty \gamma^t \E_{s\sim d^{\pi^*}_{P^o,t}} [ \frac{c_1 \log(Nd/((1-\gamma)\delta)) }{1-\gamma} \sqrt{\frac{d}{N}} \sum_{i=1}^d \norm{ \phi_i(s_0,\pi^*(s_0)) \indic_i  }_{\Lambda_N^{-1}} ] \\
    &= \frac{c_1 \log(Nd/((1-\gamma)\delta)) }{(1-\gamma)^2} \sqrt{\frac{d}{N}} \sum_{i=1}^d \E_{s\sim d^{\pi^*}_{P^o}} [ \norm{ \phi_i(s_0,\pi^*(s_0)) \indic_i  }_{\Lambda_N^{-1}}  ],
\end{align*}
where last equality follows by the definition of state-distribution $d^{\pi^*}_{P^o} = (1-\gamma) \sum_{t=0}^\infty \gamma^t d^{\pi^*}_{P^o,t}$. Now, putting this back in \cref{eq:lm-mpqi-part-0}, the offline RL guarantee becomes:
\begin{align}
    &\E_\cD [\E_{s_0\sim d_0} [V^{\pi^*}_{P^o}(s_0) - {V}^{\pi_K}_{P^o}(s_0)]] 
    \leq \frac{2\gamma^{K+1}}{(1-\gamma)^2}  \nn \\&\hspace{1cm}+\frac{c_1 \log(Nd/((1-\gamma)\delta)) }{(1-\gamma)^2} \sqrt{\frac{d}{N}} \sum_{i=1}^d \E_\cD[\E_{s\sim d^{\pi^*}_{P^o}} [ \norm{ \phi_i(s_0,\pi^*(s_0)) \indic_i  }_{\Lambda_N^{-1}}  ]] .\label{eq:linear-mdp-main-result-0} 
\end{align}

We now assume we have sufficient coverage \cref{assum:linear-MDP-sufficient-coverage} of linear MDP $P^o$.
Now under \cref{lem:relative-condition-data-inverse-matrix-approximation}, with probability at least $1-\delta$, from \cref{eq:linear-mdp-main-result-0}  we have \begin{align*}
    &\E_\cD [\E_{s_0\sim d_0} [V^{\pi^*}_{P^o}(s_0) - {V}^{\pi_K}_{P^o}(s_0)]] 
    \leq \frac{2\gamma^{K+1}}{(1-\gamma)^2}   \\&\hspace{1cm}+\frac{c_1 \log(Nd/((1-\gamma)\delta)) }{(1-\gamma)^2} \sqrt{\frac{d}{N}} \sum_{i=1}^d \E_\cD[\E_{s\sim d^{\pi^*}_{P^o}} [ \norm{ \phi_i(s_0,\pi^*(s_0)) \indic_i  }_{\Lambda_N^{-1}}  ]] \\
    &\leq \frac{2\gamma^{K+1}}{(1-\gamma)^2}   + \frac{c_1 \log(Nd/((1-\gamma)\delta)) }{(1-\gamma)^2} \sqrt{\frac{d\cdot\rank(\Sigma_{d^{\pi^*}_{P^o}})}{\drectsuffcov N}}. 
\end{align*}
This proves this result.
\end{proof}

Different from above, we now provide the offline RL suboptimality guarantee relying on the finite relative condition instead of the sufficient coverage assumption \cref{assum:linear-MDP-sufficient-coverage}.
Before presenting the result, here is another high probability result similar to \cref{lem:data-inverse-matrix-approximation} but now relies on the finite relative condition. 
\begin{lemma}\label{lem:relative-condition-data-inverse-matrix-approximation}
    Let $\lambda=1$.
    For any $s,a$, with probability at least $1-\delta$ we have $\sum_{i\in[d]}\E_{s,a\sim d^{\pi^*}_{P^o}} [\| \phi_i(s,a) \indic_i \|_{\Lambda_N^{-1}} \leq c \sqrt{C^\dagger_{\pi^*,\phi} \rank(\Lambda) (\rank(\Lambda) + \log(c/\delta))}$
    where $\Lambda_N = \frac{\lambda}{N} I +  \frac{1}{N} \sum_{t=1}^N\phi(s_t,a_t)\phi(s_t,a_t)^\top$,\\ $\Lambda =  \E_{s,a\sim\mu}\phi(s,a)\phi(s,a)^\top$, $ \drectrelativecondition  =  \max_{x\in\R^d} \sum_{i\in[d]} { d (x^\top \Sigma^{i}_{ d^{\pi^*}} x)}/{(x^\top \Lambda x)}.$
\end{lemma}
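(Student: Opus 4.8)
The plan is to mirror the proof of \cref{lem:data-inverse-matrix-approximation}, but to replace the role of the sufficient coverage assumption with the finite relative condition $\drectrelativecondition$ combined with \cref{lem:data-generation-inequality}. First I would apply Cauchy--Schwarz over the index $i\in[d]$ (viewing the sum as the inner product of the all-ones vector with the vector of per-coordinate expectations) to pull out a $\sqrt{d}$ factor, and then Jensen's inequality to move the expectation inside the square root. Rewriting $\E_{s,a\sim d^{\pi^*}_{P^o}}[\norm{\phi_i(s,a)\indic_i}^2_{\Lambda_N^{-1}}] = \trace(\Sigma^i_{d^{\pi^*}_{P^o}}\Lambda_N^{-1})$ from the definition of $\Sigma^i_{d^{\pi^*}_{P^o}}$ (exactly as in steps $(a)$--$(b)$ of \cref{lem:data-inverse-matrix-approximation}), this gives
\begin{align*}
    \sum_{i\in[d]}\E_{s,a\sim d^{\pi^*}_{P^o}} [ \norm{ \phi_i(s,a) \indic_i }_{\Lambda_N^{-1}} ] \leq \sqrt{d}\,\sqrt{\sum_{i\in[d]} \E_{s,a\sim d^{\pi^*}_{P^o}}[ \norm{\phi_i(s,a)\indic_i}^2_{\Lambda_N^{-1}} ]} = \sqrt{d}\,\sqrt{\sum_{i\in[d]} \trace(\Sigma^i_{d^{\pi^*}_{P^o}} \Lambda_N^{-1})}.
\end{align*}

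The key step is then to invoke the definition of the relative condition. Since $\drectrelativecondition = \max_{x\in\R^d}\sum_{i\in[d]} d\,(x^\top \Sigma^i_{d^{\pi^*}_{P^o}}x)/(x^\top\Lambda x)$, we have for every $x\in\R^d$ that $d\sum_{i}x^\top\Sigma^i_{d^{\pi^*}_{P^o}}x \leq \drectrelativecondition\, x^\top\Lambda x$, i.e. the positive semidefinite ordering $\sum_{i}\Sigma^i_{d^{\pi^*}_{P^o}} \preceq (\drectrelativecondition/d)\,\Lambda$. Because $\Lambda_N^{-1}$ is positive definite (its diagonal carries the $\lambda/N$ ridge), the trace is monotone under this ordering, using the fact that $\trace(PM)\geq 0$ whenever $P,M$ are positive semidefinite. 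Hence $\sum_{i}\trace(\Sigma^i_{d^{\pi^*}_{P^o}}\Lambda_N^{-1}) \leq (\drectrelativecondition/d)\,\trace(\Lambda\Lambda_N^{-1})$, and substituting this back cancels the $\sqrt{d}$ and leaves the bound $\sqrt{\drectrelativecondition\,\trace(\Lambda\Lambda_N^{-1})}$.

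Finally I would identify $\trace(\Lambda\Lambda_N^{-1}) = \E_{s,a\sim\mu}[\phi(s,a)^\top\Lambda_N^{-1}\phi(s,a)]$ and apply \cref{lem:data-generation-inequality}, which bounds this quantity (with probability at least $1-\delta$, the only source of randomness in the statement) by $c^2\,\rank(\Lambda)(\rank(\Lambda)+\log(c/\delta))$. Pulling the square root through yields the claimed $c\sqrt{\drectrelativecondition\,\rank(\Lambda)(\rank(\Lambda)+\log(c/\delta))}$, after relabeling the universal constant.

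I expect the main point to be conceptual rather than technical: recognizing that the custom $d$-rectangular definition of $\drectrelativecondition$ (with the explicit $d$ factor and the sum over coordinates $i$) is engineered precisely so that the per-coordinate matrices $\Sigma^i_{d^{\pi^*}_{P^o}}$ collapse into a single positive semidefinite bound against $\Lambda$, which is exactly the population covariance appearing in \cref{lem:data-generation-inequality}. Once this alignment is seen, the remainder is a routine application of Cauchy--Schwarz, Jensen, trace monotonicity, and the cited concentration lemma; the only care needed is to track the $\sqrt{d}$ factor so that it cancels cleanly against the $1/d$ produced by the relative condition.
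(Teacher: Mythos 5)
Your proposal is correct and follows essentially the same route as the paper's proof: Cauchy--Schwarz over $i$ plus Jensen to reach $\sqrt{d}\sqrt{\sum_i \trace(\Sigma^i_{d^{\pi^*}_{P^o}}\Lambda_N^{-1})}$, the definition of $\drectrelativecondition$ as a positive semidefinite ordering against $\Lambda$ to cancel the $d$, and \cref{lem:data-generation-inequality} to bound $\trace(\Lambda\Lambda_N^{-1})=\E_{s,a\sim\mu}[\phi(s,a)^\top\Lambda_N^{-1}\phi(s,a)]$. Your explicit justification of the trace-monotonicity step (via $\trace(PM)\geq 0$ for positive semidefinite $P,M$) is in fact more careful than the paper's terse ``by $\drectrelativecondition$ definition.''
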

\begin{proof}
We follow the proof in \cref{lem:data-inverse-matrix-approximation} but use the relative condition number to get the required bound.
Firstly notice, \begin{align*}
    \sum_{i\in[d]}\E_{s,a\sim d^{\pi^*}_{P^o}} [ \| \phi_i(s,a) \indic_i \|_{\Lambda_N^{-1}} ] &=  \sum_{i\in[d]}\E_{s,a\sim d^{\pi^*}_{P^o}} [ \sqrt{ (\phi_i(s,a) \indic_i)^\top \Lambda_N^{-1} (\phi_i(s,a) \indic_i)} ] \\
    &=  \sum_{i\in[d]}\E_{s,a\sim d^{\pi^*}_{P^o}} [ \sqrt{ \trace((\phi_i(s,a) \indic_i)(\phi_i(s,a) \indic_i)^\top \Lambda_N^{-1}  )} ] \\
    &\stackrel{(a)}{\leq}  \sqrt{d} \sqrt{ \sum_{i\in[d]} \trace(\E_{s,a\sim d^{\pi^*}_{P^o}} [(\phi_i(s,a) \indic_i)(\phi_i(s,a) \indic_i)^\top] \Lambda_N^{-1}  )}  \\
    &=  \sqrt{d} \sqrt{ \sum_{i\in[d]} \trace( \Sigma^{i}_{d^{\pi^*}_{P^o}} \Lambda_N^{-1}  )}  \\
    &\stackrel{(b)}{\leq}  \sqrt{d} \sqrt{ \frac{\drectrelativecondition}{d} \trace( \Lambda \Lambda_N^{-1}  )  }  \\
    &=   \sqrt{  \drectrelativecondition \E_{s,a\sim \mu} [  \phi(s,a)^\top \Lambda_N^{-1} \phi(s,a)  ]} \\
    &\stackrel{(c)}{\leq}  c \sqrt{C^\dagger_{\pi^*,\phi} \rank(\Lambda) (\rank(\Lambda) + \log(c/\delta))},
\end{align*}
where $(a)$ follows by $\|x\|_1\leq\sqrt{d}\|x\|_2$ for $x\in\R^d$ and Jensen's inequality, 
$(b)$ follows by $\drectrelativecondition$ definition, and
$(c)$ holds by \cref{lem:data-generation-inequality} with probability at least $1-\delta$.
\end{proof}

\begin{corollary}\label{cor:linear-mdp-offlineRL-subopt} Let \cref{as:linear-MDP} hold. Let $\pi_K$ be the LM-DRQI algorithm policy after $K$ iterations. Then, with $C^\dagger_{\pi^*,\phi}<\infty$, the following holds with probability at least $1-\delta$
    \begin{align*}
    &\E_{s_0\sim d_0} [V^{\pi^*}(s_0) - \E_\cD [{V}^{\pi_K}(s_0)]] \leq  \frac{2\gamma^{K+1}}{(1-\gamma)^2}   + \frac{c_1 \log(Nd/((1-\gamma)\delta)) }{(1-\gamma)^2} \sqrt{\frac{d C^\dagger_{\pi^*,\phi} \rank(\Lambda)^2 \log(c/\delta)}{N}}  .
\end{align*}
\end{corollary}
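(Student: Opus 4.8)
The plan is to reuse the proof of \cref{thm:linear-mdp-offlineRL-subopt} essentially verbatim and to diverge only at the single point where the data-coverage assumption enters. First I would reproduce the model-independent part of that argument, which never invokes \cref{assum:linear-MDP-sufficient-coverage}: starting from \cref{eq:lm-mpqi-part-0}, where \cref{prop:high-prob-event-linear-mdp} guarantees $P^o\in\cPhat$ with probability at least $1-\delta$ and hence lets me drop the term $V^{\pi_K}_{\cPhat}(s_0)-V^{\pi_K}_{P^o}(s_0)\le 0$ and peel off the transient contraction error $2\gamma^{K+1}/(1-\gamma)^2$. I would then expand the robust-optimality gap exactly as in \cref{eq:lm-mpqi-part-1}, bound the term $(I)$ through the uniform concentration of \cref{lem:linear-MDP-uniform-concentrability} together with the definition of the $d$-rectangular set $\cPhat$ to reach \cref{eq:lm-mpqi-part-2}, and unroll the robust Bellman recursion along the trajectory of $\pi^*$ under $P^o$ (using $d^{\pi^*}_{P^o}=(1-\gamma)\sum_t\gamma^t d^{\pi^*}_{P^o,t}$) to arrive at \cref{eq:linear-mdp-main-result-0}. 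All of these steps are identical to \cref{thm:linear-mdp-offlineRL-subopt} and carry over unchanged.

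The only substitution is at the coverage step. Instead of applying \cref{lem:data-inverse-matrix-approximation} (which rests on \cref{assum:linear-MDP-sufficient-coverage}), I would invoke \cref{lem:relative-condition-data-inverse-matrix-approximation}: under $\drectrelativecondition<\infty$ and $\lambda=1$, with probability at least $1-\delta$,
\[
\sum_{i\in[d]}\E_{s,a\sim d^{\pi^*}_{P^o}}\!\big[\norm{\phi_i(s,a)\indic_i}_{\Lambda_N^{-1}}\big] \le c\sqrt{\drectrelativecondition\,\rank(\Lambda)\big(\rank(\Lambda)+\log(c/\delta)\big)}.
\]
Plugging this into the sum appearing in \cref{eq:linear-mdp-main-result-0} and using the crude simplification $\rank(\Lambda)(\rank(\Lambda)+\log(c/\delta))\le 2\rank(\Lambda)^2\log(c/\delta)$ (valid once $\log(c/\delta)\ge 1$ and $\rank(\Lambda)\ge 1$), with the resulting constant absorbed into $c_1$, produces exactly the claimed $\sqrt{d\,\drectrelativecondition\,\rank(\Lambda)^2\log(c/\delta)/N}$ term alongside the transient $2\gamma^{K+1}/(1-\gamma)^2$ contribution.

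The remaining care is purely bookkeeping over the high-probability events. Three $1-\delta$ events are in play: $P^o\in\cPhat$ from \cref{prop:high-prob-event-linear-mdp}, the uniform concentration of \cref{lem:linear-MDP-uniform-concentrability}, and the coverage inequality of \cref{lem:relative-condition-data-inverse-matrix-approximation} (which itself relies on \cref{lem:data-generation-inequality} and requires $\lambda=1$). A union bound shrinks the overall confidence by a constant factor, which I would fold into the logarithmic argument as in \cref{thm:linear-mdp-offlineRL-subopt}; since the statement already hides polylog factors in $c_1$ and inside the $\log$, this is harmless. I do not expect a genuine obstacle: the entire substantive content of replacing sufficient coverage by a finite relative condition is already isolated in \cref{lem:relative-condition-data-inverse-matrix-approximation}, whose proof mirrors \cref{lem:data-inverse-matrix-approximation} but swaps the per-coordinate eigenvalue bound for the relative-condition inequality $\sum_{i\in[d]}\trace(\Sigma^{i}_{d^{\pi^*}_{P^o}}\Lambda_N^{-1})\le(\drectrelativecondition/d)\trace(\Lambda\Lambda_N^{-1})$ followed by \cref{lem:data-generation-inequality}. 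The corollary is therefore a direct consequence of the theorem's machinery with this one lemma interchanged.
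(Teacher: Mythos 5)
Your proposal is correct and follows essentially the same route as the paper: reuse the proof of \cref{thm:linear-mdp-offlineRL-subopt} up to \cref{eq:linear-mdp-main-result-0} and then substitute \cref{lem:relative-condition-data-inverse-matrix-approximation} in place of \cref{lem:data-inverse-matrix-approximation}, absorbing constants (as the paper does with its $c_2$) to obtain the $\rank(\Lambda)^2\log(c/\delta)$ form. Your explicit union-bound bookkeeping and the elementary bound $\rank(\Lambda)(\rank(\Lambda)+\log(c/\delta))\leq 2\rank(\Lambda)^2\log(c/\delta)$ simply make precise what the paper leaves implicit.
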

\begin{proof}
The proof follows from \cref{thm:linear-mdp-offlineRL-subopt}.
In this corollary, we assume finite relative condition number $\drectrelativecondition < \infty$ for linear MDP $P^o$ instead of assuming \cref{assum:linear-MDP-sufficient-coverage}.
We also emphasize that in this result we only need to assume $ \Sigma^{i}_{d^{\pi^*}}$ for all $i\in[d]$, due to \cref{lem:relative-condition-data-inverse-matrix-approximation}, instead for all $ \Sigma^{(i,j)}_{d^{\pi^*}},$ $i,j\in[d]$ in \cref{as:linear-MDP}. Thus this result is more general than \cref{thm:linear-mdp-offlineRL-subopt}.
Now under \cref{lem:relative-condition-data-inverse-matrix-approximation}, with probability at least $1-\delta$, from \cref{eq:linear-mdp-main-result-0}  we have \begin{align*}
    &\E_\cD [\E_{s_0\sim d_0} [V^{\pi^*}_{P^o}(s_0) - {V}^{\pi_K}_{P^o}(s_0)]] 
    \leq \frac{2\gamma^{K+1}}{(1-\gamma)^2}   \\&\hspace{1cm}+\frac{c_1 \log(Nd/((1-\gamma)\delta)) }{(1-\gamma)^2} \sqrt{\frac{d}{N}} \sum_{i=1}^d \E_\cD[\E_{s\sim d^{\pi^*}_{P^o}} [ \norm{ \phi_i(s_0,\pi^*(s_0)) \indic_i  }_{\Lambda_N^{-1}}  ]] \\
    &\leq \frac{2\gamma^{K+1}}{(1-\gamma)^2}   +\frac{c_2 \log(Nd/((1-\gamma)\delta)) }{(1-\gamma)^2} \sqrt{\frac{d C^\dagger_{\pi^*,\phi} \rank(\Lambda)^2 \log(c/\delta)}{N}}  ,
\end{align*} where $c_2$ is a universal constant that only depends on $c_1$ and $c$ ($c$ is from \cref{lem:relative-condition-data-inverse-matrix-approximation}).
This completes the proof.
\end{proof}

In the following, we show that for a class of linear MDPs, the sufficient coverage assumption in \citet{jin2021pessimism} implies our sufficient coverage assumption (\cref{assum:linear-MDP-sufficient-coverage}) adapted from \citet{ma2022distributionally}.
\begin{lemma}\label{lem:suff-cov-equivalence-class}
    Consider a class of linear MDPs where $\Sigma^{i}_{d^{\pi^*}}=\Sigma^{j}_{d^{\pi^*}}$ for all $i,j\in[d]$.
    Define the random events $\cE_1=\{\omega:\Lambda_N(\omega) \geq I/N + \jinsuffcov \cdot \Sigma_{d^{\pi^*}}\}$ and $\cE_2=\{\omega:\Lambda_N(\omega) \geq I/N + \jinsuffcov \cdot d \Sigma^{i}_{d^{\pi^*}}\}$. Then we have $\cE_1 \subseteq \cE_2$.
\end{lemma}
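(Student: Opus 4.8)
The plan is to reduce the (random) event inclusion $\cE_1\subseteq\cE_2$ to a single deterministic comparison between the two lower bounds in the positive semidefinite (Löwner) order, and then to verify that comparison using the equal-blocks hypothesis together with the decomposition of $\Sigma_{d^{\pi^*}}$ already exhibited in the proof of \cref{lem:data-inverse-matrix-approximation}.

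First I would record the elementary fact that for symmetric matrices $L_1\succeq L_2$ implies $\{M:M\succeq L_1\}\subseteq\{M:M\succeq L_2\}$, since $M\succeq L_1\succeq L_2$ gives $M\succeq L_2$ by transitivity of the order (here $\succeq$ denotes the PSD order, written $\geq$ elsewhere in the paper). Applying this pointwise in $\omega$ with $M=\Lambda_N(\omega)$, $L_1=I/N+\jinsuffcov\,\Sigma_{d^{\pi^*}}$ and $L_2=I/N+\jinsuffcov\, d\,\Sigma^{i}_{d^{\pi^*}}$, and cancelling the common term $I/N$ together with the common positive scalar $\jinsuffcov>0$, it suffices to establish the deterministic inequality
\begin{align*}
\Sigma_{d^{\pi^*}}\;\succeq\; d\,\Sigma^{i}_{d^{\pi^*}}.
\end{align*}

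To prove this, I would first invoke the hypothesis $\Sigma^{i}_{d^{\pi^*}}=\Sigma^{j}_{d^{\pi^*}}$ for all $i,j\in[d]$, which immediately yields the identity $d\,\Sigma^{i}_{d^{\pi^*}}=\sum_{j\in[d]}\Sigma^{j}_{d^{\pi^*}}$. Next I would reuse the expansion derived in the proof of \cref{lem:data-inverse-matrix-approximation},
\begin{align*}
\Sigma_{d^{\pi^*}}
=\sum_{j\in[d]}\Sigma^{j}_{d^{\pi^*}}
+\sum_{\substack{j,k\in[d]\\ j\neq k}}\Sigma^{(j,k)}_{d^{\pi^*}},
\qquad
\Sigma^{(j,k)}_{d^{\pi^*}}=\E_{s,a\sim d^{\pi^*}}[(\phi_j(s,a)\indic_j)(\phi_k(s,a)\indic_k)^\top].
\end{align*}
By \cref{as:linear-MDP} every summand $\Sigma^{(j,k)}_{d^{\pi^*}}$ is positive semidefinite, so the entire off-diagonal contribution is a finite sum of PSD matrices and hence PSD. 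Therefore $\Sigma_{d^{\pi^*}}\succeq\sum_{j\in[d]}\Sigma^{j}_{d^{\pi^*}}=d\,\Sigma^{i}_{d^{\pi^*}}$, which is exactly the inequality needed, and the inclusion $\cE_1\subseteq\cE_2$ follows.

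The only genuinely load-bearing step is the nonnegativity of the cross-covariance block $\sum_{j\neq k}\Sigma^{(j,k)}_{d^{\pi^*}}$: this is precisely what the positive-semidefiniteness clause of \cref{as:linear-MDP} supplies, ensuring that the off-diagonal feature correlations under $d^{\pi^*}$ can only help the comparison rather than hurt it. Everything else is the algebraic identity $d\,\Sigma^{i}_{d^{\pi^*}}=\sum_{j}\Sigma^{j}_{d^{\pi^*}}$ granted by the equal-blocks hypothesis, which is what converts a single rescaled diagonal block into the full diagonal part of $\Sigma_{d^{\pi^*}}$ and thereby makes the lower bound defining $\cE_1$ dominate the one defining $\cE_2$.
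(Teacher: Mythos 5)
Your proposal is correct and follows essentially the same route as the paper's proof: the same decomposition $\Sigma_{d^{\pi^*}}=\sum_{j}\Sigma^{j}_{d^{\pi^*}}+\sum_{j\neq k}\Sigma^{(j,k)}_{d^{\pi^*}}$, the same appeal to \cref{as:linear-MDP} to discard the off-diagonal blocks, and the same use of the equal-blocks hypothesis to identify $\sum_j\Sigma^{j}_{d^{\pi^*}}$ with $d\,\Sigma^{i}_{d^{\pi^*}}$. The only cosmetic difference is that you phrase the comparison in the Löwner order and invoke transitivity explicitly, whereas the paper unwinds it through quadratic forms $x^\top(\cdot)x$; these are the same argument.
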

\begin{proof}
    We know \begin{align*}
    \Sigma_{d^{\pi^*}} =  \E_{s,a\sim d^{\pi^*}}\phi(s,a)\phi(s,a)^\top &= \E_{s,a\sim d^{\pi^*}}[\sum_{i,j\in[d]}(\phi_i(s,a) \indic_i)(\phi_j(s,a) \indic_j)^\top] = \sum_{i\in[d]}\Sigma^{i}_{d^{\pi^*}} + \sum_{i,j\in[d]:i\neq j}\Sigma^{(i,j)}_{d^{\pi^*}} .
\end{align*} 
    Consider some non-zero $x\in \R^{d\times 1}$. Since $\Sigma^{(i,j)}_{d^{\pi^*}}$ are all positive semidefinite, we have \[ x^\top \Sigma_{d^{\pi^*}} x \geq \sum_{i\in[d]} x^\top\Sigma^{i}_{d^{\pi^*}}x = d (x^\top\Sigma^{i}_{d^{\pi^*}}x). \]
    Noting that $\cE_1=\{\omega:x^\top\Lambda_N(\omega) x \geq \|x\|_2^2/N + \jinsuffcov \cdot x^\top\Sigma_{d^{\pi^*}}x\}$ and $\cE_2=\{\omega:x^\top\Lambda_N(\omega)x \geq \|x\|_2^2/N + \jinsuffcov \cdot d x^\top\Sigma^{i}_{d^{\pi^*}}x\}$ finishes the proof.
\end{proof}

For a different class of linear MDPs we have the following.
\begin{lemma}\label{lem:relative-condition-equivalence-class}
    Consider a class of linear MDPs where $\Sigma^{i}_{d^{\pi^*}}=\Sigma^{(i,j)}_{d^{\pi^*}}$ for all $i,j\in[d]$.
    Let $ \relativecondition  =  \max_{x\in\R^d}  {  (x^\top \Sigma_{ d^{\pi^*}} x)}/{(x^\top \Lambda x)}$ and $ \drectrelativecondition  =  \max_{x\in\R^d} \sum_{i\in[d]} { d (x^\top \Sigma^{i}_{ d^{\pi^*}} x)}/{(x^\top \Lambda x)}$. Then we have $\drectrelativecondition = \relativecondition$.
\end{lemma}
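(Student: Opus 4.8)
The plan is to reduce the claimed equality of condition numbers to a single matrix identity: under the hypothesis the numerator matrices in the two Rayleigh quotients coincide, namely $\Sigma_{d^{\pi^*}} = d\sum_{i\in[d]}\Sigma^{i}_{d^{\pi^*}}$. Once this is established, the two quotients agree pointwise in $x$, so their maxima --- which are exactly $\relativecondition$ and $\drectrelativecondition$ --- must be equal.

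To obtain the identity, I would first recall the decomposition already used in the proof of \cref{lem:suff-cov-equivalence-class}. Expanding $\phi(s,a)\phi(s,a)^\top = \sum_{i,j\in[d]}(\phi_i(s,a)\indic_i)(\phi_j(s,a)\indic_j)^\top$ and taking expectations under $d^{\pi^*}$ gives
\begin{equation*}
\Sigma_{d^{\pi^*}} = \sum_{i\in[d]}\Sigma^{i}_{d^{\pi^*}} + \sum_{i,j\in[d]:i\neq j}\Sigma^{(i,j)}_{d^{\pi^*}}.
\end{equation*}
I would then substitute the hypothesis $\Sigma^{(i,j)}_{d^{\pi^*}} = \Sigma^{i}_{d^{\pi^*}}$ into the off-diagonal terms. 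For each fixed $i$ there are $d-1$ such terms, all equal to $\Sigma^{i}_{d^{\pi^*}}$, so the off-diagonal sum contributes $(d-1)\sum_i \Sigma^{i}_{d^{\pi^*}}$; adding the $d$ diagonal terms collapses the expression to $\Sigma_{d^{\pi^*}} = d\sum_{i\in[d]}\Sigma^{i}_{d^{\pi^*}}$. Equivalently, writing $\Sigma_{d^{\pi^*}} = \sum_{i,j}\Sigma^{(i,j)}_{d^{\pi^*}}$ over all ordered pairs and replacing every summand by $\Sigma^{i}_{d^{\pi^*}}$ immediately yields $\sum_i\sum_j \Sigma^{i}_{d^{\pi^*}} = d\sum_i \Sigma^{i}_{d^{\pi^*}}$.

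Finally I would substitute this identity into the definitions: for every nonzero $x\in\R^d$,
\begin{equation*}
\frac{x^\top \Sigma_{d^{\pi^*}} x}{x^\top \Lambda x} = \frac{d\sum_{i\in[d]} x^\top \Sigma^{i}_{d^{\pi^*}} x}{x^\top \Lambda x},
\end{equation*}
and maximizing both sides over $x$ gives $\relativecondition = \drectrelativecondition$. I do not expect a genuine obstacle: the whole argument is the algebraic collapse of the decomposition. The only point deserving a little care is the bookkeeping over ordered pairs --- in particular noting that the diagonal case $\Sigma^{(i,i)}_{d^{\pi^*}} = \Sigma^{i}_{d^{\pi^*}}$ holds by definition, so the hypothesis is consistent there and each index $i$ really does contribute exactly $d$ identical copies of $\Sigma^{i}_{d^{\pi^*}}$.
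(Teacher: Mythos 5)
Your proposal is correct and follows essentially the same route as the paper's proof: both use the decomposition $\Sigma_{d^{\pi^*}}=\sum_{i,j\in[d]}\Sigma^{(i,j)}_{d^{\pi^*}}$ from the preceding lemma, substitute the class hypothesis to collapse it to $x^\top\Sigma_{d^{\pi^*}}x=d\sum_{i\in[d]}x^\top\Sigma^{i}_{d^{\pi^*}}x$, and conclude that the two Rayleigh quotients (hence their maxima) coincide. Your extra bookkeeping over ordered pairs and the remark that $\Sigma^{(i,i)}_{d^{\pi^*}}=\Sigma^{i}_{d^{\pi^*}}$ by definition only make explicit what the paper leaves implicit.
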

\begin{proof}
    From \cref{lem:suff-cov-equivalence-class}, we already know 
    $\Sigma_{d^{\pi^*}} =  \sum_{i,j\in[d]}\Sigma^{(i,j)}_{d^{\pi^*}}.$
    Consider any non-zero $x\in \R^{d\times 1}$. From the class of linear MDPs, we further have \[ x^\top \Sigma_{d^{\pi^*}} x = \sum_{i\in[d]} d(x^\top\Sigma^{i}_{d^{\pi^*}}x). \]
    Now the statement directly follows.
\end{proof}

From \cref{cor:linear-mdp-offlineRL-subopt}, we get the offline suboptimality guarantee of the order $\frac{\sqrt{d C^\dagger_{\pi^*,\phi} \rank(\Lambda)^2}}{\sqrt{(1-\gamma)^4 N}}$ for LM-DRQI algorithm. Furthermore, under \cref{lem:relative-condition-equivalence-class}, it is comparable with \citet{uehara2021pessimistic} in \cref{tbl:linear-mdp-offline-rl-results-compare}.

\end{document}